\documentclass{article} 

\usepackage{preprint}
\usepackage{natbib}

\usepackage[hidelinks]{hyperref}
\usepackage{url}

\usepackage{amsmath, amsfonts, amsthm, amssymb}
\usepackage{notation}
\usepackage{wrapfig}
\usepackage{enumitem}
\usepackage{cleveref}

\usepackage{todonotes}

\usepackage{xcolor}
\usepackage{thmtools}
\usepackage{mdframed}
\mdfsetup{skipabove=5pt,skipbelow=2pt}

\usepackage[T1]{fontenc}
\usepackage{authblk}

\usepackage{subcaption}
\usepackage{wrapfig}

\usepackage{etoc}

\usepackage[ruled,vlined]{algorithm2e} 
\SetKwInput{KwInput}{input}
\SetKwInput{KwOutput}{output:}
\SetKw{KwInitialize}{init:}
\SetKw{KwRet}{return}
\DontPrintSemicolon
\SetKw{KwTo}{to}
\SetCommentSty{textit}
\SetKwComment{Comment}{\(\triangleright\)\ }{}

\declaretheoremstyle[mdframed={outerlinewidth=1pt,innertopmargin=5pt, innerbottommargin=2pt, backgroundcolor=gray!5, linecolor=gray!30}]{thmstyle}
\declaretheoremstyle[mdframed={outerlinewidth=1pt,innertopmargin=5pt, innerbottommargin=2pt, backgroundcolor=blue!5, linecolor=gray!30}]{assumptionstyle}

\declaretheorem[style=thmstyle]{theorem}
\declaretheorem[style=thmstyle]{proposition}
\declaretheorem[style=thmstyle]{corollary}
\declaretheorem[style=thmstyle]{lemma}
\declaretheorem[style=thmstyle]{definition}

\declaretheorem[style=remark]{remark}

\title{Distributionally robust linear regression \\ through the lens of adversarial training}

\author[1]{Elis Stefansson}
\author[1]{David Vävinggren}
\author[1,2]{Antônio H. Ribeiro}

\affil[1]{Uppsala University, Sweden}
\affil[2]{Science for Life Laboratory, Sweden}

\begin{document}

\maketitle
\etocdepthtag.toc{main} 

\begin{abstract}
Distributionally robust optimization (DRO) studies parameter estimation under uncertainty in the underlying probability distribution and has emerged as a principled framework for analyzing robustness and generalization. In particular, Wasserstein DRO, with distributional uncertainty induced by the Wasserstein distance, generalizes several popular regularizers. 
This paper studies Wasserstein DRO linear regression, unifying square-root Lasso and adversarial linear regression as important special cases. We prove that many properties of these two special cases carry over to this general method. In particular, we show (i) deterministic and non-asymptotic in-sample error bounds $O(n^{-1/2})$ in general and $O(n^{-1})$ under design matrix and sparsity conditions; (ii) insensitivity to the noise level, also known as the pivotal property; and (iii) solution equivalences for small and large ambiguity sets. The key proof step is to recast the method into a quadratic form, mimicking adversarial linear regression. We also show that the method can be solved efficiently, and we validate our findings through numerical~simulations.
\end{abstract}

\section{Introduction}

Distributionally robust optimization (DRO) studies parameter estimation under uncertainty in the probability distribution. Consider the traditional linear regression setting, where we have $n$ input-output samples $(\x_i,y_i)_{i=1}^n$ and we want to find a linear mapping between inputs $\x$ and outputs $y$, with squared error loss. The DRO formulation solves this problem by minimizing the expected loss with respect to a decision parameter $\param$ under the worst-case~distribution:
\begin{equation}
\label{dro}
    \inf_{\param} \sup_{\probQ \in \mathbb{B}_\delta(\empProb)} \E_{(\x,y) \sim \probQ}[(\param^\top \x - y)^2].
\end{equation}
Here, $\mathbb{B}_\delta(\empProb)$ is a ball of radius $\delta\geq0$ around the empirical distribution $\empProb$ capturing the set of distributions, also known as the ambiguity set. DRO serves as a principled framework for capturing \emph{distributional} uncertainty, being a natural tool for studying both generalization and out-of-distribution performance~\citep{shafiee_nash_2025,kuhn_distributionally_2025}.

In this paper, we  build on the broader relationship between robustness and regularization. Following the introduction of the regularization method Lasso~\citep{tibshirani_regression_1996}, several variants have been suggested to improve its statistical and practical properties. In particular, square-root Lasso was proposed as an alternative to Lasso that allows the regularization strength to be set without knowing the variance of the noise \citep{belloni_squareroot_2011}, a property sometimes called \emph{pivotal} estimation. The procedure has also been shown to be an instance of robust optimization \citep{xu_robust_2008}. More recently, with the growing interest in adversarial robustness, researchers found that adversarial linear regression has the same pivotal property~\citep{ribeiro_regularization_2023,xie2024high}.

One interesting observation is that both square-root Lasso and adversarial linear regression can be unified as instances of DRO. Indeed, letting the ambiguity ball $\mathbb{B}_\delta(\empProb)$ be induced by the $p$-Wasserstein distance (an optimal transport distance), there are close connections to several regularizers~\citep{kuhn_wasserstein_2019}. In particular, linear regression using the $2$-Wasserstein distance is equivalent to square-root Lasso  \citep{blanchet2019robust}, while $\infty$-Wasserstein DRO corresponds to adversarially trained linear regression (see, e.g., \citealt[Theorem 2]{zhang_short_2024}). This raise the natural question whether other methods in the $p$-Wasserstein DRO family share the same properties, such as being pivotal estimators.


\textbf{Contributions.}
In this paper, we study linear regression under square loss, where the ambiguity set is given by the $p$-Wasserstein distance with $2\leq p \leq \infty$. For short, we call this method \emph{Wasserstein DRO linear regression}. We show that many properties of the special cases square-root Lasso ($p=2$) and adversarially trained linear regression ($p=\infty$) carry over to this more general method, including the pivotal property, slow and fast error rates, and equivalences in the small and large $\delta$ regimes. More precisely, our contributions are:


\begin{itemize}[leftmargin=1.5em]
    \item We derive an equivalent form of the robust risk (the inner supremum of \eqref{dro}) for Wasserstein DRO linear regression (\Cref{equivalent_forms}). 
    \item We prove deterministic and non-asymptotic in-sample error bounds $O(n^{-1/2})$ in general and $O(n^{-1})$ under design matrix and sparsity conditions, generalizing results that were previously only known for square-root Lasso and adversarially trained linear regression  (\Cref{error-bounds}).
    \item We characterize the solution for small and large radii $\delta$ in \eqref{dro}, respectively, where small values correspond to minimum norm interpolation in the overparametrized setting, and large values to the zero solution being optimal~(\Cref{additional_properties}).
    \item We provide tailored numerical solvers for Wasserstein DRO linear regression (\Cref{efficient_solvers}) and conduct numerical experiments validating our findings (\Cref{numerical_experiments}).
\end{itemize}

The paper is structured as follows. Section \ref{related_work} covers related work. Section \ref{background} presents preliminaries. Sections 4, 5 and 6 provide analytic properties we described in the contributions above, followed by numerical solvers (Section \ref{efficient_solvers}), simulations  (Section \ref{numerical_experiments}) and conclusion (Section \ref{conclusion}). Our implementation and experiments are available at \href{https://github.com/elisst/dro}{\texttt{https://github.com/elisst/dro}}.

\section{Related work}\label{related_work}
DRO has a long history dating back to the pioneering work of \citet{scarf1957min}, and has recently received substantial interest in the machine learning community as a principled way to account for distributional uncertainty in the data; see, e.g., the recent survey \citep{kuhn_distributionally_2025}.

\textbf{Wasserstein DRO.} DRO with $p$-Wasserstein distance has been studied in both the classification and the regression settings, see the survey \citep{kuhn_wasserstein_2019} and the references therein. In particular, Wasserstein DRO linear regression (under various loss functions) has mainly been studied for the special cases $p=1$ and $p=2$, see \citep{blanchet2019robust,gao_wasserstein_2020,blanchet2021statistical,shafieezadeh2019regularization,kuhn_wasserstein_2019,aolaritei2026wasserstein}. Notably, \citet{aolaritei2026wasserstein} seek asymptotic statistical guarantees for a class of linear prediction problems in  the high-dimensional regime with  $p=1$ and $p=2$. We instead provide non-asymptotic statistical guarantees for Wasserstein DRO linear regression (with square loss) uniformly over the range $2 \leq p \leq \infty$.\footnote{Here, $p<2$ is excluded since the robust risk in \eqref{dro} then becomes infinite for square loss. Thus, the natural range to consider is $2 \leq p \leq \infty$.}

\citet{wu2026generalization} consider generalization bounds for a general learning objective, which could be applied to linear regression. However, their high-probability bound scales as $O(n^{-1/(2p)})$ with an improved ($p$-independent) slow rate of $O(n^{-1/2})$ that is not applicable to square loss \citep[Assumption 1]{wu2026generalization}. There are also other generic high-probability generalization bounds that, e.g., assume Lipschitz loss \citep{shafieezadeh2019regularization,an2021generalization,gao2023finite} or a compact domain \citep{le2025universal}, neither of which holds for square loss. Additionally, these generic bounds (derived from concentration-based methods) typically have no explicit dependence on the noise. We instead derive \emph{deterministic} error bounds with \emph{explicit} dependence on the noise, inspired by \citet{belloni_squareroot_2011,ribeiro_regularization_2023,xie2024high} and \citet[Chapter 7]{wainwright_high-dimensional_2019}, that with high probability enjoy both slow rates $O(n^{-1/2})$ and fast rates $O(n^{-1})$ \emph{uniformly} in $p$.


\textbf{Connection between DRO and regularization.} Several well-known regularization techniques in machine learning have found connections with DRO, such as square-root Lasso \citep{xu_robust_2008,blanchet2019robust} with higher-order generalizations \citep{olea2022out}, regularized logistic regression and support vector machines \citep{blanchet2019robust}, ridge regression \citep{shafieezadeh2019regularization,li2022tikhonov}, adversarial training \citep{gao_wasserstein_2020,pydi2021many}, as well as close correspondences to total variation, Lipschitz-variation and gradient variation \citep{gao_wasserstein_2020} as well as higher-order variation \citep{shafiee2026nash}. In particular, square-root Lasso is equivalent to Wasserstein DRO linear regression with square loss and $p=2$ \citep{blanchet2019robust}, while the $p=\infty$ case coincides with adversarial linear regression, since adversarial training and $\infty$-Wasserstein DRO are equivalent (under mild conditions); see \citet{zhang_short_2024} for a general result. Our work builds upon this research direction by unifying these two instances as special cases of the more general Wasserstein DRO linear regression with square loss ($2\leq p\leq\infty$), where \emph{we extend fundamental properties such as error rates and the pivotal property to this unifying~method.}


\textbf{Tractable reformulations.} There has been substantial work to make DRO tractable by reformulating the DRO objective  as a finite-dimensional optimization problem, commonly via a dual representation of the inner supremum \citep{shafieezadeh2019regularization,shafiee2026nash}, see \citet{zhang_short_2024} for a general result. In particular, using this dual representation, \citet[Proposition 2 and 3]{shafiee2026nash} and \citet[Theorem 7.20]{kuhn_distributionally_2025} both show that the DRO objective is equivalent to a finite convex program, under assumption that the loss is a (finite maximum of) concave function(s). However, such assumptions do not hold in our case due to the square loss. 

The work \citet{shafiee2026nash} also considers linear prediction models and shows that the dual representation then admits a lower-dimensional form, where an inner supremum (cf. \Cref{optimal_transport_dro_dual}) is recast into an expectation over a (possibly nonconvex) univariate maximization (see \citealt[Theorem 4]{shafiee2026nash}). In this paper, we note that for Wasserstein DRO linear regression (a special case of linear prediction models), this one-dimensional representation becomes unimodal, which we use in the numerical solvers. Furthermore, \citet{kuhn_wasserstein_2019} present convex forms for $p$-Wasserstein linear regression with square loss and $p=2$ (and $p=1$ for other loss functions), whereas we consider the whole range $2 \leq p \leq~\infty$. Finally, we note that our dual form (\Cref{th_supremum_robust_risk}) naturally generalize adversarial linear regression \citep[Proposition 1]{ribeiro_regularization_2023} by coupling the radius constraints, and also identifies a case in which robust optimization forms coincides with DRO (see, e.g., \citealt[Corollary 2(iii)]{gao2023distributionally} and \citealt[Proposition 2] {an2021generalization} for approximate~results).



\section{Background}\label{background}
In this section, we describe the setup and relevant results in DRO and adversarial linear regression.

\textbf{Wasserstein distance.} Let $\mathcal{P}(\mathcal{Z})$ denote the set of probability distributions with support $\mathcal{Z}$. One common choice of distance between such probability distributions is the \emph{$p$-Wasserstein distance} $W_p$, where $1 \leq p \leq \infty$ is the order of this distance. More precisely, the $p$-Wasserstein distance $W_p$ with $1 \leq p<\infty$ is given by
\begin{align}\label{eq_Wasserstein_p_not_inf}
W_p(\probP, \empProb) := \left ( \inf_{\pi \in \Pi(\probP, \empProb)} \mathbb{E}_{(\z, \z') \sim \pi}[\| \z-\z' \|^p] \right )^{1/p},
\end{align}
where $\Pi(\probP, \empProb)$ is the set of all distributions in $\mathcal{P}(\mathcal{Z}) \times \mathcal{P}(\mathcal{Z})$ with marginals $\probP$ and $\empProb$, respectively, and $\| \cdot \|$ is a norm. Intuitively, $\| \z-\z' \|^p$ in \eqref{eq_Wasserstein_p_not_inf} captures the cost of moving probability mass from $\z$ to $\z'$, where the infimum picks the strategy $\pi$ with the least total cost for moving $\probP$ to $\empProb$. The normalization with the $p$-th root is to ensure that $W_p$ is a metric~\citep{kuhn_wasserstein_2019}. Moreover, for $p=\infty$, we have $W_{\infty}(\probP, \empProb) := \inf_{\pi \in \Pi(\probP, \empProb)} \text{ess sup}_{(\z, \z') \sim \pi}[\|\z-\z'\|]$.

\textbf{Strong duality.}
Consider the Wasserstein DRO problem
\[
\inf_{\param} \sup_{\probQ \in \mathbb{B}_\delta(\empProb)} \E_{\z \sim \probQ}[h_{\param}(\z)]\]
with objective $h_{\param}: \mathcal{Z} \rightarrow \R$ and ambiguity set $\mathbb{B}_\delta(\empProb) = \{\mathbb{Q}  \in \mathcal{P}(\mathcal{Z}) : W_p(\mathbb{Q}, \empProb) \leq \delta \}$ induced by the $p$-Wasserstein distance centered around the empirical distribution $\empProb$, where the empirical distribution is $\empProb := \frac{1}{n} \sum_{i=1}^n \delta_{\z_i}$ for $\delta_{\z_i}$ the Dirac point mass at data sample $\z_i \in \mathcal{Z}$. 

Here, we will denote by $V_\delta$ the inner supremum objective function. In particular, with the given definition of $\mathbb{B}_\delta(\empProb)$, we can write: 
\begin{align}\label{eq_normalised_robust_risk}
{V}_\delta(\param) := \sup_{ \probQ} \{\E_{\z \sim \probQ}[h_{\param}(\z)]: W_p(\probQ, \empProb) \le \delta \},
\end{align}
which we will call the \emph{robust risk}. Using strong duality, we have the following result:
\begin{proposition}[Strong duality]
\label{optimal_transport_dro_dual}
Let $h_{\param}: \mathcal{Z} \rightarrow \R$ be a measurable function with $\E_{\empProb}[h_{\param}] > -\infty$. Then the robust risk in \eqref{eq_normalised_robust_risk} equals:
\begin{equation}\label{eq:robust_risk_intermediate_form}
{V}_{\delta}(\param)\ = \min_{\lambda\ge 0} \left(\lambda \delta^p + \E_{\z \sim \empProb}\Big[\sup_{\z'\in \domz} \big(h_{\param}(\z') - \lambda \|\z'-\z\|^p\big)\Big] \right).
\end{equation}
\end{proposition}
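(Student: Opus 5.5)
This is the standard Wasserstein DRO strong duality, so the plan is to reduce it to the known general result and verify its hypotheses; a direct self-contained argument is sketched as a fallback. The natural citation is the strong duality theorem of Blanchet--Murthy or Gao--Kleywegt (and Zhang et al.\ for the general form). These cover a transport cost $c(\z,\z')=\|\z-\z'\|^p$, which is lower semicontinuous and nonnegative, vanishes on the diagonal, and is paired with an upper semianalytic (in particular, measurable) integrand $h_{\param}$ whose nominal expectation is $>-\infty$. The constraint $W_p(\probQ,\empProb)\le\delta$ is the same as $\inf_\pi \E_\pi[c]\le\delta^p$, so the theorem applies with radius $\delta^p$.

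\textbf{Weak duality.} Fix $\lambda\ge0$ and any $\probQ$ with $W_p(\probQ,\empProb)\le\delta$. Take a coupling $\pi$ (near optimal, or optimal by lower semicontinuity) with $\E_\pi\|\z'-\z\|^p\le\delta^p+\epsilon$. Then
\[
\E_{\probQ}[h_{\param}]=\E_\pi\big[h_{\param}(\z')-\lambda\|\z'-\z\|^p\big]+\lambda\E_\pi\|\z'-\z\|^p \le \E_{\empProb}\Big[\sup_{\z'}\big(h_{\param}(\z')-\lambda\|\z'-\z\|^p\big)\Big]+\lambda(\delta^p+\epsilon).
\]
Letting $\epsilon\to0$ gives $V_\delta\le$ the right-hand side. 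Since $\empProb$ is discrete, the conditional law of $\z'$ given $\z=\z_i$ is just one of $n$ measures, so no measurability issue arises in this step.

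\textbf{Reverse inequality.} Because $\empProb$ has finite support, the primal can be rewritten as a supremum over $n$ conditional distributions $\probQ_i$ with $\frac1n\sum_i\E_{\probQ_i}\|\z'-\z_i\|^p\le\delta^p$. This problem is linear in $(\probQ_i)$ with a single linear constraint. Define $\phi(t)$ as the supremum with the budget $\delta^p$ replaced by $t$. Then $\phi$ is concave and nondecreasing on $[0,\infty)$: concavity follows by mixing feasible families. For $\delta>0$, Slater holds via $\probQ_i=\delta_{\z_i}$, so Lagrangian duality for this one-constraint problem (equivalently, the supporting hyperplane of the concave $\phi$ at $\delta^p$) gives
\[
V_\delta=\inf_{\lambda\ge0}\Big\{\lambda\delta^p+\sup_{(\probQ_i)}\tfrac1n\sum_i\E_{\probQ_i}\big[h_{\param}-\lambda\|\cdot-\z_i\|^p\big]\Big\}.
\]
The inner supremum over measures $\probQ_i$ equals the pointwise supremum over Diracs, which yields the formula. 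If $V_\delta=+\infty$, weak duality forces the right-hand side to be $+\infty$ for every $\lambda$, and the identity holds trivially.

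\textbf{Main obstacle.} The delicate step is the boundary behaviour: showing that the infimum is attained (hence the ``$\min$'') and handling $\delta=0$. For attainment, observe that the dual objective $g(\lambda)$ is convex and lower semicontinuous, being a supremum of affine functions of $\lambda$, and that $g(\lambda)\ge\lambda\delta^p+\E_{\empProb}[h_{\param}]\to\infty$ as $\lambda\to\infty$ when $\delta>0$. So if $g$ is finite somewhere, a minimizer exists on a compact interval. If $g\equiv\infty$, any $\lambda$ attains the infimum. For $\delta=0$, the left side is $\E_{\empProb}[h_{\param}]$. The dual objective decreases to $\E_{\empProb}[h_{\param}]$ only in the limit $\lambda\to\infty$ when $h_{\param}$ is upper semicontinuous, so attainment may fail. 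I would therefore either restrict to $\delta>0$, or interpret the min as an inf in that case, noting that the paper applies the result with square loss (continuous) and $\delta>0$.
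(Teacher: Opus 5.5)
Your proposal follows the paper, which proves this proposition simply by invoking the general Wasserstein DRO duality theorem of Zhang et al.\ (their Theorem 1). Your finite-support fallback (concavity of the primal value in the budget, a supporting line at $\delta^p>0$, and reduction of the inner supremum over conditional measures to Diracs) is a valid self-contained substitute for that citation when $\delta>0$. Your caveat about $\delta=0$ is also correct, and the paper's statement glosses over it. There $V_0(\param)=\E_{\empProb}[h_{\param}]$ is approached by the dual only as $\lambda\to\infty$: for the square loss with $p>2$ and $\param\neq 0$, every finite $\lambda$ gives a strictly larger value, and for a merely measurable $h_{\param}$ the equality can fail altogether. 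So the ``$\min$'' formula should be read for $\delta>0$.
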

\Cref{optimal_transport_dro_dual} is a straightforward application of the general result \citet[Theorem 1]{zhang_short_2024} and the first step to our equivalent form for linear regression (\Cref{th_supremum_robust_risk}) in Section \ref{equivalent_forms}.

\textbf{Wasserstein DRO linear regression.}
We focus on $p$-Wasserstein DRO linear regression. That is, $\z_i = (\x_i,y_i) \in \mathcal{Z}=\R^{d+1}$ and $h_{\param}(\z) = (\param^\top\x - y)^2$.  We restrict our attention to the norm   $\|\z_i'-\z_i \| = \|\x_i'-\x_i \| + \infty |y_i'-y_i|$. Here, $\infty |y_i'-y_i|$ is shorthand for $+\infty$ if $y_i' \neq y_i$ and zero otherwise. These restrictions result in a close correspondence to square-root Lasso and adversarial linear~regression.

\begin{remark}\label{remark_p_less_2_blows_up}
We exclude $1 \leq p <2$ since in this case the robust risk $V_{\delta}(\param)$ becomes infinite. The reason is that the squared loss increases faster than the penalty for moving probability mass, resulting in an unbounded robust risk. See the appendix for a detailed derivation.
\end{remark}

\textbf{Adversarial linear regression.} Our work also considers connections with adversarial linear regression. Adversarial linear regression considers linear regression where each input $\x_i \in \R^d$ is adversarially perturbed by $\Delta \x_i$ within a \emph{fixed} budget $\|\Delta \x_i \| \leq \delta$ (uniform over all inputs) to form the worst-case risk:\footnote{Note that with no budget $\delta=0$, this becomes traditional linear regression with square loss.}
\begin{equation}\label{eq_robust_risk_adv_lin_reg}
    V^{\textrm{adv}}_{\delta}(\param) := \frac{1}{n} \sum_{i=1}^n \sup_{\|\Delta \x_i\| \leq \delta}((\x_i+\Delta \x_i)^{\top}\param-y_i)^2 =
    \frac{1}{n} \sum_{i=1}^n (|\x_i^{\top}\param-y_i|+\delta \|\param\|_*)^2,
\end{equation}
where the last equality follows by rewriting the supremum inside the sum using a simple dual norm argument; see, e.g., Proposition 1 in \citet{ribeiro_regularization_2023}. In Section \ref{equivalent_forms}, we generalize this form to Wasserstein DRO linear regression (\Cref{th_supremum_robust_risk}), a key step in establishing the analytical properties.

\section{Equivalent form}\label{equivalent_forms}
In this section, we present an equivalent form of the robust risk ${V}_\delta(\param)$, called the \emph{robust quadratic form}. All analytic properties of Wasserstein DRO linear regression shown in subsequent sections are based on this form.

\begin{theorem}[Robust quadratic form]\label{th_supremum_robust_risk}
The robust risk of Wasserstein DRO linear regression ($2 \leq p \leq \infty$) equals
\begin{equation}\label{supremum_form_linear_regression}
V_{\delta}(\param) = \sup_{t_i \geq 0, \; \|\vv{t}\|_p \leq n^{1/p} \delta} \frac{1}{n} \sum_{i=1}^n (|\x_i^{\top}\param-y_i|+t_i \|\param\|_*)^2.
\end{equation}
\end{theorem}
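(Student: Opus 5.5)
Since transport in $y$ is forbidden, I will argue directly on the primal side and show that the supremum over distributions reduces to one over per-sample perturbation budgets. The duality of \Cref{optimal_transport_dro_dual} serves as a cross-check. Write $r_i := |\x_i^\top\param - y_i|$ and $\|\param\|_*$ for the dual norm.

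\textbf{Lower bound ($\ge$).} Take any feasible $\vv t$ with $t_i\ge0$ and $\|\vv t\|_p\le n^{1/p}\delta$. For each $i$, pick $\Delta\x_i$ with $\|\Delta\x_i\|= t_i$ and $\Delta\x_i^\top\param = \operatorname{sign}(\x_i^\top\param-y_i)\, t_i\|\param\|_*$; such a vector exists by the definition of the dual norm, since the supremum is attained in finite dimensions. Set $\probQ=\frac1n\sum_i\delta_{(\x_i+\Delta\x_i,y_i)}$. The coupling that moves each atom to its perturbation has cost $\big(\frac1n\sum_i t_i^p\big)^{1/p}\le\delta$; for $p=\infty$ the cost is $\max_i t_i\le\delta$. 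So $\probQ$ lies in the ball, and $\E_\probQ[h_\param]=\frac1n\sum_i(r_i+t_i\|\param\|_*)^2$.

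\textbf{Upper bound ($\le$).} Let $\probQ$ be in the ball and let $\pi$ be a coupling with cost at most $\delta$ (take the infimum up to $\epsilon$, then let $\epsilon\to0$). Because the cost is infinite when $y'\neq y$, $\pi$ decomposes as $\probQ=\frac1n\sum_i \probQ_i$, where $\probQ_i$ is the conditional law of $(\x',y_i)$ given that the source is sample $i$. For $\x'=\x_i+\Delta$, the bound $|(\x_i+\Delta)^\top\param-y_i|\le r_i+\|\Delta\|\,\|\param\|_*$ gives
\[
\E_\probQ[h_\param]\le \frac1n\sum_i \E_{\probQ_i}\big[(r_i+\|\Delta\|\,\|\param\|_*)^2\big].
\]
Define $s_i:=\big(\E_{\probQ_i}\|\Delta\|^p\big)^{1/p}$. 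The transport constraint then reads $\frac1n\sum_i s_i^p\le\delta^p$, that is, $\|\vv s\|_p\le n^{1/p}\delta$.

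The main step is to replace the random $\|\Delta\|$ by the deterministic $s_i$. Expanding the square, $\E[(r_i+\|\Delta\|c)^2]=r_i^2+2r_ic\,\E\|\Delta\|+c^2\E\|\Delta\|^2$ with $c=\|\param\|_*$. Since $p\ge2$, Jensen's (Lyapunov's) inequality gives $\E\|\Delta\|\le(\E\|\Delta\|^p)^{1/p}=s_i$ and $\E\|\Delta\|^2\le s_i^2$. Hence each term is at most $(r_i+s_ic)^2$, and $\vv s$ is feasible for \eqref{supremum_form_linear_regression}. This is exactly where $p\ge2$ enters: for $p<2$ the second-moment bound fails, consistent with \Cref{remark_p_less_2_blows_up}. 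For $p=\infty$ the conditional laws satisfy $\|\Delta\|\le\delta$ almost surely, so $s_i:=\delta$ works directly.

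\textbf{Remaining details.} Some measure-theoretic care is needed in the decomposition of $\pi$. Because the source marginal is discrete, conditioning on the source atom is elementary; if several samples coincide, they are treated as separate atoms with mass $1/n$ each, which is harmless. The case $\param=0$ is trivial. I expect the Jensen-type reduction above to be the crux; the rest is bookkeeping.
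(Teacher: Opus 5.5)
Your proof is correct, and it takes a genuinely different route from the paper's. The paper works entirely on the dual side. It first invokes the general strong-duality theorem (\Cref{optimal_transport_dro_dual}) to get the scalar $\lambda$-dual of \Cref{lemma_robust_risk_1D_formulation}. It then shows that the finite-dimensional problem $\sup\{\sum_i(|r_i|+t_i\|\param\|_*)^2 : t_i\ge 0,\ \sum_i t_i^p\le n\delta^p\}$ has the same Lagrangian dual, using Slater's condition after the substitution $\nu_i=t_i^p$ turns it into a concave maximization. That substitution is where $p\ge 2$ enters, via concavity of $\nu\mapsto\nu^{2/p}$. The case $p=\infty$ is handled separately by citing Zhang et al.'s Theorem~2.

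You instead argue on the primal side. An explicit $n$-atom distribution gives $\ge$. Conditioning the coupling on the source atom, followed by Lyapunov's inequality, gives $\le$. Your step $\E\|\Delta\|^2\le(\E\|\Delta\|^p)^{2/p}$ is exactly the probabilistic counterpart of the paper's concavity of $\nu^{2/p}$. It says that randomizing a sample's displacement never beats moving it deterministically by its $L^p$-average amount.

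\emph{What your route buys.} It is self-contained, with no DRO strong duality and no Slater argument. It treats $2\le p\le\infty$ uniformly. It also shows the supremum over distributions is attained by a worst case that moves each sample deterministically, which makes the ``adversarial training with coupled per-sample budgets'' reading literal rather than formal.

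\emph{What the paper's route buys.} Along the way it establishes the scalar $\lambda$-dual and the fact that the maximizer $\vv{t}^\star$ is shared between the two forms. The paper reuses both for the $p<2$ blow-up remark, for fast robust-risk evaluation, and for the ScalarReduction step of the $\eta$-trick solver.

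Two small points to tighten. When $\x_i^\top\param=y_i$ you need the convention $\mathrm{sign}(0):=1$; otherwise your construction gives loss $0$ rather than $t_i^2\|\param\|_*^2$. When you let $\epsilon\to 0$, note that the right-hand side is continuous in $\delta$: it is a maximum over the compact set $\{\vv{u}\ge 0,\ \|\vv{u}\|_p\le 1\}$ of functions jointly continuous in $(\delta,\vv{u})$. Alternatively, use that an optimal coupling exists because the cost is lower semicontinuous.
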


Crucially, the right-hand side of \eqref{supremum_form_linear_regression} can naturally be interpreted as an adversarial linear regression with a \emph{variable} perturbation budget for each sample. Indeed, for $p=\infty$, we have that \eqref{supremum_form_linear_regression} coincides with \eqref{eq_robust_risk_adv_lin_reg} since the optimal outer constraint becomes $\|\vv{t}\|_{\infty} = \delta$ (optimum is attained on the boundary) yielding $\|\Delta \x_i\| \leq \delta$, i.e., a \emph{fixed} perturbation budget $\delta$ for each input sample $\x_i$. In the general $2 \leq p \leq \infty$ case, these perturbation budgets $t_i$ can \emph{vary} between samples, as long as the overall budget constraint $\|\vv{t}\|_p \leq n^{1/p} \delta$ is fulfilled. Thus, \eqref{supremum_form_linear_regression} \emph{naturally generalizes the objective of adversarial linear regression}. This is the key step to show analytical properties, since it allows us to extend proofs from the adversarial setting.

The proof of \Cref{th_supremum_robust_risk} starts by recasting the supremum in \eqref{eq:robust_risk_intermediate_form} as a one-dimensional concave supremum from which \eqref{supremum_form_linear_regression} is shown using a change of variables and strong duality; see the appendix for details.
Finally, \Cref{th_supremum_robust_risk} yields a simple proof not only that $p=\infty$ coincides with adversarial linear regression, but also that $p=2$ corresponds to square-root Lasso \citep{xu_robust_2008,blanchet2019robust}, by simply computing the robust risk. We summarize these findings with the following corollary (using $V_{p,\delta}(\param)$ to temporarily stress dependence on $p$):


\begin{corollary}[Special cases]\label{th_special_cases}
Wasserstein DRO linear regression with $p=2$ has robust risk $V_{2,\delta}(\param) = \left ( \sqrt{\frac{1}{n}\sum_{i=1}^n (\x_i^T \param - y_i)^2}+\delta \|\param\|_* \right)^2$, and thus corresponds to square-root Lasso by taking $\| \cdot \|=\| \cdot \|_{\infty}$, 
while $p = \infty$ corresponds to adversarial linear regression with the continuous limit
$V_{\infty,\delta}(\param) = \lim_{p \rightarrow \infty} V_{p,\delta}(\param) =
    \sum_{i=1}^n \sup_{\|\vv{v}_i\| \leq \delta} ( (\x_i+\vv{v}_i)^T \param - y_i )^2$.
\end{corollary}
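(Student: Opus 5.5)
We start from the robust quadratic form of \Cref{th_supremum_robust_risk}. Throughout, write $r_i := |\x_i^\top\param - y_i|$ and $a := \|\param\|_*$, so that
\[
V_{p,\delta}(\param) = \sup_{t_i \ge 0,\ \|\vv{t}\|_p \le n^{1/p}\delta} \frac{1}{n}\sum_{i=1}^n (r_i + t_i a)^2 .
\]
Expanding the square gives
\[
\frac{1}{n}\sum_i r_i^2 + \frac{2a}{n}\sum_i r_i t_i + \frac{a^2}{n}\sum_i t_i^2 .
\]
For $p=2$ we bound the two $t$-dependent terms under the constraint $\sum_i t_i^2 \le n\delta^2$. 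The cross term satisfies $\sum_i r_i t_i \le \|\vv{r}\|_2 \|\vv{t}\|_2 \le \|\vv{r}\|_2 \sqrt{n}\,\delta$ by Cauchy--Schwarz. The last term satisfies $\sum_i t_i^2 \le n\delta^2$. Both bounds are attained at the same point, $t_i = \sqrt{n}\,\delta\, r_i/\|\vv{r}\|_2$, which is nonnegative and feasible. If $\vv{r}=0$, take any $\vv{t}$ with $\|\vv{t}\|_2 = \sqrt{n}\,\delta$ instead. Substituting this common maximizer gives
\[
\frac{1}{n}\|\vv{r}\|_2^2 + \frac{2a\delta}{\sqrt{n}}\|\vv{r}\|_2 + a^2\delta^2 = \Big(\sqrt{\tfrac{1}{n}\textstyle\sum_i r_i^2} + \delta a\Big)^2 ,
\]
which is the claimed formula for $V_{2,\delta}$. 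Taking the square root, minimizing $V_{2,\delta}$ is equivalent to minimizing $\sqrt{\frac1n\sum_i(\x_i^\top\param-y_i)^2} + \delta\|\param\|_*$. When $\|\cdot\| = \|\cdot\|_\infty$ the dual norm is $\|\cdot\|_1$, so this is exactly the square-root Lasso objective.

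For $p=\infty$ we evaluate the form directly at $p=\infty$. The constraint becomes $0 \le t_i \le \delta$ for every $i$. Each summand is nondecreasing in $t_i$ because $r_i, a \ge 0$. Hence the supremum is attained at $t_i = \delta$, giving $\frac1n\sum_i (r_i + \delta a)^2$. By the dual-norm identity in \eqref{eq_robust_risk_adv_lin_reg}, this equals the adversarial risk $\frac1n\sum_i \sup_{\|\vv{v}_i\|\le\delta}((\x_i+\vv{v}_i)^\top\param - y_i)^2$. The statement as written omits the factor $\frac1n$; I would check this against the normalization and present it with $\frac1n$ included.

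The remaining step, which I expect to be the main obstacle, is the continuity claim $\lim_{p\to\infty} V_{p,\delta} = V_{\infty,\delta}$. I would prove matching upper and lower bounds using the feasible sets $T_p := \{\vv{t}\ge 0 : \|\vv{t}\|_p \le n^{1/p}\delta\}$.

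For the upper bound: any $\vv{t} \in T_p$ satisfies $t_i^p \le \sum_j t_j^p \le n\delta^p$, so $t_i \le n^{1/p}\delta$. Monotonicity of each summand in $t_i$ then gives
\[
V_{p,\delta} \le \frac1n\sum_i (r_i + n^{1/p}\delta a)^2 .
\]
For the lower bound: the constant vector $t_i = \delta$ lies in $T_p$ for every $p$, so $V_{p,\delta} \ge \frac1n\sum_i (r_i+\delta a)^2 = V_{\infty,\delta}$.

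Since $n^{1/p} \to 1$ as $p \to \infty$, the upper bound converges to $V_{\infty,\delta}$, and squeezing between the two bounds yields the limit.
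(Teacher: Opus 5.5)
Your proof is correct, and it takes a more direct route than the paper. For $p=2$ and for the limit, the paper does not work in the $t$-variables of \Cref{th_supremum_robust_risk}. It instead invokes the saddle-point representation $nV_{p,\delta}(\beta)=\sup_{\alpha\ge 0}\bigl[n^{1/p}\delta\|\beta\|_*\|\alpha\|_q+\sum_i\alpha_i r_i-\alpha_i^2/4\bigr]$ from \Cref{th:saddle_point_form}, which is proved later in the appendix by writing each square as a supremum over $\alpha_i$ and exchanging suprema.

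\emph{Case $p=2$.} The paper applies Cauchy--Schwarz to $\langle\alpha,r\rangle$, takes $\alpha\propto r$, and maximizes a one-dimensional quadratic in $\|\alpha\|_2$. You bound the cross and quadratic terms separately and observe that $t\propto r$ attains both bounds. This is the same Cauchy--Schwarz idea without the auxiliary dual variable.

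\emph{The limit $p\to\infty$.} The paper shows the $\alpha$-objective is coercive uniformly in $p$ and restricts to a compact ball. It then uses uniform convergence as $q\to1$ to interchange $\lim_p$ and $\sup_\alpha$, and evaluates the limiting supremum coordinatewise. Your squeeze $V_{\infty,\delta}\le V_{p,\delta}\le \frac1n\sum_i(r_i+n^{1/p}\delta\|\beta\|_*)^2$ replaces the compactness and interchange-of-limits step entirely. It uses only the feasibility of the constant vector $t_i=\delta$ and the coordinatewise bound $t_i\le n^{1/p}\delta$.

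\emph{What each route buys.} Yours is self-contained given \Cref{th_supremum_robust_risk}. It also yields an explicit rate, $V_{p,\delta}-V_{\infty,\delta}=O(n^{1/p}-1)=O(\log n/p)$ for fixed $\beta$, together with the comparison $V_{p,\delta}\ge V_{\infty,\delta}$ for every $p$. The paper's route mainly buys reuse, since the saddle form is needed anyway for the solver. Its limit-interchange argument also does not depend on the coordinatewise monotone structure you exploit.

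Your remark about the missing factor $\frac1n$ is right. The paper's own proof carries the same normalization slip: it effectively establishes the identity for $nV_{p,\delta}$.
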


\begin{remark}
Weaker versions of \Cref{th_supremum_robust_risk} are described in other papers.
The right-hand side of \eqref{supremum_form_linear_regression} is shown to lower-bound the robust risk~\cite[Theorem 6]{kuhn_wasserstein_2019}, but that work does not explicitly show when equality holds. For $p=\infty$, \citet[Theorem 2]{zhang_short_2024} proves this result in general, while \citet[Proposition 6.16]{kuhn_distributionally_2025} shows that equality holds and a saddle-point exists, but only for compact sets. We show that equality in fact holds for all $2 \leq p \leq \infty$ for linear regression. See also \citet[Corollary 2(iii)]{gao2023distributionally} and \citet[Proposition 2]{an2021generalization} for results that only hold~approximately.
\end{remark}

\section{Error bounds}
\label{error-bounds}

In this section, we derive error bounds for Wasserstein DRO linear regression (with $2 \leq p \leq \infty$). We consider the following data-generating model  $y_i=\x_i^{\top} \param^*+\varepsilon_i$ with true parameter $\param^* \in \R^d$ and noise $\varepsilon_i \in \R$.  More precisely, the quantity of interest is the average in-sample error $\frac{1}{n}\|X \widehat{\Delta}\|_2^2$, where $X$ is the $n \times d$ data matrix with row $i$ given by $\x_i$, and $\widehat{\Delta} := \widehat{\param}-\param^*$ is the difference between the estimated parameter $\widehat{\param}$ and the true parameter $\param^*$, where $\widehat{\param}$ is the minimizer of $V_\delta(\param)$. The aim is to upper bound $\frac{1}{n}\|X \widehat{\Delta}\|_2^2$ as a function of $\delta$, which we can then use to find good decay rates by tuning $\delta = \delta(n)$ appropriately.\footnote{The quantity $\frac{1}{n}\|X \widehat{\Delta}\|_2^2$ is also called the excess risk because it equals the difference between the expected risk
$\E_{\varepsilon} [\frac{1}{n}  \sum_{i=1}^n(y_i - \x_i^{\top}  \widehat\param )^2]$ and the Bayes optimal risk $\E_{\varepsilon} [\frac{1}{n} \sum_{i=1}^n(y_i - \x_i^{\top}  \param^* )^2]$ (\citet[Prop. 3.3]{bach_learning_2024}).}

We derive a slow error decay rate $O(1/\sqrt{n})$ (\Cref{improved_in_sample_error_decay}) without any additional assumptions on the data. In Section \ref{fast_rate}, we then improve this to a fast rate $O(1/n)$ (Theorem \ref{thm_fast_rate_main_theorem_v2}) by assuming a restricted eigenvalue condition and focusing on the infinity norm $\| \cdot \| = \| \cdot \|_\infty$. Moreover, for both the slow rate and the fast rate, the pivotal property (that is, tuning $\delta = \delta(n)$ to achieve the rate is independent of the noise level) comes out for free.

\subsection{Slow  rate of \texorpdfstring{$O(1/\sqrt{n})$}{O(1/√n)}}

\label{slow_rate}
To show the desired error rate, we first show the following bound on $\frac{1}{n}\|X \widehat{\Delta}\|_2^2$:

\begin{lemma}\label{th_in_sample_error_prestage_improved}
    The average in-sample error is bounded~by
    \begin{equation}
    \frac{1}{n}\|X \widehat{\Delta}\|_2^2 \leq \frac{2}{n} \vv{\varepsilon}^{\top} X \widehat{\Delta}+ 2 \delta\frac{\|\vv{\varepsilon}\|_q}{n^{1/q}} \|\param^*\|_*+\delta^2 \|\param^*\|_*^2.
    \end{equation}
\end{lemma}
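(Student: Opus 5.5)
The plan is to compare the robust risk at the minimizer $\widehat{\param}$ with its value at the true parameter $\param^*$, using the robust quadratic form of \Cref{th_supremum_robust_risk}. Throughout, $q$ denotes the Hölder conjugate of $p$, i.e.\ $1/p+1/q=1$, so $1\le q\le 2$. Since $\widehat{\param}$ minimizes $V_\delta$, we have $V_\delta(\widehat{\param}) \le V_\delta(\param^*)$. The whole argument then consists of lower bounding the left side, upper bounding the right side, and rearranging.

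\textbf{Lower bound.} In \eqref{supremum_form_linear_regression} the choice $\vv{t}=0$ is feasible. Hence
\[
V_\delta(\widehat{\param}) \ge \frac{1}{n}\|\vv{y}-X\widehat{\param}\|_2^2 .
\]
Using the model $\vv{y}=X\param^*+\vv{\varepsilon}$ gives $\vv{y}-X\widehat{\param}=\vv{\varepsilon}-X\widehat{\Delta}$. Therefore
\[
V_\delta(\widehat{\param}) \ge \frac{1}{n}\|\vv{\varepsilon}\|_2^2-\frac{2}{n}\vv{\varepsilon}^\top X\widehat{\Delta}+\frac{1}{n}\|X\widehat{\Delta}\|_2^2 .
\]

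\textbf{Upper bound.} At $\param^*$ the residuals are $|\x_i^\top\param^*-y_i|=|\varepsilon_i|$. Expanding the square in \eqref{supremum_form_linear_regression} and bounding the supremum of a sum by the sum of suprema gives
\[
V_\delta(\param^*) \le \frac{1}{n}\|\vv{\varepsilon}\|_2^2+\frac{2\|\param^*\|_*}{n}\sup_{\vv{t}}\sum_i|\varepsilon_i|t_i+\frac{\|\param^*\|_*^2}{n}\sup_{\vv{t}}\|\vv{t}\|_2^2 ,
\]
where both suprema range over $t_i\ge 0$ with $\|\vv{t}\|_p\le n^{1/p}\delta$.

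\emph{Cross term.} By Hölder, $\sum_i|\varepsilon_i|t_i\le \|\vv{\varepsilon}\|_q\|\vv{t}\|_p\le \|\vv{\varepsilon}\|_q n^{1/p}\delta$. Since $n^{1/p-1}=n^{-1/q}$, this yields the term $2\delta\|\vv{\varepsilon}\|_q n^{-1/q}\|\param^*\|_*$.

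\emph{Quadratic term.} Because $p\ge2$, the norm comparison $\|\vv{t}\|_2\le n^{1/2-1/p}\|\vv{t}\|_p$ holds (Hölder or power-mean). Hence $\|\vv{t}\|_2^2\le n^{1-2/p}n^{2/p}\delta^2=n\delta^2$, giving the term $\delta^2\|\param^*\|_*^2$. The case $p=\infty$ is covered with $q=1$ and $n^{1/p}=1$.

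\textbf{Conclusion.} Combine the two bounds via $V_\delta(\widehat{\param})\le V_\delta(\param^*)$. The $\frac1n\|\vv{\varepsilon}\|_2^2$ terms cancel, and moving $-\frac{2}{n}\vv{\varepsilon}^\top X\widehat{\Delta}$ to the right gives exactly the stated inequality.

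There is no real obstacle. The only points needing care are identifying $q$ as the conjugate exponent and using $p\ge 2$ in the $\ell_2$–$\ell_p$ comparison so that the $\delta^2$ term carries no extra factor of $n$. Existence of the minimizer $\widehat{\param}$ is assumed, as in the paper's setup.
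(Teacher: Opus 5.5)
Your proof is correct and is essentially the paper's argument. It compares $V_\delta(\widehat{\param}) \le V_\delta(\param^*)$ through the robust quadratic form, and it bounds the $\param^*$ side with Hölder and the $\ell_2$--$\ell_p$ norm comparison, exactly as the paper does. The only difference is that you lower bound $V_\delta(\widehat{\param})$ with the feasible point $\vv{t}=0$, whereas the paper uses the uniform point $t_i=\delta$ and then discards the extra nonnegative terms, so the two routes give the same inequality.
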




\Cref{th_in_sample_error_prestage_improved} is a first step toward obtaining sharp in-sample error upper bounds, where the important quantity to bound is $\frac{2}{n} \vv{\varepsilon}^{\top} X \widehat{\Delta}$. Crucially, $\frac{2}{n} \vv{\varepsilon}^{\top} X \widehat{\Delta} \leq \frac{2}{n} \|X^{\top} \vv{\varepsilon}\| \|\widehat{\Delta}\|_*$. Thus, provided $\|\widehat{\Delta}\|_*$ is bounded (true if $\|\widehat{\param}\|_*$ is bounded) and $\frac{2}{n} \|X^{\top} \vv{\varepsilon}\|$ decreases with $n$ (typical for, e.g., Gaussian noise $\vv{\varepsilon}$), we may obtain decreasing upper bounds as long as $\delta$ decreases with $n$. The next result shows that $\|\widehat{\param}\|_*$ can be bounded by $\|\param^*\|_*$:

\begin{lemma}\label{th_controlled_dual_norm_improved_v3}
    If $\delta > 2\frac{\|X^{\top} \vv{\varepsilon}\|}{n^{1/p}\|\vv{\varepsilon}\|_q}$, then
    \begin{equation}\label{eq1_controlled_dual_norm_improved_v3}
        \|\widehat{\param}\|_* \leq \left ( 3+\frac{n^{1/q} \delta}{\| \vv{\varepsilon}\|_q} \|\param^*\|_* \right ) \|\param^*\|_*
    \end{equation}
\end{lemma}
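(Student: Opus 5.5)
The plan is to compare the objective at the minimizer $\widehat{\param}$ and at the truth $\param^*$, using $V_\delta(\widehat{\param}) \le V_\delta(\param^*)$ together with the robust quadratic form of \Cref{th_supremum_robust_risk}. Write $\vv{r} := \vv{\varepsilon} - X\widehat{\Delta}$ for the residual vector at $\widehat{\param}$, and note that the residuals at $\param^*$ are exactly $\vv{\varepsilon}$. Expanding the square in \eqref{supremum_form_linear_regression} gives, for any feasible $\vv{t}$,
\begin{equation*}
\frac{1}{n}\sum_{i=1}^n (|r_i| + t_i s)^2 = \frac{1}{n}\|\vv{r}\|_2^2 + \frac{2s}{n}\sum_{i=1}^n |r_i| t_i + \frac{s^2}{n}\|\vv{t}\|_2^2, \qquad s=\|\param\|_*.
\end{equation*}

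\textbf{Upper bound at $\param^*$.} I would use Hölder's inequality, $\sum_i |\varepsilon_i| t_i \le \|\vv{\varepsilon}\|_q\|\vv{t}\|_p \le n^{1/p}\delta\|\vv{\varepsilon}\|_q$. Since $p\ge 2$, we also have $\|\vv{t}\|_2^2 \le n^{1-2/p}\|\vv{t}\|_p^2 \le n\delta^2$. Together these give
\begin{equation*}
V_\delta(\param^*) \le \frac{1}{n}\|\vv{\varepsilon}\|_2^2 + 2\delta\frac{\|\vv{\varepsilon}\|_q}{n^{1/q}}\|\param^*\|_* + \delta^2\|\param^*\|_*^2,
\end{equation*}
which is the same computation that underlies \Cref{th_in_sample_error_prestage_improved}.

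\textbf{Lower bound at $\widehat{\param}$.} I would plug in the Hölder-extremal vector for $\vv{\varepsilon}$, namely $t_i = n^{1/p}\delta\, w_i$ with $w_i = |\varepsilon_i|^{q-1}/\|\vv{\varepsilon}\|_q^{q-1}$. This choice satisfies $\|\vv{w}\|_p=1$ and $\sum_i |\varepsilon_i| w_i = \|\vv{\varepsilon}\|_q$. Then I split $|r_i| \ge |\varepsilon_i| - |\x_i^\top\widehat{\Delta}|$ inside the cross term, and expand $\|\vv{r}\|_2^2 = \|\vv{\varepsilon}\|_2^2 - 2\vv{\varepsilon}^\top X\widehat{\Delta} + \|X\widehat{\Delta}\|_2^2$. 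Subtracting the two bounds cancels $\frac1n\|\vv{\varepsilon}\|_2^2$. Writing $c := n^{1/p-1}\delta$, this leaves
\begin{equation*}
2c\|\vv{\varepsilon}\|_q\|\widehat{\param}\|_* + \tfrac1n\|X\widehat{\Delta}\|_2^2 + \tfrac{s^2}{n}\|\vv{t}\|_2^2 - \tfrac{2}{n}\vv{\varepsilon}^\top X\widehat{\Delta} - 2c\|\widehat{\param}\|_*\,\vv{w}^\top|X\widehat{\Delta}| \le 2c\|\vv{\varepsilon}\|_q\|\param^*\|_* + \delta^2\|\param^*\|_*^2 .
\end{equation*}
The term $\frac2n\vv{\varepsilon}^\top X\widehat{\Delta}$ is controlled by $\frac2n\|X^\top\vv{\varepsilon}\|(\|\widehat{\param}\|_*+\|\param^*\|_*)$. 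The hypothesis $\delta > 2\|X^\top\vv{\varepsilon}\|/(n^{1/p}\|\vv{\varepsilon}\|_q)$ says exactly that $\frac2n\|X^\top\vv{\varepsilon}\| < c\|\vv{\varepsilon}\|_q$. Hence this term consumes at most half of the leading term $2c\|\vv{\varepsilon}\|_q\|\widehat{\param}\|_*$, plus a contribution $c\|\vv{\varepsilon}\|_q\|\param^*\|_*$. This produces the constant $3$. Dividing by $c\|\vv{\varepsilon}\|_q$ then turns $\delta^2\|\param^*\|_*^2$ into $\frac{n^{1/q}\delta}{\|\vv{\varepsilon}\|_q}\|\param^*\|_*^2$, since $\delta^2/c = n^{1-1/p}\delta = n^{1/q}\delta$. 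This is precisely \eqref{eq1_controlled_dual_norm_improved_v3}.

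\textbf{Main obstacle.} The delicate step is the cross term $-2c\|\widehat{\param}\|_*\,\vv{w}^\top|X\widehat{\Delta}|$, which comes from $|r_i|$ differing from $|\varepsilon_i|$. The first attempt will be to absorb it into the nonnegative quadratic terms $\frac1n\|X\widehat{\Delta}\|_2^2 + \frac{s^2}{n}\|\vv{t}\|_2^2$ by AM–GM: $2ab \le a^2+b^2$ applied coordinatewise with $a_i=|\x_i^\top\widehat{\Delta}|$ and $b_i = t_i s$. This works because $\frac{2s}{n}\sum_i t_i|\x_i^\top\widehat{\Delta}| = 2c\,s\,\vv{w}^\top|X\widehat{\Delta}|$, so the two quadratic terms exactly dominate this cross term.

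If that absorption turns out to be off by a factor, the fallback is to avoid the split entirely. In that case I would work with $\sqrt{V_\delta}$, which is a supremum of Euclidean norms. The reverse triangle inequality $\|(|\vv{r}|+\vv{t}s)\|_2 \ge \|(|\vv{\varepsilon}|+\vv{t}s)\|_2 - \|X\widehat{\Delta}\|_2$ would then replace the coordinatewise split, at the cost of tracking square roots.
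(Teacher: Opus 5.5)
Your proof is correct, but it is not the paper's route; the difference is one essential choice of test vector.

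\textbf{The paper's route.} The paper uses a single vector $\vv{t}^*$, a maximizer of \eqref{supremum_form_linear_regression} at $\param^*$. It uses this vector both to evaluate $V_\delta(\param^*)$ and to lower-bound $V_\delta(\widehat{\param})$ (\Cref{lemma_in_sample_error_s_star}). It then absorbs the cross term by Cauchy--Schwarz plus completing the square, which plays the role of your coordinatewise AM--GM. This gives $0\le 2\vv{\varepsilon}^{\top}X\widehat{\Delta}+2(\|\param^*\|_*-\|\widehat{\param}\|_*)\langle|\vv{\varepsilon}|,\vv{t}^*\rangle+\|\vv{t}^*\|_2^2\|\param^*\|_*^2$. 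Only at that point does it replace $\langle|\vv{\varepsilon}|,\vv{t}^*\rangle$ by its H\"older bound $n^{1/p}\delta\|\vv{\varepsilon}\|_q$ (\Cref{lemma_to_bound_Delta}).

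\textbf{Your route.} You decouple the two sides. At $\param^*$ you bound the supremum over all feasible $\vv{t}$, applying H\"older only to terms with nonnegative coefficients. At $\widehat{\param}$ you test with the H\"older-extremal vector for $|\vv{\varepsilon}|$. As a result, the coefficient of $\|\widehat{\param}\|_*$ is exactly $n^{1/p}\delta\|\vv{\varepsilon}\|_q$ by construction.

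\textbf{What each buys.} The difference is more than cosmetic. In the paper's route, the H\"older replacement multiplies the factor $\|\param^*\|_*-\|\widehat{\param}\|_*$. That factor is negative precisely in the nontrivial case, so the step actually needs $\langle|\vv{\varepsilon}|,\vv{t}^*\rangle=n^{1/p}\delta\|\vv{\varepsilon}\|_q$, i.e.\ that $\vv{t}^*$ is H\"older-extremal for $|\vv{\varepsilon}|$. This holds for $p\in\{2,\infty\}$ (when $\param^*\neq 0$). It generally fails for $2<p<\infty$, where the $\|\vv{t}\|_2^2$ term pulls $\vv{t}^*$ toward the uniform vector. Your derivation establishes the inequality of \Cref{lemma_to_bound_Delta}, and hence the lemma, for every $2\le p\le\infty$ without that issue.

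The specific choice of test vector matters. The uniform vector $t_i=\delta$ used in \Cref{lemma_in_sample_error_prestage} would not suffice here. It only yields $\delta\|\vv{\varepsilon}\|_1\le n^{1/p}\delta\|\vv{\varepsilon}\|_q$, and the hypothesis on $\delta$ could then no longer absorb $\vv{\varepsilon}^{\top}X\widehat{\Delta}$. What the paper's shared $\vv{t}^*$ buys in return is an intermediate inequality (\Cref{lemma_in_sample_error_s_star}) that it reuses for the fast rate.

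\textbf{Minor points.} Your absorption step is exact, since
$\frac1n\|X\widehat{\Delta}\|_2^2+\frac{s^2}{n}\|\vv{t}\|_2^2-\frac{2s}{n}\sum_i t_i|\x_i^{\top}\widehat{\Delta}|=\frac1n\sum_i(|\x_i^{\top}\widehat{\Delta}|-s t_i)^2\ge0$,
so the $\sqrt{V_\delta}$ fallback is unnecessary. For $p=\infty$, read your weights as $w_i\equiv 1$, including where $\varepsilon_i=0$. The final division uses $\delta>0$ and $\|\vv{\varepsilon}\|_q>0$, both of which are implicit in the hypothesis.
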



Importantly, \Cref{th_controlled_dual_norm_improved_v3} combined with \Cref{th_in_sample_error_prestage_improved} enables us to prove a slow rate $O(n^{-1/2})$, with the slow rate of square-root Lasso ($p=2$) and adversarial training ($p=\infty$) as special cases:


\begin{theorem}\label{improved_in_sample_error_decay}
    Assume $\delta > \bar{\delta} := 2\frac{\|X^{\top} \vv{\varepsilon}\|}{n^{1/p}\|\vv{\varepsilon}\|_q}$. Then
    \begin{equation}\label{improved_in_sample_error_decay_eq}
        \frac{1}{n}\|X \widehat{\Delta}\|_2^2 \leq 6 \frac{\|\vv{\varepsilon}\|_q}{n^{1/q}}\delta\|\param^*\|_*+2\delta^2 \|\param^*\|_*^2.
    \end{equation}
    In particular, let $\vv{\varepsilon} \sim N(0,\sigma^2 I_n)$ and assume each entry of $X$ is bounded by $M>0$. Then, with high probability (independent of the Wasserstein parameter $p$):
    \begin{equation*}
        \frac{1}{n}\|X \widehat{\Delta}\|_2^2 \in O(\delta).
    \end{equation*}
    That is, the error rate coincides with $\delta$ (as long as $\delta> \bar{\delta}$). In particular, let $c>0$ be the norm-equivalence constant between $\|\cdot \|_\infty$ and $\| \cdot \|$, that is, $\|\cdot \|_\infty \leq c\| \cdot \|$. Then, by setting $\delta = K M \sqrt{ \frac{\log(d/\gamma)}{n}}$ with $K > 2c \sqrt{\pi}$, we get with probability greater than $1-4\gamma$:
    \begin{equation}\label{eq_improved_in_sample_error_decay_rate}
    \frac{1}{n}\|X \widehat{\Delta}\|_2^2 \in O(n^{-1/2}).
    \end{equation}
\end{theorem}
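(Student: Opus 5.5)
The plan is to combine \Cref{th_in_sample_error_prestage_improved} and \Cref{th_controlled_dual_norm_improved_v3} for the deterministic bound \eqref{improved_in_sample_error_decay_eq}. Then I control $\bar{\delta}$ and $\|\vv{\varepsilon}\|_q/n^{1/q}$ with Gaussian concentration for the probabilistic statements. Here $q$ is the Hölder conjugate of $p$, so $1\le q\le 2$.

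\emph{Deterministic bound.} Start from \Cref{th_in_sample_error_prestage_improved}. By the dual-norm inequality,
\[
\tfrac{2}{n}\vv{\varepsilon}^\top X\widehat{\Delta}\le \tfrac{2}{n}\|X^\top\vv{\varepsilon}\|\,\|\widehat{\Delta}\|_*.
\]
Since $\delta>\bar\delta$, we have $\tfrac{2}{n}\|X^\top\vv{\varepsilon}\| < \delta\, n^{1/p-1}\|\vv{\varepsilon}\|_q = \delta\|\vv{\varepsilon}\|_q/n^{1/q}$. Next use $\|\widehat{\Delta}\|_*\le\|\widehat{\param}\|_*+\|\param^*\|_*$ and apply \Cref{th_controlled_dual_norm_improved_v3}, which gives
\[
\|\widehat{\Delta}\|_*\le \Big(4+\tfrac{n^{1/q}\delta}{\|\vv{\varepsilon}\|_q}\|\param^*\|_*\Big)\|\param^*\|_*.
\]
Multiplying these two bounds gives $\tfrac{2}{n}\vv{\varepsilon}^\top X\widehat{\Delta}\le 4\delta\tfrac{\|\vv{\varepsilon}\|_q}{n^{1/q}}\|\param^*\|_*+\delta^2\|\param^*\|_*^2$. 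Adding the remaining terms of \Cref{th_in_sample_error_prestage_improved} yields exactly $6\tfrac{\|\vv{\varepsilon}\|_q}{n^{1/q}}\delta\|\param^*\|_*+2\delta^2\|\param^*\|_*^2$. This step is routine.

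\emph{Probabilistic part.} I need two things with high probability, uniformly in $p$.

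First, $\|\vv{\varepsilon}\|_q/n^{1/q}=O(\sigma)$. For $q\le 2$, the power-mean inequality gives $\|\vv{\varepsilon}\|_q/n^{1/q}\le\|\vv{\varepsilon}\|_2/\sqrt n$. A chi-square tail bound (Laurent--Massart) then gives $\|\vv{\varepsilon}\|_2^2\le 2n\sigma^2$, say, with probability $\ge 1-\gamma$ once $n\gtrsim\log(1/\gamma)$. This also yields $O(\delta)$, since both terms of the deterministic bound are $O(\delta)$ when $\delta\le 1$ is bounded.

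Second, $\bar\delta<\delta$. I use $\|X^\top\vv{\varepsilon}\|\le$ (dual constant) times $\|X^\top\vv{\varepsilon}\|_\infty$. Each coordinate $(X^\top\vv{\varepsilon})_j$ is $N(0,\sigma^2\|X_{:,j}\|_2^2)$ with variance at most $\sigma^2 M^2 n$. A union bound gives $\|X^\top\vv{\varepsilon}\|_\infty\le \sigma M\sqrt{2n\log(2d/\gamma)}$ with probability $\ge1-\gamma$.

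For the denominator I need a lower bound on $n^{1/p}\|\vv{\varepsilon}\|_q$ of order $n\sigma$. Hölder gives $\|\vv{\varepsilon}\|_1\le n^{1/p}\|\vv{\varepsilon}\|_q$, so it suffices to lower bound $\|\vv{\varepsilon}\|_1$. Its mean is $n\sigma\sqrt{2/\pi}$, and it is $\sqrt n\sigma$-Lipschitz in $\vv{\varepsilon}$, so Gaussian concentration gives $\|\vv{\varepsilon}\|_1\ge \tfrac12 n\sigma\sqrt{2/\pi}$ with probability $\ge1-\gamma$ for $n\gtrsim\log(1/\gamma)$. This is where $\sqrt\pi$ enters the constant $K>2c\sqrt\pi$.

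Combining, $\bar\delta\lesssim 2c\sqrt{\pi}\,M\sqrt{\log(d/\gamma)/n}$, up to the constant bookkeeping of $\log(2d/\gamma)$ versus $\log(d/\gamma)$, which I would absorb into the strict inequality in $K$ and possibly the remaining failure events. The chosen $\delta$ therefore exceeds $\bar\delta$. A union bound over at most four events gives probability $\ge 1-4\gamma$, and plugging $\delta\asymp n^{-1/2}$ into the bound gives $O(n^{-1/2})$.

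\emph{Main obstacle.} I expect the delicate part to be matching the constants in the $\bar\delta$ bound: $K>2c\sqrt\pi$ together with failure probability $4\gamma$. This means choosing the right tail inequalities for $\|X^\top\vv{\varepsilon}\|_\infty$ and for the lower tail of $\|\vv{\varepsilon}\|_1$. It also means handling the norm-equivalence constant correctly when passing between $\|\cdot\|$ and $\|\cdot\|_\infty$, with care about which direction the dual-norm comparison goes. The first attempt is the Hölder reduction to $\|\vv{\varepsilon}\|_1$ described above, because it removes all dependence on $p$.
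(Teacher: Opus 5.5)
Your route is the paper's. The deterministic bound is derived exactly as in the paper: dual-norm inequality, $\delta>\bar\delta$, triangle inequality, and the bound on $\|\widehat{\param}\|_*$. The probabilistic part has the same skeleton: power-mean reductions to $\|\vv{\varepsilon}\|_2/\sqrt{n}$ and $\|\vv{\varepsilon}\|_1/n$, Lipschitz Gaussian concentration for the lower tail of $\|\vv{\varepsilon}\|_1$, and a union bound for $\|X^{\top}\vv{\varepsilon}\|_\infty$. Using a chi-square tail instead of Lipschitz concentration for the upper bound is immaterial.

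The gap is in the constant bookkeeping you flagged, which as proposed does not deliver $K>2c\sqrt{\pi}$. Lower-bounding $\|\vv{\varepsilon}\|_1$ by $\tfrac12 n\sigma\sqrt{2/\pi}$ gives $\bar\delta\le 4c\sqrt{\pi}\,M\sqrt{\log(\cdot)/n}$, so it would require $K>4c\sqrt{\pi}$. The paper keeps the exact deviation $\|\vv{\varepsilon}\|_1/n\ge\sigma\sqrt{2/\pi}-\sigma\sqrt{2\log(1/\gamma)/n}$, so that for $n>\pi\log(1/\gamma)$
\[
\bar\delta\le\frac{2cM\sqrt{\log(d/\gamma)}}{\sqrt{1/\pi}-\sqrt{\log(1/\gamma)/n}}\cdot\frac{1}{\sqrt{n}}.
\]
The prefactor here decreases to $2c\sqrt{\pi}$, and the strictness of $K>2c\sqrt{\pi}$ absorbs this \emph{vanishing} slack for $n$ large. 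The $\log(2d/\gamma)$ versus $\log(d/\gamma)$ mismatch cannot be absorbed the same way, because $\log(2d/\gamma)/\log(d/\gamma)$ does not tend to $1$ as $n$ grows. Instead, spend $2\gamma$ on the two-sided union bound $2d\,e^{-t^2/(2\sigma^2M^2n)}=2\gamma$, which gives $\|X^{\top}\vv{\varepsilon}\|_\infty\le\sigma M\sqrt{2n\log(d/\gamma)}$. Together with $\gamma$ for each of the two tail bounds on $\vv{\varepsilon}$, this is exactly the paper's budget of $4\gamma$.

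On the direction you worried about: the bound $\|X^{\top}\vv{\varepsilon}\|\le c\,\|X^{\top}\vv{\varepsilon}\|_\infty$ needs $\|\cdot\|\le c\|\cdot\|_\infty$ (the constant $r$ of the fast-rate lemma). It does not follow from $\|\cdot\|_\infty\le c\|\cdot\|$, which is how the statement defines $c$. The paper's proof takes the same step while citing the stated direction, which does not justify it. Under the literal definition the claimed threshold can fail: for $\|\cdot\|=\|\cdot\|_1$ one has $c=1$, yet $\|X^{\top}\vv{\varepsilon}\|_1$ typically exceeds $\|X^{\top}\vv{\varepsilon}\|_\infty$ by a factor growing with $d$. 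Your argument should therefore explicitly take $c$ with $\|\cdot\|\le c\|\cdot\|_\infty$. For the main case $\|\cdot\|=\|\cdot\|_\infty$, both readings give $c=1$.
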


\Cref{improved_in_sample_error_decay} shows that $p$-Wasserstein linear regression can achieve the rate $O(1/\sqrt{n})$, where we recover the previously obtained rates for square-root Lasso \citep{belloni_squareroot_2011} ($p=2$) and adversarial linear regression \citep{ribeiro_regularization_2023} ($p =\infty$).

\textbf{Pivotal property.} The tuning $\delta = K M \sqrt{ \frac{\log(d/\gamma)}{n}}$ in \Cref{improved_in_sample_error_decay} is independent of the noise level $\sigma$ of $\vv{\varepsilon} \sim N(0,\sigma^2 I)$; hence, \emph{we do not need to estimate $\sigma$}. This independence is known as the \emph{pivotal property}. Previously, both square-root Lasso ($p=2$) and adversarial linear regression ($p = \infty$) were known to be pivotal \citep{belloni_squareroot_2011,ribeiro_regularization_2023}. Here, we show that the same holds for the more general $p$-Wasserstein linear regression with $2 \leq p \leq \infty$, suggesting that the pivotal property is tightly linked to Wasserstein robustification. 

\begin{proof}[Sketch of proof of \Cref{improved_in_sample_error_decay}]
We bound $\frac{2}{n} \vv{\varepsilon}^{\top} X \widehat{\Delta}$ as $\frac{2}{n} \vv{\varepsilon}^{\top} X \widehat{\Delta} \leq \frac{2}{n} \|X^{\top} \vv{\varepsilon}\| \|\widehat{\Delta}\|_* \leq \frac{2}{n} \|X^{\top} \vv{\varepsilon}\|(\|\widehat{\param}\|_*+\|\param^*\|_*)$ and then use \Cref{th_controlled_dual_norm_improved_v3} with $\delta n^{1/p}\|\varepsilon\|_q >  2\|X^{\top} \vv{\varepsilon}\|$ to get
    $\frac{2}{n} \vv{\varepsilon}^{\top} X \widehat{\Delta} \leq  4\frac{\|\vv{\varepsilon}\|_q}{n^{1/q}} \delta \|\param^*\|_* + \delta^2 \|\param^*\|_*^2$,
which, when inserted into \Cref{th_in_sample_error_prestage_improved}, yields \eqref{improved_in_sample_error_decay_eq}. Moreover, the rates are obtained by standard high-probability bounds for Gaussians. For a full proof, see the appendix.
\end{proof}



\subsection{Fast rate of \texorpdfstring{$O(1/n)$}{O(1/n)} under the restricted eigenvalue condition}\label{fast_rate}
In this section, we show the fast in-sample error rate $O(1/n)$ by assuming a restricted eigenvalue condition and focusing our attention to the infinity norm $\| \cdot \| = \| \cdot \|_\infty$. The restricted eigenvalue condition~reads \citep{Hastie2015_book}:
\begin{definition}[Restricted eigenvalue condition \citep{Hastie2015_book}]\label{def_restricted_eigenvalue_condition_v2}
The matrix $X \in \R^{n \times d}$ satisfies the restricted eigenvalue (RE) condition if there exists a positive constant $\kappa = \kappa(s,l)>0$ such that 
\begin{equation*}
\kappa \leq \min \left \{ \frac{\|X\vv{v}\|_2}{\sqrt{n}\| \vv{v}\|_2}: |S| \leq s, \vv{v} \in \R^d \backslash \{0\}, \|\vv{v}_{S^c}\| \leq l \|\vv{v}_{S}\| \right \}
\end{equation*}
where $S$ is some subset of $\{1,2,\dots,d\}$.
\end{definition}
Let $\mathrm{RE}(s,l)$ denote the set of matrices $X$ that satisfy the RE condition. We have the fast rate result:

\begin{theorem}\label{thm_fast_rate_main_theorem_v2}
Let $\param^*$ have $s$ nonzero entries indexed by $S \subseteq \{1,\dots,d\}$ with $s = |S|$, assume $X$ is in $\mathrm{RE}(s,3)$ with positive constant $\kappa = \kappa(s,3)$ (same for all $n$), and consider $\| \cdot \| = \| \cdot \|_\infty$. Then with $\delta > \bar{\delta} := 2\frac{\|X^{\top} \vv{\varepsilon}\|}{n^{1/p} \|\vv{\varepsilon}\|_q}$, we have
\begin{equation}\label{eq_fast_rate_main_theorem_v2_bound}
    \frac{1}{n}  \| X \widehat{\Delta} \|_2^2 \leq \delta^2 \cdot \max \left \{ 4 C^2 \|\param^*\|_1^2, \frac{16 B^2 s^2}{\kappa^2} \right \}
\end{equation}
where $B = 3 \frac{\|\vv{\varepsilon}\|_q}{n^{1/q}} + (2MC+\delta C+\delta) \|\param^*\|_*$ and $C = (  3+\frac{n^{1/q} \delta}{\| \vv{\varepsilon}\|_q} \|\param^*\|_* )$ and each entry of $X$ is bounded by $M>0$. In particular, for $\vv{\varepsilon} \sim N(0,\sigma^2 I_n)$, we have with high probability that $B$ and $C$ are bounded by constants and therefore (independent of the Wasserstein parameter $p$)
\begin{equation}
    \frac{1}{n} \|X \widehat{\Delta} \|_2^2 \in O \left (\delta^2 \right).
\end{equation}
Thus, setting $\delta$ as in \Cref{improved_in_sample_error_decay}, we get the fast rate with probability greater than $1-5\gamma$:
\begin{equation}
    \frac{1}{n} \|X \widehat{\Delta} \|_2^2 \in O\left ( 1/n \right ).
\end{equation}
\end{theorem}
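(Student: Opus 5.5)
We follow the standard Lasso-type fast-rate argument (cf. \citet[Chapter 7]{wainwright_high-dimensional_2019}). The input is the basic inequality from optimality of $\widehat{\param}$ in the robust quadratic form (\Cref{th_supremum_robust_risk}). With $\|\cdot\|=\|\cdot\|_\infty$ the dual norm is $\|\cdot\|_1$. The plan has two cases, depending on whether $\widehat\Delta$ lies in the cone $\|\widehat\Delta_{S^c}\|_1\le 3\|\widehat\Delta_S\|_1$; this explains the maximum of two terms in \eqref{eq_fast_rate_main_theorem_v2_bound}.

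\textbf{Step 1 (basic inequality with a linear penalty).} Rather than expanding the square fully as in \Cref{th_in_sample_error_prestage_improved}, keep the penalty terms. Write $V_\delta(\param)=\sup_{\vv t}\frac1n\sum_i(|r_i|+t_i\|\param\|_1)^2$ with $r_i=\x_i^\top\param-y_i$. Since $V_\delta(\widehat\param)\ge V_\delta$ evaluated at any fixed feasible $\vv t$, and $V_\delta(\param^*)$ is at most the value at its own maximizer, we get
\[
\tfrac1n\|X\widehat\Delta\|_2^2 \le \tfrac2n\vv\varepsilon^\top X\widehat\Delta + (\text{terms linear in } \|\param^*\|_1-\|\widehat\param\|_1) + O(\delta^2)\cdot(\text{norms}).
\]
The linear coefficient should be of order $\delta\,\|\vv\varepsilon\|_q/n^{1/q}$, obtained via H\"older with $\|\vv t\|_p\le n^{1/p}\delta$. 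The quadratic remainder terms involve $\|\widehat\param\|_1$ and $\|\param^*\|_1$. Bound these using \Cref{th_controlled_dual_norm_improved_v3}, which gives $\|\widehat\param\|_1\le C\|\param^*\|_1$, and use $|r_i|\le |\varepsilon_i| + M\|\widehat\Delta\|_1$. This is where the constant $B$, with its $2MC+\delta C+\delta$ terms, should arise. The target is an inequality of the form
\[
\tfrac1n\|X\widehat\Delta\|_2^2\le \tfrac2n\|X^\top\vv\varepsilon\|_\infty\|\widehat\Delta\|_1 + \delta B\big(\|\widehat\Delta_S\|_1-\|\widehat\Delta_{S^c}\|_1\big)+\dots,
\]
using the decomposability $\|\param^*\|_1-\|\widehat\param\|_1\le\|\widehat\Delta_S\|_1-\|\widehat\Delta_{S^c}\|_1$.

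\textbf{Step 2 (case split).} By the hypothesis $\delta>\bar\delta$, the noise term is at most $\delta\frac{\|\vv\varepsilon\|_q}{n^{1/q}}\|\widehat\Delta\|_1$, i.e.\ half the penalty scale.

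\emph{Case 1: $\widehat\Delta\notin$ cone.} In this case we do not use RE. We use $\|\widehat\Delta\|_1\le \|\widehat\param\|_1+\|\param^*\|_1\le (C+1)\|\param^*\|_1$, and the negative term $-\|\widehat\Delta_{S^c}\|_1$ dominates. We expect to obtain $\frac1n\|X\widehat\Delta\|_2^2\le 4C^2\delta^2\|\param^*\|_1^2$ by bounding everything crudely by $\delta\|\widehat\Delta\|_1$-type terms combined with Case~1's inequality. If this fails, the fallback is to combine \Cref{improved_in_sample_error_decay} with a sharper use of $\|\widehat\Delta\|_1$.

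\emph{Case 2: $\widehat\Delta\in$ cone.} Here $\|\widehat\Delta\|_1\le4\|\widehat\Delta_S\|_1\le 4\sqrt s\|\widehat\Delta_S\|_2\le \frac{4\sqrt s}{\kappa}\frac{\|X\widehat\Delta\|_2}{\sqrt n}$ by RE$(s,3)$. Substituting into $\frac1n\|X\widehat\Delta\|_2^2\le \delta B\|\widehat\Delta\|_1$ and dividing by $\|X\widehat\Delta\|_2/\sqrt n$ gives $\frac1n\|X\widehat\Delta\|_2^2\le 16B^2 s\delta^2/\kappa^2$. This is within the stated $16B^2s^2/\kappa^2$, since $s\le s^2$.

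\textbf{Step 3 (probabilistic part).} Reuse the high-probability events from \Cref{improved_in_sample_error_decay}. The first gives $\delta>\bar\delta$ under the stated tuning. The second bounds $\|\vv\varepsilon\|_q/n^{1/q}$ above and below by constants times $\sigma$, uniformly in $q\in[1,2]$, via concentration of $\|\vv\varepsilon\|_2/\sqrt n$ and H\"older/norm monotonicity. We add one more event (hence $1-5\gamma$) to control $n^{1/q}/\|\vv\varepsilon\|_q$ from above, so that $C$ and $B$ are $O(1)$. Then $\delta^2=O(\log(d/\gamma)/n)$ yields the $O(1/n)$ rate.

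\textbf{Main obstacle.} The main difficulty is Step 1: extracting from the supremum form a penalty that is genuinely linear in $\|\param\|_1$ with the correct sign structure. The suprema at $\widehat\param$ and $\param^*$ are attained at different $\vv t$, and the cross term $2t_i|r_i|\|\param\|_1$ couples residuals and norms. The approach is to evaluate the $\widehat\param$ objective at the maximizer $\vv t^*$ for $\param^*$, then control the cross terms $|r_i(\widehat\param)|-|r_i(\param^*)|\le |\x_i^\top\widehat\Delta|\le M\|\widehat\Delta\|_1$.
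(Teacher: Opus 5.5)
\textbf{Gap in Case~1.} Several parts of your plan are sound. Your Step~1 ingredients (evaluating the $\widehat{\param}$-objective at the maximizer $\vv{t}^*$ for $\param^*$, using $|\x_i^\top\widehat{\Delta}|\le M\|\widehat{\Delta}\|_1$, H\"older, and $\|\widehat{\param}\|_1\le C\|\param^*\|_1$) are exactly those of \Cref{lemma_in_sample_error_s_star} and \Cref{theorem_in_sample_controlled_by_Delta}. These give $\frac1n\|X\widehat{\Delta}\|_2^2\le\delta B\|\widehat{\Delta}\|_1$ with no cone condition. Your Case~2 is the paper's, and your $\sqrt{s}$ is slightly sharper than its $s$. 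Your case split is also legitimate: outside the cone, decomposability gives $\|\widehat{\Delta}\|_1+2(\|\param^*\|_1-\|\widehat{\param}\|_1)\le 3\|\widehat{\Delta}_S\|_1-\|\widehat{\Delta}_{S^c}\|_1<0$, which is the paper's Case~1 condition.

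The problem is how you handle Case~1. Bounding the remaining terms ``crudely by $\delta\|\widehat{\Delta}\|_1$-type terms'' and then using $\|\widehat{\Delta}\|_1\le(C+1)\|\param^*\|_1$ yields only $\frac1n\|X\widehat{\Delta}\|_2^2\le\delta B(C+1)\|\param^*\|_1=O(\delta)$, which is the slow rate. Your fallback through \Cref{improved_in_sample_error_decay} is also $O(\delta)$, so neither route can produce $4C^2\delta^2\|\param^*\|_1^2$. The root cause is that your Step~1 target misdescribes the structure. The $2MC$ part of $B$ comes from the adversarial cross term $2\|\widehat{\param}\|_1\langle|X\widehat{\Delta}|,\vv{t}^*\rangle$. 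That term is nonnegative and has no $\|\widehat{\Delta}_S\|_1-\|\widehat{\Delta}_{S^c}\|_1$ structure, so unlike in Lasso the negative part does not ``dominate'' and the error is not forced into the cone.

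\textbf{The missing idea.} In Case~1, keep the cross term intact and make the inequality self-bounding.
\begin{itemize}
\item In \Cref{lemma_in_sample_error_s_star}, the other three terms are $2\vv{\varepsilon}^\top X\widehat{\Delta}$, $2(\|\param^*\|_1-\|\widehat{\param}\|_1)\langle|\vv{\varepsilon}|,\vv{t}^*\rangle$ and $\|\vv{t}^*\|_2^2(\|\param^*\|_1^2-\|\widehat{\param}\|_1^2)$. Together they are at most $\delta n^{1/p}\|\vv{\varepsilon}\|_q\big(\|\widehat{\Delta}\|_1+2(\|\param^*\|_1-\|\widehat{\param}\|_1)\big)\le 0$.
\item This leaves $\|X\widehat{\Delta}\|_2^2\le2\|\widehat{\param}\|_1\langle|X\widehat{\Delta}|,\vv{t}^*\rangle$.
\item Now bound the inner product by Cauchy--Schwarz rather than through $M$: $\langle|X\widehat{\Delta}|,\vv{t}^*\rangle\le\|X\widehat{\Delta}\|_2\|\vv{t}^*\|_2\le\sqrt{n}\,\delta\|X\widehat{\Delta}\|_2$, since $\|\vv{t}^*\|_2\le n^{1/2-1/p}\|\vv{t}^*\|_p$.
\item Dividing by $\|X\widehat{\Delta}\|_2$ gives $\frac{1}{\sqrt n}\|X\widehat{\Delta}\|_2\le2\delta\|\widehat{\param}\|_1\le2\delta C\|\param^*\|_1$. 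Squaring gives the $4C^2$ term.
\end{itemize}

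\textbf{A caution on the first bullet.} Here $\|\param^*\|_1-\|\widehat{\param}\|_1\le0$, so the H\"older upper bound $\langle|\vv{\varepsilon}|,\vv{t}^*\rangle\le n^{1/p}\delta\|\vv{\varepsilon}\|_q$ that you propose for the linear coefficient points the wrong way. The paper's proof substitutes it anyway; this is harmless for $p\in\{2,\infty\}$, where $\vv{t}^*$ attains H\"older equality. For $2<p<\infty$ you need a lower bound instead:
\begin{itemize}
\item Comparing $\vv{t}^*$ with the H\"older-equality point gives $2\langle|\vv{\varepsilon}|,\vv{t}^*\rangle+\|\param^*\|_1\|\vv{t}^*\|_2^2\ge2n^{1/p}\delta\|\vv{\varepsilon}\|_q$.
\item Apply this to the sum of the second and third terms, which equals $(\|\param^*\|_1-\|\widehat{\param}\|_1)\big(2\langle|\vv{\varepsilon}|,\vv{t}^*\rangle+\|\vv{t}^*\|_2^2(\|\param^*\|_1+\|\widehat{\param}\|_1)\big)$.
\end{itemize}
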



\Cref{thm_fast_rate_main_theorem_v2} shows that the general $p$-Wasserstein linear regression ($2 \leq p \leq \infty$) can achieve the rate $O(1/n)$, similar to previous results obtained for square-root Lasso \citep{belloni_squareroot_2011} ($p=2$) and for adversarial linear regression \citep{xie2024high} ($p = \infty$). Moreover, just as in the slow rate, the tuning scheme $\delta = K M \sqrt{ \frac{\log(d/\gamma)}{n}}$ makes the method~pivotal.



\section{Small and large ambiguity balls}\label{additional_properties}
Here, we characterize the minimizer $\widehat{\param}$ of $V_\delta(\param)$ when the radius $\delta$ of the ambiguity set is small and large, respectively. These results generalize conditions from adversarial linear regression \citep[Theorem 1 and Proposition 3]{ribeiro_regularization_2023} to the general $p$-Wasserstein linear regression~($2 \leq~ p \leq \infty$). Importantly, these results also apply to square-root Lasso ($p=2$), which is novel to the best of our knowledge. We start with the large regime.

\textbf{Large $\delta$ regime.}
The first result shows that the zero solution $\param = 0$ minimizes $V_\delta(\param)$ for a sufficiently large radius $\delta \geq \delta_{L}$. In other words, for large ambiguity sets, the optimal solution is to be maximally conservative.

\begin{theorem}[Large $\delta$ regime]\label{th_when_beta_zero}
The zero solution $\param = 0$ minimizes $V_\delta(\param)$ if and only if
\begin{equation*}
    \delta \geq \delta_{L} := \frac{\|X^{\top} \y \|}{n^{1/p} \|\y\|_q}.
\end{equation*}
\end{theorem}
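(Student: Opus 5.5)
The plan is to use the robust quadratic form of \Cref{th_supremum_robust_risk} and reduce the question to first-order optimality of a convex function at the origin. Convexity of $V_\delta$ is available: for each fixed $\vv{t}\ge 0$ the map $\param \mapsto \frac1n\sum_i (|\x_i^\top\param-y_i|+t_i\|\param\|_*)^2$ is a square of a nonnegative convex function, hence convex, and a supremum of convex functions is convex. Therefore $\param=0$ is a minimizer if and only if the one-sided directional derivative $V_\delta'(0;\vv{v})$ is nonnegative for every direction $\vv{v}$.

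The derivative at $\param = 0$ can be computed directly. Write $\vv{r}(\param)=X\param-\y$ and, by homogeneity, restrict to $\|\vv{v}\|_*=1$. For small $s>0$,
\begin{equation*}
V_\delta(s\vv{v})=\sup_{\vv{t}} \frac1n\sum_i \big(|y_i - s\x_i^\top\vv{v}| + s t_i\big)^2 .
\end{equation*}
Expanding to first order in $s$ gives $\frac1n\|\y\|_2^2 + \frac{2s}{n}\sup_{\vv{t}}\sum_i\big(-\mathrm{sign}(y_i)y_i\x_i^\top\vv{v}+|y_i|t_i\big) + O(s^2)$. This uses $|y_i-s\x_i^\top\vv{v}| = |y_i| - s\,\mathrm{sign}(y_i)\x_i^\top\vv{v}$ when $y_i\neq0$. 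When $y_i=0$ the cross term vanishes, so the $y_i = 0$ terms contribute only at order $s^2$. The linear term is $-\y^\top X\vv{v} + \sup_{\vv{t}\ge0,\|\vv{t}\|_p\le n^{1/p}\delta}\sum_i|y_i|t_i$. By Hölder, this supremum equals $n^{1/p}\delta\|\y\|_q$.

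Hence, using $\sup_{\|\vv{v}\|_*=1}\y^\top X\vv{v}=\|X^\top\y\|$,
\begin{equation*}
V_\delta'(0;\vv{v}) = \tfrac{2}{n}\big(n^{1/p}\delta\|\y\|_q - \y^\top X\vv{v}\big),
\end{equation*}
and this is nonnegative for all $\vv{v}$ exactly when $\delta \ge \|X^\top\y\|/(n^{1/p}\|\y\|_q)$. The degenerate case $\y=0$ is handled separately: then $V_\delta\ge0=V_\delta(0)$ trivially, and the threshold is read as $0$ by convention.

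The main obstacle is rigorously interchanging the limit $s\to0^+$ with the supremum over $\vv{t}$. I would use a sandwich argument. For the lower bound, plug in the Hölder-optimal $\vv{t}^\star$, which is independent of $s$. For the upper bound, note that the feasible set of $\vv{t}$ is compact for $p<\infty$ and is a box for $p=\infty$, so the $O(s^2)$ remainder is uniformly bounded by $s^2\cdot\frac1n\sum_i(|\x_i^\top\vv{v}|+n^{1/p}\delta)^2$. Uniformity in $\vv{t}$ then follows.

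An alternative avoiding derivatives is the direct inequality $V_\delta(\param)\ge V_\delta(0)$. Take $\vv{t}^\star$ and bound $(|y_i-\x_i^\top\param|+t_i^\star\|\param\|_*)^2 \ge y_i^2 + 2|y_i|(t_i^\star\|\param\|_* - \mathrm{sign}(y_i)\x_i^\top\param)$ via $(a+b)^2\ge a^2+2ab$ and $|y_i-u|\ge |y_i|-\mathrm{sign}(y_i)u$. This proves the ``if'' direction globally, while the ``only if'' direction uses the derivative computation.
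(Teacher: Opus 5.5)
Your proof is correct, and its skeleton matches the paper's. Both use convexity of $V_\delta$ from the robust quadratic form and first-order optimality at $\param=\vv{0}$. Both also rely on the fact that every feasible $\vv{t}$ is active there, since $\phi(\vv{0},\vv{t})=\|\y\|_2^2$ for all $\vv{t}$, and on H\"older to get $\sup_{\vv{t}}\langle\vv{t},|\y|\rangle=n^{1/p}\delta\|\y\|_q$.

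The difference is the tool. The paper works with subdifferentials and invokes Bertsekas' generalized Danskin theorem. This gives $\partial(nV_\delta)(\vv{0})$ as the convex hull of $\partial_{\param}\phi(\vv{0},\vv{t})$ over all active $\vv{t}$, which is the $\|\cdot\|$-ball of radius $2n^{1/p}\delta\|\y\|_q$ centered at $-2X^{\top}\y$; the paper then checks whether this ball contains $\vv{0}$. You instead compute the one-sided directional derivative $V_\delta'(\vv{0};\vv{v})$, which is exactly the support function of that ball. You do this by an explicit expansion whose $O(s^2)$ remainder is uniform over the compact $\vv{t}$-set, so in effect you prove the needed instance of Danskin's theorem by hand.

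Your sufficiency argument has no counterpart in the paper. Setting $A_i:=|y_i-\x_i^{\top}\param|+t_i^\star\|\param\|_*$, you use $A_i^2\ge y_i^2+2|y_i|(A_i-|y_i|)$ at the H\"older-optimal $\vv{t}^\star$. This is calculus-free and gives the ``if'' direction globally.

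What each route buys:
\begin{itemize}
\item Your route is more elementary and self-contained. There are no hypotheses of an external theorem to verify, and you treat $y_i=0$ and $\y=\vv{0}$ explicitly; the latter makes $\delta_L$ a $0/0$ expression in the paper.
\item The paper's route yields the whole subdifferential. It also sets up the Danskin machinery that the paper reuses for the small-$\delta$ theorem, where the active set and the subgradient description feed directly into the dual characterization through $\vv{\alpha}$.
\end{itemize}

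One small slip: in your first-order expansion the cross term should be $-|y_i|\,\mathrm{sign}(y_i)\,\x_i^{\top}\vv{v}=-y_i\x_i^{\top}\vv{v}$. As written, $-\mathrm{sign}(y_i)\,y_i\,\x_i^{\top}\vv{v}$ equals $-|y_i|\x_i^{\top}\vv{v}$, which is wrong. Your next line already uses the correct $-\y^{\top}X\vv{v}$.
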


\textbf{Small $\delta$ regime.} For the small $\delta$ regime, we follow \citet{ribeiro_regularization_2023} and consider the overparametrized setting where we assume $X \in \R^{n \times d}$ to have full row rank and $n < d$. In this setting, there are multiple solutions to the original least-squares problem $X\param = y$, where a natural solution is to take the minimal $\| \cdot \|_N$-norm interpolator $\bar{\param}$ that solves $\min_{X\param = y} \|\param\|_N$ for a given norm~${\| \cdot \|_N}$. The next result shows that for a small enough radius $\delta \leq \delta_S$, the minimal $\| \cdot \|_*$-norm interpolator minimizes~$V_\delta(\param)$:

\begin{theorem}[Small $\delta$ regime]\label{th_when_beta_minimum_norm}
Let $X$ be of full row rank and $Q = \argmax_{\|X^T \vv{\alpha}\| \leq 1} \vv{\alpha}^{\top} \y$. Then the minimum $\|\cdot\|_*$-norm interpolator $\bar{\param}$ minimizes $V_\delta(\param)$ if and only if
\begin{equation*}
    \delta \leq \delta_S := 
    \begin{cases}
    \frac{1}{n\min_{\vv{\alpha}  \in Q}\|\vv{\alpha}\|_{\infty}} & \text{for $2<p\leq \infty$}\\
    \frac{1}{\sqrt{n}\min_{\vv{\alpha} \in Q}\|\vv{\alpha}\|_{2}} & \text{for $p=2$}
    \end{cases}
\end{equation*}
\end{theorem}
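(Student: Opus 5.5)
The plan is to turn the robust quadratic form of \Cref{th_supremum_robust_risk} into a min--max expression in which $\param$ enters through the residuals and $\|\param\|_*$ only linearly. Then both directions reduce to comparing $\sqrt{V_\delta}$ with the dual of the minimum-norm interpolation problem.

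\textbf{Step 1 (reformulation).} Write $r_i=\y_i-\x_i^\top\param$ and $a_i=|r_i|+t_i\|\param\|_*\ge 0$. Use $\big(\frac1n\sum_i a_i^2\big)^{1/2}=\sup_{\vv{w}\ge0,\|\vv{w}\|_2\le1}\frac{1}{\sqrt n}\sum_i w_i a_i$. Exchanging the two suprema and using H\"older, $\sup\{\sum_i w_it_i: t_i\ge0,\ \|\vv{t}\|_p\le n^{1/p}\delta\}=n^{1/p}\delta\|\vv{w}\|_q$, we get
\begin{equation*}
\sqrt{V_\delta(\param)}=\sup_{\vv{w}\ge 0,\ \|\vv{w}\|_2\le 1}\frac{1}{\sqrt n}\Big(\sum_{i=1}^n w_i|r_i|+n^{1/p}\delta\|\vv{w}\|_q\|\param\|_*\Big).
\end{equation*}
For the interpolator $\bar\param$ the residuals vanish. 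Since $\max_{\|\vv{w}\|_2\le1}\|\vv{w}\|_q=n^{1/q-1/2}$ for $q\le 2$, this gives $\sqrt{V_\delta(\bar\param)}=\delta\|\bar\param\|_*$. We also record the dual of minimum-norm interpolation:
\begin{equation*}
\|\bar\param\|_*=\max\{\vv{\alpha}^\top\y:\|X^\top\vv{\alpha}\|\le1\},
\end{equation*}
whose set of maximizers is $Q$.

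\textbf{Step 2 (sufficiency).} Fix $\vv{\alpha}\in Q$. For any $\param$, the dual-norm inequality gives
\begin{equation*}
\|\bar\param\|_*=\vv{\alpha}^\top(\y-X\param)+(X^\top\vv{\alpha})^\top\param\le\sum_i|\alpha_i||r_i|+\|\param\|_*.
\end{equation*}
It then suffices to exhibit a feasible $\vv{w}$ with $w_i\ge\sqrt n\,\delta|\alpha_i|$ and $n^{1/p}\delta\|\vv{w}\|_q/\sqrt n\ge\delta$. Multiplying the inequality above by $\delta$ then yields $\sqrt{V_\delta(\param)}\ge\delta\|\bar\param\|_*=\sqrt{V_\delta(\bar\param)}$.
\begin{itemize}
\item For $2<p\le\infty$, take $\vv{w}=\vv{1}/\sqrt n$. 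The norm condition holds with equality, and the coordinate condition is exactly $\delta\le 1/(n\|\vv{\alpha}\|_\infty)$.
\item For $p=2$, take $\vv{w}=|\vv{\alpha}|/\|\vv{\alpha}\|_2$. The coordinate condition is exactly $\delta\le1/(\sqrt n\|\vv{\alpha}\|_2)$.
\end{itemize}
Choosing $\vv{\alpha}\in Q$ to minimize the relevant norm gives the threshold $\delta_S$.

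\textbf{Step 3 (necessity), the main obstacle.} We must show that for $\delta>\delta_S$ some $\param$ strictly beats $\bar\param$. The first attempt is a minimax swap in the formula from Step 1. The expression is convex in $\param$ and concave in $\vv{w}$ (since $\|\vv{w}\|_q$ is concave on the nonnegative orthant for $q\le1$ is false in general, so this needs care). Alternatively, for each fixed $\vv{w}$, LP/conic duality gives
\begin{equation*}
\min_\param\Big(\sum_i w_i|r_i|+c\|\param\|_*\Big)=\max\{\vv{\alpha}^\top\y:|\alpha_i|\le w_i,\ \|X^\top\vv{\alpha}\|\le c\},\qquad c=n^{1/p}\delta\|\vv{w}\|_q .
\end{equation*}
Matching the value $\delta\|\bar\param\|_*$ then forces two things. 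First, $\|\vv{w}\|_q$ must attain its maximum over the feasible set. For $q<2$ this forces $\vv{w}=\vv{1}/\sqrt n$ uniquely by strict H\"older; for $p=2$ it forces $\|\vv{w}\|_2=1$. Second, $\vv{\alpha}/(\sqrt n\delta)$ must lie in $Q$ with $|\alpha_i|\le w_i$. These are precisely the coordinate conditions of Step 2, which fail when $\delta>\delta_S$. A compactness argument over $\vv{w}$ then gives a strict gap, so $\min_\param\sqrt{V_\delta}<\delta\|\bar\param\|_*$.

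Justifying the min--max exchange is where I expect the difficulty. I would try Sion's theorem, after rewriting $\|\vv{w}\|_q\|\param\|_*$ via $\sup_{t}\sum_i w_it_i$ so that the objective is bilinear in $(\vv{w},\vv{t})$ for fixed $\param$. Failing that, I would use a direct subgradient/directional-derivative argument at $\bar\param$ along $\param=\bar\param+s\vv{v}$ for a suitable direction $\vv{v}$.
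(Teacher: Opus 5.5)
Your Step 2 (sufficiency) is correct and takes a different, more elementary route than the paper. Rather than computing $\partial V_\delta(\bar\param)$ through a Danskin-type theorem, you lower-bound $\sqrt{V_\delta(\param)}$ for every $\param$. You do this by inserting one explicit $\vv{w}$ into your homogeneous sup-representation and combining it with a dual certificate $\vv{\alpha}\in Q$.

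\textbf{The gap is in Step 3 (necessity).} That step rests entirely on the exchange $\min_{\param}\sup_{\vv{w}} f(\param,\vv{w})=\sup_{\vv{w}}\min_{\param} f(\param,\vv{w})$, where $f(\param,\vv{w})=\frac{1}{\sqrt n}\bigl(\sum_i w_i|r_i|+n^{1/p}\delta\|\vv{w}\|_q\|\param\|_*\bigr)$, and this exchange is never justified.
\begin{itemize}
\item Your claim that $f$ is concave in $\vv{w}$ is false. $\|\vv{w}\|_q$ is a norm, so $f(\param,\cdot)$ is convex. For $1<q\le 2$ it is not even quasiconcave: at $\param=\bar\param$ it is a positive multiple of $\|\vv{w}\|_q$, whose superlevel sets in the nonnegative orthant are not convex.
\item Your suggested patch does not help. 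Writing $\|\vv{w}\|_q\|\param\|_*=\sup_{\vv{t}}\sum_i w_it_i\|\param\|_*$ produces an objective that is bilinear in $(\vv{w},\vv{t})$. A bilinear function is not jointly (quasi)concave, so Sion's hypotheses still fail.
\end{itemize}
As written, the ``only if'' direction is not established.

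There are two ways to close it.

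\textbf{(a) Reparametrize $w_i=\gamma_i^{1/q}$.} The feasible set becomes $\{\vv{\gamma}\ge 0:\sum_i\gamma_i^{2/q}\le 1\}$, which is convex and compact since $2/q\ge 1$. The objective becomes $\frac{1}{\sqrt n}\bigl(\sum_i\gamma_i^{1/q}|r_i|+n^{1/p}\delta\|\param\|_*(\sum_i\gamma_i)^{1/q}\bigr)$, which is concave in $\vv{\gamma}$ since $1/q\le 1$. This is the device behind the paper's saddle-point formulation (\Cref{th:saddle_point_form}). Sion then applies, and the rest of your Step 3 goes through. For the final compactness step, note that $\vv{w}\mapsto\min_{\param}f(\param,\vv{w})$ is an infimum of continuous functions, hence upper semicontinuous. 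Its supremum over the compact feasible set is therefore attained, and so is strictly below $\delta\|\bar\param\|_*$ when $\delta>\delta_S$.

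\textbf{(b) The directional-derivative route, which is essentially the paper's proof.} For $p>2$ and $\bar\param\neq 0$, the maximizing $\vv{w}$ at $\bar\param$ is unique, namely $\vv{1}/\sqrt n$ (in the paper's variables, $T(\bar\param)=\{\delta\vv{1}\}$). By Danskin's theorem, $\sqrt{V_\delta}$ then has the same directional derivatives at $\bar\param$ as the convex function $\param\mapsto\frac1n\|X\param-\y\|_1+\delta\|\param\|_*$. Hence $\bar\param$ minimizes $V_\delta$ iff it minimizes this function. By your duality formula, this happens iff some $\vv{\alpha}\in Q$ satisfies $\|\vv{\alpha}\|_\infty\le 1/(n\delta)$. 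For $p=2$, the closed form $\sqrt{V_\delta(\param)}=\frac{1}{\sqrt n}\|X\param-\y\|_2+\delta\|\param\|_*$ from \Cref{th_special_cases} reduces both directions to the same kind of duality computation.

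The paper phrases all of this as the condition $\vv{0}\in\partial V_\delta(\bar\param)$ and so gets both directions at once. Your certificate argument is cleaner for sufficiency, but necessity needs one of these two ingredients.
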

Note first that the set $\{\vv{\alpha}: \|X^{\top} \vv{\alpha}\| \leq 1\}$ is compact, and hence both $\min_{\vv{\alpha}  \in Q}\|\vv{\alpha}\|_{\infty}$ and $\min_{\vv{\alpha}  \in Q}\|\vv{\alpha}\|_{2}$ are finite. Thus, \emph{interpolation always occurs} for small enough $\delta$. Moreover, the interpolation threshold $\delta_S$ \emph{is the same for all} $2<p\leq \infty$ while it is \emph{higher} for $p=2$:
\begin{equation*}
    \frac{1}{\sqrt{n}\min_{\vv{\alpha}  \in Q}\|\vv{\alpha}\|_{2}} \geq \frac{1}{n\min_{\vv{\alpha}  \in Q}\|\vv{\alpha}\|_{\infty}},
\end{equation*}
by the norm inequality $\|\vv{\alpha}\|_2 \leq \sqrt{n} \|\vv{\alpha}\|_\infty$. In fact, this inequality is typically strict since the norm inequality is typically strict (see \Cref{remark_computing_delta_S} in the appendix: for $\| \cdot \| = \| \cdot \|_2$, the set $Q$ is a singleton $\bar{\vv{\alpha}}$, and varying $X$ and $\y$ generally gives different $\infty$-norms and $2$-norms). That is, \emph{interpolation occurs sooner} for square-root Lasso ($p=2$) than for $2<p\leq \infty$ (e.g., adversarial linear regression). We also stress that $\delta_S$ is easy to compute via a convex program, see \Cref{remark_computing_delta_S} in the appendix.

\section{Numerical solvers}\label{efficient_solvers}

We provide two solvers tailored specific for Wasserstein DRO linear regression. The first solver, called the \emph{saddle-point solver}, considers a novel saddle-point formulation of the robust risk $\min_{\param} n V_{\delta}(\param) = \min_{\param} \sup_{\vv{\gamma} \geq 0} H(\param,\vv{\gamma})$ with convex-concave function $H(\param,\vv{\gamma})$ on the form:
\begin{equation*}
H(\param,\vv{\gamma}) :=
 n^{1/p} \delta \|\param\|_* \|\vv{\gamma}\|_1^{1/q}+\sum_{i=1}^n \gamma^{1/q}_i |\x_i^T \param - y_i | - \gamma_i^{2/q}/4.
\end{equation*}
See appendix for a derivation. We solve this saddle problem using disciplined saddle programming \citep{schiele2024tmlr-disciplined}, an extension of disciplined convex programming \citep{diamond_cvxpy_2016}. 

\begin{wrapfigure}[13]{r}{0.52\textwidth}\vspace{-13pt}
  \centering
  \includegraphics[width=0.48\textwidth]{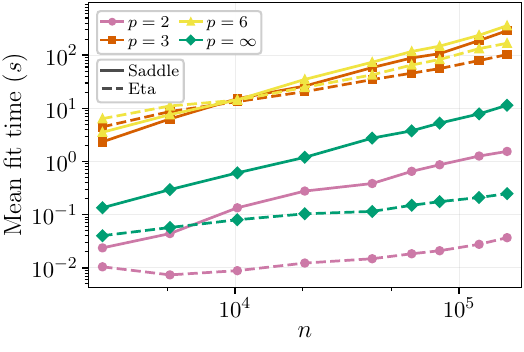}
  \vspace{-5pt}
  \caption{Mean computation times over 10 repetitions for the saddle-point solver (solid) and the $\eta$-trick solver (dashed) in the fast-rate experiment.}
  \label{fig_eta_saddle_comparison}
\end{wrapfigure}
The second solver, called the \emph{$\eta$-trick solver}, uses what is sometimes referred to as the $\eta$-trick~\citep{Itrick019} to iteratively alternative between solving a weighted ridge regression problem and updating the weights in closed-form. More precisely, we fix $\lVert  \cdot \rVert = \lVert  \cdot \rVert_{\infty}$ and rewrite $\|\param\|_* = \|\param\|_1$ in \eqref{supremum_form_linear_regression} as a minimum of a weighted quadratic form, with weights $\boldeta$. This becomes $V_{\delta}(\param) = \inf_{\boldeta} \Phi(\param,\vv{t}^{\star}, \boldeta)$ for a function $\Phi$ that is jointly convex in $(\param, \boldeta)$, and $\vv{t}^{\star} = \vv{t}^{\star}(\param)$ attains the supremum in \eqref{supremum_form_linear_regression}. We sequentially update these three variables repeatedly until convergence:
\begin{enumerate}[leftmargin=1.5em] \vspace{-6pt}
    \item Given $\param$, we update $\vv{t}^{\star}$;
    \item Given $\vv{t}^{\star}$ and $\param$, we compute the next $\boldeta$;
    \item Given new $\vv{t}^{\star},\boldeta$, we compute the next $\param$, and so on.
\end{enumerate}\vspace{-6pt}
Crucially, all these subproblems can be solved fast: the ${t}^{\star}$-update can be solved by a scalar formulation, the $\boldeta$-update has a closed-form solution and the $\param$-update is weighted ridge regression. The idea follows closely~\citep{pmlr-v258-ribeiro25a} proposed for adversarial linear training. We provide details about implementation and convergence in the appendix. The result is the improved speed we show in the numerical experiments. 

\begin{figure}[t]
\centering
\begin{subfigure}{.47\textwidth}
    \centering
    \includegraphics[width=1\linewidth]{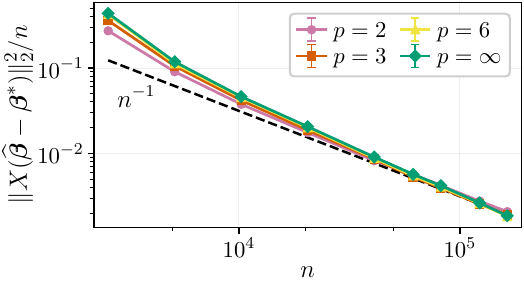}
\end{subfigure}
\quad
\begin{subfigure}{.47\textwidth}
    \centering
    \includegraphics[width=1\linewidth]{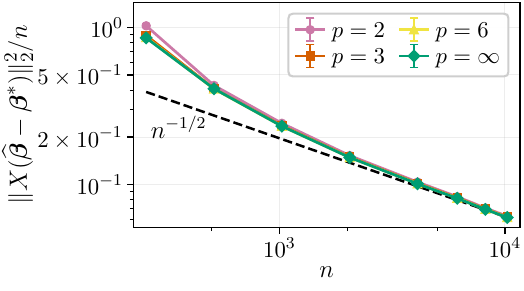}
\end{subfigure}
\caption{Fast rates (left) and slow rates (right) for $p \in \{2,3,6,\infty\}$. Both show a clear convergence toward the predicted rates $O(n^{-1})$ and $O(n^{-1/2})$, respectively. The curves are mean over 10 runs with $\pm1$ standard deviation bands.}
\label{fig_fast_and_slow_rate}
\end{figure}

\section{Numerical experiments}\label{numerical_experiments}

This section showcases our findings through simulations. We start with the fast rate $O(n^{-1})$ and the slow rate $O(n^{-1/2})$. Then, we illustrate the small and large $\delta$ regimes. We fix $\|\cdot \| = \| \cdot \|_{\infty}$, and let $p$ vary as $p\in\{2,3,6,\infty\}$. For the rates, we use 10 trials for each $(n,p)$ with $\pm1$ standard deviation bands (although, so tight they are barely visible). All simulations were run on a MacBook Air M3 in the matter of hours. For Figure~\ref{fig_fast_and_slow_rate} and Figure~\ref{fig_paper_extreme_delta}, we used the saddle-point solver (see \Cref{efficient_solvers}). In Figure~\ref{fig_eta_saddle_comparison} we compare the performance of the solvers. See the appendix for additional~simulations.

\begin{wrapfigure}{tr}{0.54\textwidth}
  \centering
  \includegraphics[width=0.53\textwidth]{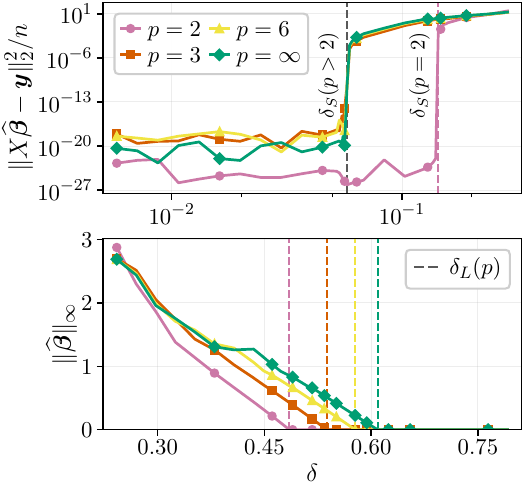}
  \caption{Large $\delta$ (bottom) and small $\delta$ (top) in the overparametrized regime, with zero solution and interpolation for $\delta \geq \delta_L$ and $\delta \leq \delta_S$, respectively.}
  \label{fig_paper_extreme_delta}
\end{wrapfigure}

\textbf{Fast rate.} For the fast rate, we consider a simple setup $y_i=\x_i^{\top} \param^* + \varepsilon_i$ with $d=10$, true parameter with $\beta^*_i = 3$ for the first 5 entries and otherwise zero (i.e., $s=5$), Gaussian noise $\vv{\varepsilon} \sim N(0,\sigma^2I)$ with $\sigma=0.5$, and entries $x_{ij} \in X$ independently and uniformly sampled from $[-1,1]$ (so $M=1$ in \Cref{thm_fast_rate_main_theorem_v2}). Here, the \emph{independent} sampling of $X$ ensures that $X \in \mathrm{RE}(s,3)$ in \Cref{thm_fast_rate_main_theorem_v2}; see the appendix for a standard derivation. We compute $\widehat{\param}\in\arg\min_\beta V_\delta(\param)$ and plot the in-sample error $\|X(\widehat{\param}-\param^*)\|_2^2/n$ as a function of $n$, shown in Figure \ref{fig_fast_and_slow_rate} (left). We clearly see the rate $O(n^{-1})$ as predicted by \Cref{thm_fast_rate_main_theorem_v2}. This rate hinges on the fact that $X \in \mathrm{RE}(s,3)$ due to the independent sampling. In contrast, if the entries in $X$ are \emph{correlated}, they may violate the RE condition and instead yield an $O(n^{-1/2})$ rate, as the next example illustrates.

\textbf{Slow rate.} For the slow rate, we consider $d=2$, $\param^* = [2, -1]^\top$, noise $\vv{\varepsilon} \sim N(0,\sigma^2I)$ with $\sigma=0.5$, and entries of $X$ still uniformly bounded but now \emph{correlated}. This correlation breaks the $\mathrm{RE}$ condition (see the appendix for details on $X$). Thus, we can only guarantee an $O(n^{-1/2})$ rate by \Cref{improved_in_sample_error_decay}, also seen in Figure \ref{fig_fast_and_slow_rate} (right).

\textbf{Small and large $\boldsymbol{\delta}$.} We conclude with the extreme regimes of $\delta$. We see that for large $\delta \geq \delta_L$ in Figure \ref{fig_paper_extreme_delta}, $\widehat{\param}$ becomes the zero solution as in \Cref{th_when_beta_zero}, while small $\delta \leq \delta_S$ in the overparametrized setting results in interpolation as in \Cref{th_when_beta_minimum_norm}. See the appendix for setup details.

\textbf{Saddle-point and $\boldsymbol{\eta}$-trick solver.} In the fast-rate experiment, both solvers achieve nearly identical prediction errors and robust-risk values; see the appendix for the full comparison. In Figure~\ref{fig_eta_saddle_comparison}, we see that the $\eta$-trick solver is faster at larger sample sizes, with the largest gains for $p=2$ and~$p=\infty$.

\section{Conclusion}\label{conclusion}

In this paper, we have studied Wasserstein DRO linear regression with $2 \leq p \leq \infty$. We first proved an equivalent robust quadratic form of the robust risk. Using this form, we generalized several important properties from the two special cases square-root Lasso ($p=2$) and adversarial linear regression ($p = \infty$) to the general $2 \leq p \leq \infty$ setting, including a slow error rate $O(n^{-1/2})$ in general, a fast rate $O(n^{-1})$ using a restricted eigenvalue condition, the pivotal property, and equivalent solutions when the ambiguity radius $\delta$ is small and large, respectively. We also proposed efficient numerical solvers and validated our theoretical findings via simulations.  

Here, we only study Wasserstein DRO linear regression. An interesting future direction is to investigate conditions under which the analysis in this paper applies to other loss functions or to nonparametric regression problems. For instance, some of the adversarial techniques we used have already been adapted to kernel methods~\citep{ribeiro_kernel_2025}.  Another future direction is to consider overparametrized regimes where one lets both $d$ and $n$ tend to infinity, under some fixed finite ratio $\rho := d/n$. The analytic properties in this paper do not explicitly depend on $d$ (except for \Cref{thm_fast_rate_main_theorem_v2}, where $s=d$ if no sparse solution exists), and may therefore naturally serve as a start. Finally, the deterministic error bounds in this paper depend only very mildly on $p$, with error rates independent of $p$. This is also reflected in the numerical simulations. It would be interesting if one could find cases studies in which different $p$ results in significantly different error (which could still be possible since we only provide upper bounds on the error).


\textbf{AI use statement.} In this work, we used generative AI tools to implement methods. We have also partially used generative AI tools to provide critical ingredients for proving mathematical claims. We have not used generative AI tools to help develop theoretical models or conceptual frameworks, formulate mathematical claims, assist in the writing of proofs and interpret results. We have also used AI tools to identify relevant literature (beyond the manual literature survey), search for information and improve readability. We have reviewed all AI-assisted work. In particular, the paper including all proofs are fully written and checked by the authors and the LLM-generated code is verified and tested for correctness.

\textbf{Reproducibility statement.} We provide the source code, model implementations and evaluation procedures in the anonymous repository linked (see the link on page 2).



\acksection
This work was supported by eSSENCE, ScilifeLab and the Wallenberg AI, Autonomous Systems and
Software Program (WASP), funded by the Knut and Alice Wallenberg Foundation. Computational resources were provided by the National Academic Infrastructure for Supercomputing in Sweden (NAISS), funded by the Swedish Research Council.

\bibliographystyle{iclr2027_conference}



\clearpage
\appendix
\renewcommand{\theHequation}{app.\theequation} 
\pagenumbering{roman}
\thispagestyle{empty}
\onecolumn
\etocdepthtag.toc{app}

{\hypersetup{allcolors=black} 

\begin{center}
    {\LARGE Distributionally robust linear regression \\ through the lens of adversarial training} \\[0.4cm]
    {\LARGE Appendix} \\[0.5cm]

    \begin{minipage}{0.9\textwidth}
        \itshape
    In this appendix, we provide additional material supporting the main text, in particular, all the proofs. The structure follows the order of the sections in the main text.
    \end{minipage}
\end{center}

\vspace{1cm}
\hrule
\vspace{0.3cm}

\etocsettagdepth{main}{-1}
\etocsettagdepth{app}{2}

\etocsettocstyle{\section*{Contents}}{}

\begin{center}
\begin{minipage}{0.9\textwidth}
    \etocarticlestyle
    \tableofcontents
\end{minipage}
\end{center}

\vspace{0.5cm}
\hrule
\vspace{1cm}

\newpage
}

\setcounter{equation}{0}
\renewcommand{\theequation}{A.\arabic{equation}}
\setcounter{figure}{0}
\renewcommand{\thefigure}{A.\arabic{figure}}


\section{Background: Additional details}

In this section, we provide additional details of Section \ref{background}.

\textbf{Proof of \Cref{remark_p_less_2_blows_up}.} We show that $V_\delta(\param) = \infty$ for $\param \neq 0$ if $1 \leq p <2$ as stated in \Cref{remark_p_less_2_blows_up}. This follows from \Cref{lemma_robust_risk_1D_formulation} by noting that the supremum inside the sum is infinite, since the quadratic term with $t_i$ ($\|\param\|_* \neq 0$) increases faster than $t_i^p$ (since $p<2$). Thus, $V_\delta(\param) = \infty$.

\section{Equivalent forms: Proofs and additional details}

In this section, we provide the proofs and additional details of Section \ref{equivalent_forms}.

\subsection{Proof of Theorem \ref{th_supremum_robust_risk}}

To prove \Cref{th_supremum_robust_risk}, we will use the following lemma that reduces the robust risk to several scalar problems:
\begin{lemma}\label{lemma_robust_risk_1D_formulation}
The robust risk of Wasserstein DRO linear regression for $1 \leq p<\infty$~equals
\begin{equation}\label{eq_lemma_robust_risk_1D_formulation}
  n V_\delta({\param})
  = \min_{\lambda \ge 0}
    \left[
      n\delta^p\lambda
      + \sum_{i=1}^{n} \sup_{t_i \ge 0}
        \Big\{ \big( \lvert r_i \rvert + t_i \lVert {\param} \rVert_* \big)^2 - \lambda t_i^p \Big\}
    \right].
\end{equation}
with $r_i := \x_i^\top\param-y_i$. 
\end{lemma}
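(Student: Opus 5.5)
We start from the strong duality formula of \Cref{optimal_transport_dro_dual} with $h_{\param}(\x,y) = (\param^\top\x - y)^2$ and the transport cost $\|\z'-\z\|^p$, where $\|\z'-\z\| = \|\x'-\x\| + \infty|y'-y|$. Multiplying \eqref{eq:robust_risk_intermediate_form} by $n$ and writing the empirical expectation as a sum gives
\begin{equation*}
nV_\delta(\param) = \min_{\lambda\ge 0}\Big[n\lambda\delta^p + \sum_{i=1}^n \sup_{(\x',y')}\big\{(\param^\top\x'-y')^2 - \lambda\|\z'-\z_i\|^p\big\}\Big].
\end{equation*}
The $\infty|y'-y_i|$ term gives cost $+\infty$ whenever $y'\neq y_i$. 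For $\lambda\ge 0$, any such point contributes $-\infty$ (or cannot be the supremum, since the choice $y'=y_i$ is always finite). The convention $0\cdot\infty$ needs a brief remark when $\lambda=0$; I would treat it by defining the cost as a hard constraint $y'=y_i$. So each inner supremum reduces to one over $\x'$ alone with $y'=y_i$. Writing $\x' = \x_i + \vv{v}$, the $i$-th term becomes
\begin{equation*}
\sup_{\vv{v}\in\R^d}\big\{(r_i + \param^\top\vv{v})^2 - \lambda\|\vv{v}\|^p\big\}, \qquad r_i = \x_i^\top\param - y_i.
\end{equation*}

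Next I reduce each inner problem to a scalar one by splitting into magnitude and direction. Set $\vv{v} = t\vv{u}$ with $t\ge 0$ and $\|\vv{u}\|\le 1$. For fixed $t$, the penalty $\lambda t^p$ does not depend on $\vv{u}$, so we maximize $(r_i + t\,\param^\top\vv{u})^2$ over the unit ball. The range of $\param^\top\vv{u}$ is exactly $[-\|\param\|_*,\|\param\|_*]$, by the definition of the dual norm together with compactness of the unit ball in finite dimensions. A square of an affine function is maximized at an endpoint, and choosing the sign aligned with $r_i$ gives the maximum $(|r_i| + t\|\param\|_*)^2$. Taking the supremum over $t\ge0$ then yields exactly the summand in \eqref{eq_lemma_robust_risk_1D_formulation}.

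Splitting $\sup_{\vv{v}} = \sup_{t\ge0}\sup_{\|\vv{u}\|\le1}$ is legitimate because every $\vv{v}$ has such a representation (take $t=\|\vv{v}\|$). No issue arises with the penalty, since $\|\vv{v}\| = t\|\vv{u}\| \le t$ and we may restrict to $\|\vv{u}\|=1$ without loss; the maximizer over the ball lies on its boundary anyway.

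The main point needing care is the justification that the minimum over $\lambda$ is attained and that infinite values are handled correctly. For $p<2$, or for small $\lambda$ when $p=2$, the inner suprema may be $+\infty$; this is consistent with the formula, with the min taken over the extended reals. I would simply inherit attainment from \Cref{optimal_transport_dro_dual}, which is applicable since $\E_{\empProb}[h_{\param}]$ is finite. The algebraic reduction itself is routine.
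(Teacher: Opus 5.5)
Your proof is correct and follows the paper's argument essentially step for step. Both apply the strong duality of \Cref{optimal_transport_dro_dual}, use the infinite cost to force $y'=y_i$, center $\x'=\x_i+\vv{v}$, split $\vv{v}$ into a magnitude $t\ge 0$ and a unit direction, and evaluate the direction supremum via the dual norm to obtain $(|r_i|+t\|\param\|_*)^2$. Your explicit treatment of the $\lambda=0$ convention, of the $p$-th power in the transport cost, and of possibly infinite values for $p<2$ is, if anything, slightly more careful than the paper's write-up.
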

\Cref{lemma_robust_risk_1D_formulation} is a direct application of Theorem 4 (ii) in \citet{shafiee2026nash}. For completeness, we provide a short proof here for the linear regression case:
\begin{proof}[Proof of \Cref{lemma_robust_risk_1D_formulation}]
We start from \Cref{optimal_transport_dro_dual}, where we note that the expectation in \eqref{eq:robust_risk_intermediate_form} turns into a sum due to the empirical distribution $\empProb := \frac{1}{n} \sum_{i=1}^n \delta_{\z_i}$, yielding:
\begin{equation}\label{eq_towards_scalar_problem_start}
  n V_\delta(\param) = 
  \min_{\lambda \ge 0}
    \left[
      n\delta^p \lambda
      + \sum_{i=1}^{n} \sup_{\z' \in \mathcal{Z}}
        \Big\{ h_{\param}(\z') - \lambda \|\z'-\z_i\| \Big\}
    \right].
\end{equation}
In our case, $\mathcal{Z} = \R^{d+1}$ and $h_{\param} : \R^{d+1} \to \R$ is the square loss combined with the linear predictor model, i.e.,
for $\z = (\x,y)$, we have $h_{\param}(\z) = \big({\param}^{\top} \x - y\big)^2$, and $\|\z'-\z\|$ with $z=(\x,y)$ and $z' = (\x',y')$ equals
\begin{equation*}
      \|\z'-\z\| =
  \begin{cases}
    \lVert \x' - \x \rVert ^p & \text{if } y = y', \\[2pt]
    +\infty & \text{otherwise.}
  \end{cases}
\end{equation*}
Since $\|\z'-\z\| = \infty$ whenever $y \neq y'$, the supremum in \eqref{eq_towards_scalar_problem_start} never moves $\z'$ away from $\z_i$ in the $y$-coordinate.\footnote{Indeed, such a deviation would let the outer minimization in \eqref{eq_towards_scalar_problem_start} pick any $\lambda>0$, resulting in an $-\infty$ cost due to $\|\z'-\z\| = \infty$, which is strictly suboptimal for the inner supremum (since the zero deviation $\z' = \z$ results in a bounded cost).} In other words, we have that
\begin{equation*}
  n V_\delta({\param})
  = \min_{\lambda \ge 0}
    \left[
      n\delta^p \lambda
      + \sum_{i=1}^{n} \sup_{\x' \in \R^d}
        \Big\{ \big({\param}^{\top} \x' - y_i\big)^2 - \lambda \lVert \x' - \x_i \rVert ^p \Big\}
    \right].
\end{equation*}
Now center the problem by letting $\boldsymbol{u}_i = \x' - \x_i$. Then
\[
  {\param}^{\top} \x' - y_i
  = {\param}^{\top}(\boldsymbol{u}_i + \x_i) - y_i
  = {\param}^{\top} \x_i - y_i + {\param}^{\top} \boldsymbol{u}_i
  = r_i + {\param}^{\top} \boldsymbol{u}_i ,
\]
and therefore
\[
  \big({\param}^{\top} \x' - y_i\big)^2 - \lambda \lVert \x' - \x_i \rVert ^p
  = \big(r_i + {\param}^{\top} \boldsymbol{u}_i \big)^2 - \lambda \lVert \boldsymbol{u}_i \rVert ^p .
\]
Thus, we have
\begin{equation*}
  n V_\delta({\param})
  = \min_{\lambda \ge 0}
    \left[
      n\delta^p \lambda
      + \sum_{i=1}^{n} \sup_{\boldsymbol{u} \in \R^d}
        \Big\{ \big(r_i + {\param}^{\top} \boldsymbol{u}\big)^2 - \lambda \lVert {\boldsymbol{u}} \rVert ^p \Big\}
    \right],
\end{equation*}
where we drop the subscript on $\boldsymbol{u}$ for convenience. We note that the penalty $\lambda \lVert {\boldsymbol{u}} \rVert ^p$ depends only on the magnitude of $\boldsymbol{u}$, and thus its
direction is irrelevant to the cost. To decompose direction and magnitude, let $\boldsymbol{u} = t \boldsymbol{v}$ with $t \ge 0$ and
$\lVert \boldsymbol{v} \rVert = 1$. Then, the supremum can be split into two parts:
\[
  \sup_{t \ge 0} \; \sup_{\lVert \boldsymbol{v} \rVert  = 1}
    \Big\{ \big(r_i + t{\param}^{\top} \boldsymbol{v}\big)^2 - \lambda t^p \Big\}
  = \sup_{t \ge 0}
    \left[ \sup_{\lVert \boldsymbol{v} \rVert  = 1} \Big\{ \big(r_i + t{\param}^{\top} \boldsymbol{v}\big)^2 \Big\} - \lambda t^p \right].
\]
The remaining inner problem $\sup_{\lVert \boldsymbol{v} \rVert  = 1} (r_i + t{\param}^{\top} \boldsymbol{v})^2$ can be
solved in closed form:
\begin{equation*}
    \sup_{\lVert \boldsymbol{v} \rVert  = 1} \Big\{ \big(r_i + t{\param}^{\top} \boldsymbol{v}\big)^2 \Big\} = 
    \sup_{\lVert \boldsymbol{v} \rVert  = 1} \Big\{ \big(|r_i| + t{\param}^{\top} \boldsymbol{v}\big)^2 \Big\} =
     \big(|r_i| + t \sup_{\lVert \boldsymbol{v} \rVert  = 1} \{ {\param}^{\top} \boldsymbol{v} \} \big)^2 =
     \big( \lvert r_i \rvert + t \lVert {\param} \rVert_* \big)^2,
\end{equation*}
where the first equality is due to symmetry, the second holds since $w \mapsto (a+tw)^2$ is increasing for $a,t \geq 0$, and the third holds since $\lVert {\param} \rVert_* = \sup_{\lVert \boldsymbol{v} \rVert  = 1} \{ {\param}^{\top} \boldsymbol{v} \}$. This leaves us with the scalar~problem
\begin{equation*}
  n V_\delta({\param})
  = \min_{\lambda \ge 0}
    \left[
      n\delta^p\lambda
      + \sum_{i=1}^{n} \sup_{t \ge 0}
        \Big\{ \big( \lvert r_i \rvert + t \lVert {\param} \rVert_* \big)^2 - \lambda t^p \Big\}
    \right],
\end{equation*}
completing the proof.
\end{proof}
We now prove \Cref{th_supremum_robust_risk}:
\begin{proof}[Proof of \Cref{th_supremum_robust_risk}]
We first consider the case $p < \infty$. We want to show that
\begin{equation}
F(\param) := \sup_{t_i \geq 0, \sum_{i=1}^n t_i^p \leq n \delta^p} \sum_{i=1}^n \big(|r_i|+t_i \|\param\|_*\big)^2. 
\label{eq:theorem_2_dual_cleaned}
\end{equation}
is equal to $nV_{\delta}(\param)$. For $\delta=0$, we trivially have $F(\param) = nV_{\delta}(\param) = \sum_{i=1}^n r_i^2$ using \eqref{eq_lemma_robust_risk_1D_formulation}. 
For the case $\delta>0$, we show equality using standard strong duality theory \cite[Chapter 5]{Boyd_Vandenberghe_2004_1}. More precisely, consider \eqref{eq:theorem_2_dual_cleaned}, and apply the change of variable $\nu_i = t_i^p$, leading to
\begin{align*}
    F(\param) = 
    \sup_{\nu_i \geq 0, \sum_{i=1}^n \nu_i \leq n \delta^p} \sum_{i=1}^n \big(|r_i|+\nu_i^{1/p} \|\param\|_*\big)^2 = \\
    \sup_{\nu_i \geq 0, \sum_{i=1}^n \nu_i \leq n \delta^p} \sum_{i=1}^n
    \underbrace{r_i^2 + 2 \lvert r_i \rvert \|\param\|_* \nu_i^{1/p} + \|\param\|_*^2 \nu_i^{2/p}}_{\phi_i(\nu_i) :=} =
    \sup_{\nu_i \geq 0, \sum_{i=1}^n \nu_i \leq n \delta^p} \sum_{i=1}^n \phi_i(\nu_i).
\end{align*}
Here, $\phi_i(\nu_i) = r_i^2 + 2 \lvert r_i \rvert \|\param\|_* \nu_i^{1/p} + \|\param\|_*^2 \nu_i^{2/p}$ is concave in $\nu_i \geq 0$ since $p \geq 2$ implies $1/p \in [0,1/2]$ and $2/p \in [0,1]$. In other words, $F(\param)$ is a concave maximization over an affine domain. Equivalently, to work with the more standard convex setting \cite[Chapter 5]{Boyd_Vandenberghe_2004_1}, we have that
\begin{equation}\label{eq_th2_dual_convex_form}
    \tilde{F}(\param) := -F(\param) = \inf_{\nu_i \geq 0, \sum_{i=1}^n \nu_i \leq n \delta^p} \sum_{i=1}^n -\phi_i(\nu_i)
\end{equation}
is a convex minimization over an affine domain. The dual problem of \eqref{eq_th2_dual_convex_form} with respect to the constraint $\sum_{i=1}^n \nu_i \leq n \delta^p$ (leaving $\nu_i \geq 0$ as part of the domain) is\footnote{We compactly write $\vv{\nu} \geq 0$ meaning $\nu_i \geq 0$ for each $i$.}
\begin{align}\label{eq_th2_dual2_convex_form}
    \sup_{\lambda \geq 0} \inf_{\vv{\nu} \geq 0} L(\nu,\lambda) := \sup_{\lambda \geq 0} \inf_{\vv{\nu} \geq 0} \left [ \sum_{i=1}^n (-\phi_i(\nu_i)) + \lambda \left ( \sum_{i=1}^n (\nu_i)-n\delta^{p} \right ) \right]
\end{align}
Moreover, $\nu_i = 0$ yields $\sum_{i=1}^n \nu_i =0 < n \delta^p$ for $\delta>0$, so Slater's condition holds. Therefore, strong duality holds \cite[Chapter 5]{Boyd_Vandenberghe_2004_1} and hence \eqref{eq_th2_dual_convex_form} is equal to \eqref{eq_th2_dual2_convex_form}. Equivalently,
\begin{align*}
    F(\param) = -\sup_{\lambda \geq 0} \inf_{\vv{\nu} \geq 0} \left [ \sum_{i=1}^n (-\phi_i(\nu_i)) + \lambda \left ( \sum_{i=1}^n (\nu_i)-n\delta^{p} \right ) \right] = \\
    \inf_{\lambda \geq 0} \sup_{\vv{\nu} \geq 0} \left [ \sum_{i=1}^n (\phi_i(\nu_i)) + \lambda \left ( n\delta^{p} - \sum_{i=1}^n \nu_i \right ) \right] =
    \inf_{\lambda \geq 0} \left [ n\delta^{p} \lambda +  \sum_{i=1}^n \sup_{\nu_i \geq 0} (\phi_i(\nu_i)-\lambda \nu_i) \right] = \\
    \inf_{\lambda \geq 0} \left [ n\delta^{p} \lambda +  \sum_{i=1}^n \sup_{t_i \geq 0} (\phi_i(t_i^p)-\lambda t_i^p) \right] =
    \inf_{\lambda \geq 0} \left [ n\delta^{p} \lambda +  \sum_{i=1}^n \sup_{t_i \geq 0} \left \{ \big( \lvert r_i \rvert + t_i \lVert {\param} \rVert_* \big)^2-\lambda t_i^p) \right \} \right] =\\  
    n V_\delta(\param)
\end{align*}
where the last equality is due to \eqref{eq_lemma_robust_risk_1D_formulation}. This completes the proof for $p<\infty$.

For $p=\infty$, we have by Theorem 2 in \citet{zhang_short_2024} that
\begin{equation*}
    n V_\delta(\param) = \sum_{i=1}^n \sup_{\|\vv{v}_i\| \leq \delta} ( (\x_i+\vv{v}_i)^T \param - y_i )^2
\end{equation*}
which can be rewritten to the desired form using a dual norm argument, see, e.g., Proposition 1 in~\citet{ribeiro_regularization_2023}. For completeness, we provide this argument here too:

\begin{align*}
\sup_{\|v_i\| \leq \delta} ( (\x_i+\vv{v}_i)^T \param - y_i )^2 =
\sup_{\|v_i\| \leq \delta} ( \vv{v}_i^T \param+ \x_i^T \param - y_i )^2 = 
\sup_{\|v_i\| \leq \delta} ( \vv{v}_i^T \param+ |\x_i^T \param - y_i |)^2 = \\
\sup_{\|v_i\| \leq 1} (\delta \vv{v}_i^T \param+ |\x_i^T \param - y_i |)^2 = 
(\delta \sup_{\|v_i\| \leq 1} \vv{v}_i^T \param+ |\x_i^T \param - y_i |)^2 =
(\delta \|\param\|_*+ |\x_i^T \param - y_i |)^2
\end{align*}
and thus
\begin{align*}
    V_{\delta}(\param) =
    \sum_{i=1}^n (|\x_i^T \param - y_i |+\delta \|\param\|_*)^2  = \sup_{t_i \geq 0, \|\vv{t}\| \leq \delta} \sum_{i=1}^n  (|\x_i^T \param - y_i |+t_i \|\param\|_*)^2
\end{align*}
completing the proof for $p = \infty$ and hence of \Cref{th_supremum_robust_risk}.
\end{proof}

\subsection{Proof of Corollary \ref{th_special_cases}}\label{proof_th_special_cases}
We continue by proving \Cref{th_special_cases}, using the saddle-point formulation of the robust risk.

\begin{proof}[Proof of \Cref{th_special_cases}] 
 We will use \Cref{th:saddle_point_form} (that we prove latter in the appendix). This result state that 
\[nV_{\delta}(\param) = \sup_{\vv{\alpha} \geq 0} K(\param,\vv{\alpha}) =  \sup_{\vv{\alpha} \geq 0} \left [ n^{1/2}\delta \|\param\|_* \|\vv{\alpha} \|_2+\sum_{i=1}^n \alpha_i |\x_i^{\top} \param - y_i | - \alpha_i^2/4 \right ]\]
 
We start with $p=2$ and compute \eqref{eq:saddle_point_form} noting that $q=2$. For brevity, let $D$ be the vector with entries $D_i =|\x_i^T \param - y_i |$. Then:
\begin{align*}
    nV_{\delta}(\param) 
    &=\sup_{\vv{\alpha} \geq 0} \left [ n^{1/2}\delta \|\param\|_* \|\vv{\alpha} \|_2+\langle \vv{\alpha}, \vv{D} \rangle - \|\alpha\|_2^2/4 \right ]  \\
    &= \sup_{\vv{\alpha} \geq 0} \left [ n^{1/2}\delta \|\param\|_* \|\vv{\alpha} \|_2+\|\vv{\alpha}\|_2 \|\vv{D}\|_2  - \|\vv{\alpha}\|_2^2/4 \right ] \\
     &= \sup_{t \geq 0} \left [ n^{1/2}\delta \|\param\|_* t+t \|\vv{D}\|_2  - t^2/4 \right ],
\end{align*}
where we used the Cauchy–Schwarz inequality $\langle \vv{\alpha}, \vv{D} \rangle \leq \|\vv{\alpha}\|_2 \|\vv{D}\|_2$ with equality if $\vv{\alpha} = l\cdot \vv{D}$ for some scalar $l \geq 0$ (note that such an $\vv{\alpha}$ is indeed feasible since $\vv{D}\geq0$), and then transformed the supremum to a one-dimensional optimization by setting $t=\|\vv{\alpha}\|_2$. This one-dimensional optimization is quadratic, with solution
\begin{equation*}
( \|\vv{D} \|_2+n^{1/2}\delta \|\param\|_*)^2.
\end{equation*}
Therefore
\begin{equation*}
    V_{\delta}(\param) = \left ( \frac{1}{n^{1/2}}  \|\vv{D} \|_2 +\delta \|\param\|_* \right)^2 = \left ( \sqrt{\frac{1}{n}\sum_{i=1}^n (\x_i^{\top} \param - y_i)^2}+\delta \|\param\|_* \right)^2,
\end{equation*}
which coincides with square-root Lasso \citep{belloni_squareroot_2011} for $\| \cdot \|_* = \| \cdot \|_1$, i.e., $\| \cdot \| = \| \cdot \|_\infty$. 

To show the adversarial linear regression case, we have by \Cref{th_supremum_robust_risk} that
\begin{align*}
    V_{\infty,\delta}(\param) = 
    \sup_{t_i \geq 0, \|t_i\| \leq \delta} \sum_{i=1}^n  (|\x_i^{\top} \param - y_i |+t_i \|\param\|_*)^2 = 
    \sum_{i=1}^n (|\x_i^{\top} \param - y_i |+\delta \|\param\|_*)^2 = \\
    \sum_{i=1}^n \sup_{\|\vv{v}_i\| \leq \delta} ( (\x_i+\vv{v}_i)^{\top} \param - y_i )^2
\end{align*}
where the last step follows by the dual norm argument in the proof of \Cref{th_supremum_robust_risk}. Here, we identify $v_i$ as the adversarial noise on the input $\x_i$. In other words, $p$-Wasserstein linear regression with $p = \infty$ coincides with adversarial linear regression. It remains to show that $\lim_{p \rightarrow \infty} V_{p,\delta}(\param) = V_{\infty,\delta}(\param)$. Again, using \Cref{th:saddle_point_form}, we have that:
\begin{equation*}
    \lim_{p \rightarrow \infty} V_{p,\delta}(\param) = \lim_{p \rightarrow \infty} \sup_{\vv{\alpha} \geq 0} K(\param,\vv{\alpha}) =  \lim_{p \rightarrow \infty} \sup_{\vv{\alpha} \geq 0} \left [ n^{1/p} \delta \|\param\|_* \|\vv{\alpha} \|_q+\sum_{i=1}^n \alpha_i |\x_i^{\top} \param - y_i | - \alpha_i^2/4 \right ].
\end{equation*}
We want to swap the limit and the supremum. To this end, fix $\param$ and let $f_p(\vv{\alpha}) := K(\param,\vv{\alpha})$, where we stress the dependence on $p$. We have that $\|\vv{\alpha}\|_q \leq n^{1/q} \| \vv{\alpha}  \|_2$ by a norm inequality and $\sum_{i=1}^n \alpha_i |\x_i^{\top} \param - y_i | \leq \|\vv{\alpha}\|_2 \|\vv{D} \|_2$ by Cauchy–Schwarz inequality. Thus,
\begin{equation*}
    f_p(\vv{\alpha}) \leq n^{1/p} \delta \|\param\|_* n^{1/q} \| \vv{\alpha}  \|_2 + \|\vv{\alpha}\|_2 \|\vv{D} \|_2 - \|\vv{\alpha}\|_2^2/4 = (n \delta \|\param\|_* + \|\vv{D} \|_2)\|\vv{\alpha}\|_2 - \|\vv{\alpha}\|_2^2/4
    \rightarrow -\infty
\end{equation*}
when $\|\vv{\alpha}\|_2 \rightarrow \infty$ independent of $p$. Thus, there exists a radius $R>0$ such that the maximum of $f_p(\vv{\alpha})$ for $\vv{\alpha} \geq 0$ is attained in $\Lambda := \{\vv{\alpha} \geq 0: \|\vv{\alpha}\|_2 \leq R\}$. Let 
\begin{equation*}
    f_\infty(\vv{\alpha}) := \lim_{p \rightarrow  \infty} f_p(\vv{\alpha}) = \delta \|\param\|_* \|\vv{\alpha} \|_1+\sum_{i=1}^n \alpha_i |\x_i^T \param - y_i | - \alpha_i^2/4
\end{equation*}
and note that $f_\infty(\vv{\alpha})$ also attains its maximum on $\Lambda$. Moreover, since $\Lambda$ is compact, $f_p(\vv{\alpha})$ converges uniformly to $f_\infty(\vv{\alpha})$ on $\Lambda$. Due to this uniform convergence, we can swap the limit and the~supremum:
\begin{equation*}
    \lim_{p \rightarrow \infty} \sup_{\vv{\alpha} \geq 0} f_p(\vv{\alpha}) = \lim_{p \rightarrow \infty} \sup_{\vv{\alpha} \in \Lambda} f_p(\vv{\alpha}) = \sup_{\vv{\alpha} \in \Lambda} \lim_{p \rightarrow \infty} f_p(\vv{\alpha}) = \sup_{\vv{\alpha} \in \Lambda} f_\infty(\vv{\alpha}) = \sup_{\vv{\alpha} \geq 0} f_\infty(\vv{\alpha})
\end{equation*}
Therefore,
\begin{align*}
    \lim_{p \rightarrow \infty} V_{p,\delta}(\param) = \lim_{p \rightarrow \infty} \sup_{\vv{\alpha} \geq 0} f_p(\vv{\alpha}) =  \sup_{\vv{\alpha} \geq 0} f_\infty(\vv{\alpha}) = \\
    \sup_{\vv{\alpha} \geq 0} \left [ \delta \|\param\|_* \|\vv{\alpha} \|_1+\sum_{i=1}^n \alpha_i |\x_i^{\top} \param - y_i | - \alpha_i^2/4 \right ] =
    \sup_{\vv{\alpha} \geq 0} \left [\sum_{i=1}^n  (\delta \|\param\|_*+ |\x_i^{\top} \param - y_i |)\alpha_i - \alpha_i^2/4 \right ] = \\
    \sum_{i=1}^n \sup_{\alpha_i \geq 0} \left [ (\delta \|\param\|_*+ |\x_i^{\top} \param - y_i |)\alpha_i - \alpha_i^2/4 \right ] = 
    \sum_{i=1}^n (\delta \|\param\|_*+ |\x_i^{\top} \param - y_i |)^2,
\end{align*}
The last expression coincides with $V_{\infty,\delta}(\param)$, completing the proof.
\end{proof}

\section{Error bounds}

\subsection{Slow rate: Proofs}

In this section, we provide the proofs of Section \ref{slow_rate}.

To prove \Cref{th_in_sample_error_prestage_improved}, we first show the following lemma:

\begin{lemma}\label{lemma_in_sample_error_prestage}
We have the following bounds:
\begin{align}
n V_{\delta}(\param^*) &\leq n \delta^2 \|\param^*\|_*^2+2n^{1/p} \delta \|\param^*\|_* \|\vv{\varepsilon}\|_q+\|\vv{\varepsilon}\|_2^2 \\
n V_{\delta}(\widehat{\param}) &\geq n \delta^2 \|\widehat{\param}\|_*^2+2\delta \|\widehat{\param}\|_* \|r(\widehat{\param})\|_1+\|r(\widehat{\param})\|_2^2,
\end{align}
where $r_i(\param) = |\x_i^T \param - y_i |$.
\end{lemma}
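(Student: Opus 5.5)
Both bounds follow from the robust quadratic form of \Cref{th_supremum_robust_risk}: the upper bound at $\param^*$ comes from bounding the supremum by Hölder, and the lower bound at $\widehat{\param}$ from evaluating it at one feasible $\vv{t}$. Write $nV_\delta(\param) = \sup_{\vv{t}\ge 0,\ \|\vv{t}\|_p\le n^{1/p}\delta} \sum_{i=1}^n (r_i(\param)+t_i\|\param\|_*)^2$. Expanding the square gives, for any feasible $\vv{t}$,
\begin{equation*}
\sum_{i=1}^n r_i(\param)^2 + 2\|\param\|_*\sum_{i=1}^n t_i r_i(\param) + \|\param\|_*^2 \|\vv{t}\|_2^2 .
\end{equation*}

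\textbf{Upper bound at $\param^*$.} Here $r_i(\param^*)=|\varepsilon_i|$, so the first term is $\|\vv{\varepsilon}\|_2^2$.
\begin{itemize}[leftmargin=1.5em]
\item For the cross term, Hölder with conjugate exponents $(p,q)$ gives $\sum_i t_i|\varepsilon_i| \le \|\vv{t}\|_p\|\vv{\varepsilon}\|_q \le n^{1/p}\delta\|\vv{\varepsilon}\|_q$.
\item For the quadratic term, $p\ge 2$ and the power-mean inequality give $\|\vv{t}\|_2 \le n^{1/2-1/p}\|\vv{t}\|_p \le n^{1/2}\delta$, hence $\|\vv{t}\|_2^2\le n\delta^2$.
\end{itemize}
These bounds are uniform over the feasible set, so taking the supremum yields the first inequality. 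The case $p=\infty$ is the same argument with $n^{1/p}=1$ and $q=1$.

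\textbf{Lower bound at $\widehat{\param}$.} Choose the feasible constant vector $t_i=\delta$ for all $i$. It is admissible because $\|\vv{t}\|_p = n^{1/p}\delta$. Plugging it into the expansion gives exactly
\begin{equation*}
\|r(\widehat{\param})\|_2^2 + 2\delta\|\widehat{\param}\|_*\|r(\widehat{\param})\|_1 + n\delta^2\|\widehat{\param}\|_*^2 .
\end{equation*}
Since the supremum is at least its value at any feasible point, the second inequality follows.

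The argument needs only these elementary estimates. The one point requiring care is the norm comparison $\|\vv{t}\|_2\le n^{1/2-1/p}\|\vv{t}\|_p$, which holds precisely because $p\ge 2$, together with the conventions $n^{1/p}=1$ and $q=1$ at $p=\infty$.
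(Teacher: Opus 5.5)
Your proposal is correct and follows essentially the same route as the paper. Both arguments expand the square in the robust quadratic form of \Cref{th_supremum_robust_risk}, bound the cross term by Hölder and the quadratic term by $\|\vv{t}\|_2^2 \le n^{1-2/p}\|\vv{t}\|_p^2$, and get the lower bound at the feasible point $t_i=\delta$. The only cosmetic difference is that the paper first rescales $\vv{t}$ to the unit $p$-ball, so its feasible point $s_i = n^{-1/p}$ is exactly your choice $t_i=\delta$.
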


\begin{proof}
We obtain these bounds using the representation from \Cref{th_supremum_robust_risk}:
\begin{align*}
    n V_{\delta}(\param) = \max_{\vv{t} \geq 0, \|\vv{t}\|_p \leq 1} \sum_{i=1}^n \Big (n^{1/p}\delta \|\param\|_* t_i+ |\x_i^{\top} \param - y_i | \Big)^2 = \\
    \max_{\vv{t} \geq 0, \|\vv{t}\|_p \leq 1} \Big ( n^{2/p} \delta^2 \|\param\|_*^2 \|\vv{t}\|_2^2+2n^{1/p} \delta \|\param\|_* \sum_{i=1}^n t_i r_i(\param)+\|r(\param)\|_2^2 \Big ).
\end{align*}
To upper bound $V_{\delta}(\param^*)$, we note that $r(\param^*) = |\vv{\varepsilon}|$, resulting in
\begin{equation*}
    n V_{\delta}(\param^*) = \max_{\vv{t} \geq 0, \|\vv{t}\|_p \leq 1} \Big ( n^{2/p}\delta^2 \|\param^*\|_*^2 \|\vv{t}\|_2^2+2n^{1/p}\delta \|\param^*\|_* \sum_{i=1}^n t_i |\varepsilon_i|+\|\vv{\varepsilon}\|_2^2 \Big ).
\end{equation*}
Then we use Hölder's inequality $\sum_{i=1}^n t_i |\varepsilon_i| \leq \|\vv{t}\|_p \|\vv{\varepsilon}\|_q \leq \|\vv{\varepsilon}\|_q$ and the norm inequality $\|\vv{t}\|_2^2 \leq n^{1-2/p} \|\vv{t}\|_p^2 \leq n^{1-2/p}$ to get
\begin{equation*}
    n V_{\delta}(\param^*) \leq n \delta^2 \|\param^*\|_*^2+2n^{1/p} \delta \|\param^*\|_* \|\vv{\varepsilon}\|_q+\|\vv{\varepsilon}\|_2^2.
\end{equation*}
We next lower bound $n V_{\delta}(\widehat{\param})$. For this, we simply pick the point $s_i = n^{-1/p}$ (which satisfies $\|s\|_p =1$ and $s \geq 0$), yielding
\begin{equation*}
    n V_{\delta}(\widehat{\param}) \geq n \delta^2 \|\widehat{\param}\|_*^2+2\delta \|\widehat{\param}\|_* \|r(\widehat{\param})\|_1+\|r(\widehat{\param})\|_2^2.
\end{equation*}
This completes the proof.
\end{proof}

\textbf{Proof of \Cref{th_in_sample_error_prestage_improved}.} We now prove \Cref{th_in_sample_error_prestage_improved}:
\begin{proof}
By optimality, $n V_{\delta}(\widehat{\param}) \leq n V_{\delta}(\param^*)$. Combined with \Cref{lemma_in_sample_error_prestage} yields 
\begin{equation*}
    n \delta^2 \|\widehat{\param}\|_*^2+2\delta \|\widehat{\param}\|_* \|r(\widehat{\param})\|_1+\|r(\widehat{\param})\|_2^2 
    \leq 
    n \delta^2 \|\param^*\|_*^2+2n^{1/p}\delta \|\param^*\|_* \|\vv{\varepsilon}\|_q+\|\vv{\varepsilon}\|_2^2
\end{equation*}
Noting that $\|r(\widehat{\param})\|_2^2 = \|X\widehat{\Delta}\|_2^2-2\varepsilon^T X \widehat{\Delta}+\|\vv{\varepsilon}\|_2^2$, we get
\begin{equation*}
    n \delta^2 \|\widehat{\param}\|_*^2+2\delta \|\widehat{\param}\|_* \|r(\widehat{\param})\|_1+\|X\widehat{\Delta}\|_2^2 
    \leq 
    2\vv{\varepsilon}^{\top} X \widehat{\Delta}+
    n \delta^2 \|\param^*\|_*^2+2n^{1/p} \delta \|\param^*\|_* \|\vv{\varepsilon}\|_q.
\end{equation*}
The lemma now follows by dropping the nonnegative terms $n \delta^2 \|\widehat{\param}\|_*^2+2\delta \|\widehat{\param}\|_* \|r(\widehat{\param})\|_1$ and dividing by $n$.
\end{proof}

We continue with the proof of \Cref{th_controlled_dual_norm_improved_v3}, where we use the following two lemmas:

\begin{lemma}\label{lemma_in_sample_error_s_star}
Let $\vv{t}^*$ be a maximizer of \eqref{supremum_form_linear_regression} for $\param = \param^*$. Then
\begin{align}
\|X \widehat{\Delta} \|_2^2 \leq 2&\vv{\varepsilon}^{\top} X \widehat{\Delta}+2(\|\param^*\|_*-\|\widehat{\param}\|_*) \langle |\vv{\varepsilon}|,\vv{t}^*\rangle+ \notag \\ 
2& \|\widehat{\param} \|_* \langle |X\widehat{\Delta}|,\vv{t}^*\rangle+\|\vv{t}^*\|_2^2 (\|\param^*\|_*^2-\|\widehat{\param}\|_*^2).
\end{align}
\end{lemma}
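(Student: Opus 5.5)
The plan is to compare the robust quadratic form of \Cref{th_supremum_robust_risk} at $\widehat{\param}$ and at $\param^*$, using the \emph{same} budget vector $\vv{t}^*$ in both. First I note that $\vv{t}^*$ exists. The feasible set $\{\vv{t}\geq 0,\ \|\vv{t}\|_p \leq n^{1/p}\delta\}$ is compact and the objective is continuous, so the supremum in \eqref{supremum_form_linear_regression} is attained. This gives
\begin{equation*}
nV_\delta(\param^*) = \sum_{i=1}^n \big(|\varepsilon_i| + t^*_i\|\param^*\|_*\big)^2,
\end{equation*}
since $|\x_i^\top\param^* - y_i| = |\varepsilon_i|$. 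Because $\vv{t}^*$ is also feasible for $\param=\widehat{\param}$, the supremum form yields the lower bound
\begin{equation*}
nV_\delta(\widehat{\param}) \geq \sum_{i=1}^n \big(|\hat r_i| + t^*_i\|\widehat{\param}\|_*\big)^2,
\end{equation*}
where $\hat r_i := \x_i^\top\widehat{\param} - y_i$. Optimality of $\widehat{\param}$ gives $nV_\delta(\widehat{\param}) \leq nV_\delta(\param^*)$, and chaining the two displays compares the two sums directly.

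Next I expand both squares. The left side becomes $\|\hat{\vv{r}}\|_2^2 + 2\|\widehat{\param}\|_*\langle|\hat{\vv{r}}|,\vv{t}^*\rangle + \|\widehat{\param}\|_*^2\|\vv{t}^*\|_2^2$. The right side becomes $\|\vv{\varepsilon}\|_2^2 + 2\|\param^*\|_*\langle|\vv{\varepsilon}|,\vv{t}^*\rangle + \|\param^*\|_*^2\|\vv{t}^*\|_2^2$. Since $\hat{\vv{r}} = X\widehat{\Delta} - \vv{\varepsilon}$, I substitute
\begin{equation*}
\|\hat{\vv{r}}\|_2^2 = \|X\widehat{\Delta}\|_2^2 - 2\vv{\varepsilon}^\top X\widehat{\Delta} + \|\vv{\varepsilon}\|_2^2,
\end{equation*}
as in the proof of \Cref{th_in_sample_error_prestage_improved}, and the $\|\vv{\varepsilon}\|_2^2$ terms cancel.

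The only non-bookkeeping step is handling the cross term $\langle|\hat{\vv{r}}|,\vv{t}^*\rangle$, which involves the unknown residual. Here I apply the reverse triangle inequality componentwise, $|\hat r_i| \geq |\varepsilon_i| - |(X\widehat{\Delta})_i|$. Since $t^*_i \geq 0$ and $\|\widehat{\param}\|_*\geq 0$, this gives
\begin{equation*}
2\|\widehat{\param}\|_*\langle|\hat{\vv{r}}|,\vv{t}^*\rangle \geq 2\|\widehat{\param}\|_*\langle|\vv{\varepsilon}|,\vv{t}^*\rangle - 2\|\widehat{\param}\|_*\langle|X\widehat{\Delta}|,\vv{t}^*\rangle.
\end{equation*}
Inserting this lower bound on the left and isolating $\|X\widehat{\Delta}\|_2^2$ produces the four claimed terms on the right: $2\vv{\varepsilon}^\top X\widehat{\Delta}$, $2(\|\param^*\|_*-\|\widehat{\param}\|_*)\langle|\vv{\varepsilon}|,\vv{t}^*\rangle$, $2\|\widehat{\param}\|_*\langle|X\widehat{\Delta}|,\vv{t}^*\rangle$, and $\|\vv{t}^*\|_2^2(\|\param^*\|_*^2-\|\widehat{\param}\|_*^2)$.

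The case $p=\infty$ needs no separate treatment, since \Cref{th_supremum_robust_risk} covers it with the same constraint set. I expect no real obstacle: the key idea is reusing the single maximizer $\vv{t}^*$ on both sides.
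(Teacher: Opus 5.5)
Your proposal is correct and follows the paper's proof essentially step for step. Like the paper, you reuse the single maximizer $\vv{t}^*$ to lower-bound $nV_\delta(\widehat{\param})$, chain this with $V_\delta(\widehat{\param}) \leq V_\delta(\param^*)$, expand both squares using $\|X\widehat{\Delta}-\vv{\varepsilon}\|_2^2 = \|X\widehat{\Delta}\|_2^2 - 2\vv{\varepsilon}^{\top}X\widehat{\Delta} + \|\vv{\varepsilon}\|_2^2$, and finish with the componentwise reverse triangle inequality on $\langle |X\widehat{\Delta}-\vv{\varepsilon}|,\vv{t}^*\rangle$. Your compactness remark for the existence of $\vv{t}^*$ is a small addition that the paper leaves implicit.
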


\begin{proof}
Using \eqref{supremum_form_linear_regression} and the definition of $\vv{t}^*$, we get that
\begin{equation*}
n V_\delta(\param^*) =\sum_{i=1}^n (|x_i^T\param^*-y_i|+t_i^* \|\param^*\|_*)^2 = \sum_{i=1}^n (|\varepsilon_i|+t_i^* \|\param^*\|_*)^2
\end{equation*}
and
\begin{align*}
n V_\delta(\widehat{\param}) = \sup_{t_i \geq 0, \; \sum_{i=1}^n t_i^p \leq n \delta^p} \sum_{i=1}^n (|x_i^{\top}\widehat{\param}-y_i|+t_i \|\widehat{\param}\|_*)^2 \geq \\ 
    \sum_{i=1}^n (|x_i^{\top}\widehat{\param}-y_i|+t_i^* \|\widehat{\param}\|_*)^2 = \sum_{i=1}^n (|x_i^{\top} \widehat{\Delta}-\varepsilon_i|+t_i^* \|\widehat{\param}\|_*)^2.
\end{align*}
By optimality, $V_\delta(\widehat{\param}) \leq V_\delta(\param^*)$. Therefore,
\begin{equation*}
    \sum_{i=1}^n (|x_i^{\top} \widehat{\Delta}-\varepsilon_i|+t_i^* \|\widehat{\param}\|_*)^2 \leq n V_\delta(\widehat{\param}) \leq n V_\delta(\param^*) =  \sum_{i=1}^n (|\varepsilon_i|+t_i^* \|\param^*\|_*)^2.
\end{equation*}
Expanding both sides yields
\begin{equation*}
    \|X \widehat{\Delta}-\vv{\varepsilon}\|_2^2+2\|\widehat{\param}\|_*\langle |X\widehat{\Delta}-\vv{\varepsilon}|,\vv{t}^* \rangle+\|\vv{t}^*\|_2^2\|\widehat{\param}\|_*^2 \leq \|\vv{\varepsilon}\|_2^2+2\|\param^*\|_* \langle |\vv{\varepsilon}|,\vv{t}^*\rangle + \|\vv{t}^*\|_2^2 \|\param^*\|_*^2
\end{equation*}
which with $\|X \widehat{\Delta}-\vv{\varepsilon}\|_2^2 = \|X \widehat{\Delta}\|_2^2-2\vv{\varepsilon}^{\top} X \widehat{\Delta}+\|\vv{\varepsilon}\|_2^2$ results in
\begin{equation*}
    \|X \widehat{\Delta}\|_2^2+2\|\widehat{\param}\|_*\langle |X\widehat{\Delta}-\varepsilon|,\vv{t}^* \rangle+\|\vv{t}^*\|_2^2\|\widehat{\param}\|_*^2 \leq 2\vv{\varepsilon}^{\top} X \widehat{\Delta}+2\|\param^*\|_* \langle |\vv{\varepsilon}|,\vv{t}^*\rangle + \|\vv{t}^*\|_2^2 \|\param^*\|_*^2.
\end{equation*}
To get the final expression, we use the reverse triangle inequality
\begin{equation*}
    \langle |X\widehat{\Delta}-\vv{\varepsilon}|,\vv{t}^* \rangle \geq \langle |\vv{\varepsilon}|-|X\widehat{\Delta}|,\vv{t}^* \rangle = \langle |\vv{\varepsilon}|,\vv{t}^* \rangle - \langle |X\widehat{\Delta}|,\vv{t}^* \rangle
\end{equation*}
to obtain
\begin{equation*}
    \|X \widehat{\Delta}\|_2^2 \leq 2\vv{\varepsilon}^{\top} X \widehat{\Delta}+2(\|\param^*\|_*-\|\widehat{\param}\|_*) \langle |\vv{\varepsilon}|,\vv{t}^*\rangle + 2\|\widehat{\param}\|_* \langle |X\widehat{\Delta}|,\vv{t}^*\rangle+\|\vv{t}^*\|_2^2 (\|\param^*\|_*^2-\|\widehat{\param}\|_*^2).
\end{equation*}
This completes the proof.
\end{proof}

\begin{lemma}\label{lemma_to_bound_Delta}
We have
\begin{align*}
    0 \leq 2\|X^T \vv{\varepsilon}\| \| \widehat{\Delta} \|_*+2(\|\param^*\|_*-\|\widehat{\param}\|_*) \|\vv{\varepsilon}\|_q n^{1/p} \delta + n \delta^2 \|\param^*\|_*^2.
\end{align*}
\end{lemma}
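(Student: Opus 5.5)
We derive the bound from \Cref{lemma_in_sample_error_s_star}, using only $\|X\widehat{\Delta}\|_2^2 \geq 0$. Its right-hand side is therefore nonnegative:
\begin{equation*}
0 \leq 2\vv{\varepsilon}^{\top} X \widehat{\Delta}+2(\|\param^*\|_*-\|\widehat{\param}\|_*) \langle |\vv{\varepsilon}|,\vv{t}^*\rangle+ 2 \|\widehat{\param} \|_* \langle |X\widehat{\Delta}|,\vv{t}^*\rangle+\|\vv{t}^*\|_2^2 (\|\param^*\|_*^2-\|\widehat{\param}\|_*^2).
\end{equation*}
The $\langle |X\widehat{\Delta}|,\vv{t}^*\rangle$ term is awkward because it has the wrong sign. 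So instead of this inequality I would return to the step inside that proof just before the reverse triangle inequality. There the cross term appears as $2\|\widehat{\param}\|_*\langle |X\widehat{\Delta}-\vv{\varepsilon}|,\vv{t}^*\rangle$ on the left-hand side, which is nonnegative and can simply be dropped. Dropping also $\|X\widehat{\Delta}\|_2^2\ge 0$ gives
\begin{equation*}
0 \leq 2\vv{\varepsilon}^{\top} X \widehat{\Delta}+2\|\param^*\|_* \langle |\vv{\varepsilon}|,\vv{t}^*\rangle + \|\vv{t}^*\|_2^2 (\|\param^*\|_*^2-\|\widehat{\param}\|_*^2).
\end{equation*}
This is not quite the target, since the target contains $-2\|\widehat{\param}\|_*\|\vv{\varepsilon}\|_q n^{1/p}\delta$. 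That term suggests comparing against a different test vector on the $\widehat{\param}$ side, so I would redo the comparison more cleverly.

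The cleaner route is as follows. Evaluate $nV_\delta(\widehat{\param})$ from below at the feasible $\vv{t}$ that is optimal for the linear term in $|\vv{\varepsilon}|$, namely $t_i \propto |\varepsilon_i|^{q-1}$ scaled so that $\|\vv{t}\|_p = n^{1/p}\delta$. With this choice Hölder holds with equality, $\langle |\vv{\varepsilon}|,\vv{t}\rangle = n^{1/p}\delta\|\vv{\varepsilon}\|_q$. Expanding gives
\begin{equation*}
nV_\delta(\widehat{\param}) \ge \|X\widehat{\Delta}-\vv{\varepsilon}\|_2^2 + 2\|\widehat{\param}\|_*\langle|X\widehat{\Delta}-\vv{\varepsilon}|,\vv{t}\rangle + \|\vv{t}\|_2^2\|\widehat{\param}\|_*^2.
\end{equation*}
For the reverse inequality, $nV_\delta(\param^*)$ is bounded above by \Cref{lemma_in_sample_error_prestage}:
\begin{equation*}
nV_\delta(\param^*) \le n\delta^2\|\param^*\|_*^2+2n^{1/p}\delta\|\param^*\|_*\|\vv{\varepsilon}\|_q+\|\vv{\varepsilon}\|_2^2 .
\end{equation*}
We combine the two bounds via $V_\delta(\widehat{\param})\le V_\delta(\param^*)$. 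We then expand the square, cancel $\|\vv{\varepsilon}\|_2^2$, and drop the nonnegative terms $\|X\widehat{\Delta}\|_2^2$ and $\|\vv{t}\|_2^2\|\widehat{\param}\|_*^2$.

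The cross term on the $\widehat{\param}$ side is handled with the reverse triangle inequality and Hölder:
\begin{equation*}
\langle|X\widehat{\Delta}-\vv{\varepsilon}|,\vv{t}\rangle \ge n^{1/p}\delta\|\vv{\varepsilon}\|_q - \langle |X\widehat{\Delta}|,\vv{t}\rangle .
\end{equation*}
This produces the wanted $-2\|\widehat{\param}\|_*n^{1/p}\delta\|\vv{\varepsilon}\|_q$. Finally, the term $2\vv{\varepsilon}^\top X\widehat{\Delta}$ is bounded by $2\|X^\top\vv{\varepsilon}\|\|\widehat{\Delta}\|_*$ via the dual norm inequality.

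The main obstacle is the leftover positive term $2\|\widehat{\param}\|_*\langle|X\widehat{\Delta}|,\vv{t}\rangle$ from the reverse triangle inequality, which is absent from the claimed inequality. I would try to absorb it into the dropped $\|X\widehat{\Delta}\|_2^2 + \|\vv{t}\|_2^2\|\widehat{\param}\|_*^2$ rather than discard those terms. By AM-GM,
\begin{equation*}
2\|\widehat{\param}\|_*\langle|X\widehat{\Delta}|,\vv{t}\rangle \le \|X\widehat{\Delta}\|_2^2 + \|\widehat{\param}\|_*^2\|\vv{t}\|_2^2,
\end{equation*}
which exactly cancels those terms. I would check this step carefully, since the whole argument hinges on it.
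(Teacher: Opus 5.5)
Your final argument is correct, and it takes a genuinely different route from the paper's. That difference matters. The paper starts from Lemma~\ref{lemma_in_sample_error_s_star}, where one maximizer $\vv{t}^*$ (optimal for $\param^*$) is used on both sides. It absorbs $2\|\widehat{\param}\|_*\langle|X\widehat{\Delta}|,\vv{t}^*\rangle$ against $-\|X\widehat{\Delta}\|_2^2$ just as you do. It then substitutes the Hölder bound $\langle|\vv{\varepsilon}|,\vv{t}^*\rangle\le n^{1/p}\delta\|\vv{\varepsilon}\|_q$ into $2(\|\param^*\|_*-\|\widehat{\param}\|_*)\langle|\vv{\varepsilon}|,\vv{t}^*\rangle$.

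That substitution is valid only when $\|\widehat{\param}\|_*\le\|\param^*\|_*$, and in that case the lemma is trivial anyway. The case needed for Lemma~\ref{th_controlled_dual_norm_improved_v3} requires a \emph{lower} bound on $\langle|\vv{\varepsilon}|,\vv{t}^*\rangle$. Hölder is tight at $\vv{t}^*$ for $p\in\{2,\infty\}$ (when $\param^*\neq 0$). It is generally not tight for $2<p<\infty$, where $\vv{t}^*$ trades the linear term off against $\|\vv{t}\|_2^2$.

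Your decoupling sidesteps this. On the $\param^*$ side, Lemma~\ref{lemma_in_sample_error_prestage} applies Hölder and $\|\vv{t}\|_2^2\le n\delta^2$ only to terms with nonnegative coefficients. On the $\widehat{\param}$ side, take the test vector $t_i=n^{1/p}\delta\,|\varepsilon_i|^{q-1}/\|\vv{\varepsilon}\|_q^{q-1}$; use $t_i=\delta$ if $p=\infty$, and any feasible $\vv{t}$ if $\vv{\varepsilon}=0$. This gives $\langle|\vv{\varepsilon}|,\vv{t}\rangle=n^{1/p}\delta\|\vv{\varepsilon}\|_q$ exactly. So the negative term enters with precisely the value the statement needs, for every $2\le p\le\infty$. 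The paper's route buys reuse of the single inequality of Lemma~\ref{lemma_in_sample_error_s_star}. Yours costs one explicit test vector but actually proves the nontrivial case.

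The step you flagged is sound: $\|X\widehat{\Delta}\|_2^2-2\|\widehat{\param}\|_*\langle|X\widehat{\Delta}|,\vv{t}\rangle+\|\widehat{\param}\|_*^2\|\vv{t}\|_2^2=\sum_i\bigl(|\x_i^{\top}\widehat{\Delta}|-\|\widehat{\param}\|_*t_i\bigr)^2\ge 0$. In the write-up, delete the abandoned first attempt. Also keep these two terms until this step instead of first discarding them.
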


\begin{proof}
By \Cref{lemma_in_sample_error_s_star}, we have that
\begin{align*}
        0 \leq \\ 2\varepsilon^T X \widehat{\Delta}+2(\|\param^*\|_*-\|\widehat{\param}\|_*) \langle |\vv{\varepsilon}|,\vv{t}^*\rangle + 2\|\widehat{\param}\|_* \langle |X\widehat{\Delta}|,\vv{t}^*\rangle-\|X \widehat{\Delta}\|_2^2+\|\vv{t}^*\|_2^2 (\|\param^*\|_*^2-\|\widehat{\param}\|_*^2) \leq \\
        2\vv{\varepsilon}^{\top} X \widehat{\Delta}+2(\|\param^*\|_*-\|\widehat{\param}\|_*) \langle |\varepsilon|,\vv{t}^*\rangle + 2\|\widehat{\param}\|_* \|X\widehat{\Delta}\|_2\|\vv{t}^*\|_2-\|X \widehat{\Delta}\|_2^2+\|\vv{t}^*\|_2^2 (\|\param^*\|_*^2-\|\widehat{\param}\|_*^2)
\end{align*}
where we used the Cauchy–Schwarz inequality for the last inequality. Note now that
\begin{equation*}
    2\|\widehat{\param}\|_* \|X\widehat{\Delta}\|_2\|\vv{t}^*\|_2-\|X \widehat{\Delta}\|_2^2 \leq \|\widehat{\param}\|_*^2 \|\vv{t}^*\|_2^2,
\end{equation*}
obtained by maximizing the expression with respect to $\|X\widehat{\Delta}\|_2$. Thus,
\begin{align*}
    0 \leq 2\vv{\varepsilon}^{\top} X \widehat{\Delta}+2(\|\param^*\|_*-\|\widehat{\param}\|_*) \langle |\vv{\varepsilon}|,\vv{t}^*\rangle +\|\vv{t}^*\|_2^2 \|\param^*\|_*^2.
\end{align*}
Finally, by the definition of the dual norm,
\begin{equation*}
2\vv{\varepsilon}^{\top} X \widehat{\Delta} \leq 2\|X^{\top} \vv{\varepsilon}\| \| \widehat{\Delta} \|_*,
\end{equation*}
Hölder's inequality
\begin{equation*}
    \langle |\vv{\varepsilon}|,\vv{t}^*\rangle \leq \|\vv{\varepsilon}\|_q \|\vv{t}^*\|_p \leq \|\vv{\varepsilon}\|_q n^{1/p} \delta,
\end{equation*}
and the norm inequality
\begin{equation*}
    \|\vv{t}^*\|_2^2 \leq n^{1-2/p} \|\vv{t}^*\|_p^2  \leq  n \delta^2,
\end{equation*}
we get
\begin{align*}
    0 \leq 2\|X^{\top} \vv{\varepsilon}\| \| \widehat{\Delta} \|_*+2(\|\param^*\|_*-\|\widehat{\param}\|_*) \|\vv{\varepsilon}\|_q n^{1/p} \delta + n \delta^2 \|\param^*\|_*^2.
\end{align*}
This completes the proof.
\end{proof}

\textbf{Proof of \Cref{th_controlled_dual_norm_improved_v3}.} We now prove \Cref{th_controlled_dual_norm_improved_v3}:

\begin{proof}
We have by \Cref{lemma_to_bound_Delta} and the triangle inequality $\| \widehat{\Delta} \|_* \leq \| \param^* \|_*+ \| \widehat{\param} \|_*$ that
\begin{align*}
0 \leq 2\|X^{\top} \vv{\varepsilon}\|(\| \param^* \|_*+ \| \widehat{\param} \|_*) +2(\| \param^* \|_*- \| \widehat{\param} \|_*) \|\vv{\varepsilon}\|_q n^{1/p} \delta + n \delta^2 \|\param^*\|_*^2.
\end{align*}
Moreover, $2\|X^T \vv{\varepsilon}\| < \delta n^{1/p} \|\vv{\varepsilon}\|_q$ by assumption. Hence,
\begin{align*}
0 \leq \delta n^{1/p} \|\vv{\varepsilon}\|_q(\| \param^* \|_*+ \| \widehat{\param} \|_*) +2(\| \param^* \|_*- \| \widehat{\param} \|_*) \|\vv{\varepsilon}\|_q n^{1/p} \delta + n \delta^2 \|\param^*\|_*^2  = \\
3 \delta n^{1/p} \|\vv{\varepsilon}\|_q \|\param^*\|_* - \delta n^{1/p} \|\vv{\varepsilon}\|_q \| \widehat{\param} \|_* + n \delta^2 \|\param^*\|_*^2
\end{align*}
which we can solve for $\| \widehat{\param} \|_*$ to get
\begin{equation*}
    \| \widehat{\param} \|_* \leq 3 \| \param^* \|_*+\frac{n^{1/q} \delta}{\|\vv{\varepsilon}\|_q} \| \param^* \|_*^2
\end{equation*}
from which the lemma follows. 
\end{proof}

\textbf{Proof of \Cref{improved_in_sample_error_decay}.} We continue with the proof of \Cref{improved_in_sample_error_decay}:

\begin{proof}
By \Cref{th_in_sample_error_prestage_improved}, we have that
\begin{equation*}
\frac{1}{n}\|X \widehat{\Delta}\|_2^2 \leq \frac{2}{n} \vv{\varepsilon}^{\top} X \widehat{\Delta}+ 2 \delta \frac{\|\vv{\varepsilon}\|_q}{n^{1/q}} \|\param^*\|_*+\delta^2 \|\param^*\|_*^2.
\end{equation*}
We bound the term $\frac{2}{n} \vv{\varepsilon}^{\top} X \widehat{\Delta}$ using the definition of a dual norm, the triangle inequality, the assumption $\delta >  2\frac{\|X^{\top} \vv{\varepsilon}\|}{n^{1/p}\|\vv{\varepsilon}\|_q}$, and \Cref{th_controlled_dual_norm_improved_v3}:
\begin{align*}
    \frac{2}{n} \vv{\varepsilon}^{\top} X \widehat{\Delta} \leq \frac{2}{n} \|X^{\top} \vv{\varepsilon}\| \|\widehat{\Delta}\|_* \leq \frac{2}{n} \|X^{\top} \vv{\varepsilon}\|(\|\widehat{\param}\|_*+\|\param^*\|_*) \leq \\
    \frac{2}{n} \|X^{\top} \vv{\varepsilon}\| \left (  4\|\param^*\|_*+\frac{n^{1/q}}{\|\vv{\varepsilon}\|_q}\|\param^*\|_*^2 \right ) \leq \\
    \frac{1}{n} \delta n^{1/p} \|\vv{\varepsilon}\|_q \left (  4\|\param^*\|_*+\frac{n^{1/q}}{\|\vv{\varepsilon}\|_q}\|\param^*\|_*^2 \right )
    = \\
    4 \delta \frac{\|\vv{\varepsilon}\|_q}{n^{1/q}} \|\param^*\|_*+\delta^2 \|\param^*\|_*^2,
\end{align*}
    from which we get
\begin{align*}
\frac{1}{n}\|X \widehat{\Delta}\|_2^2 \leq 6 \delta \frac{\|\vv{\varepsilon}\|_q}{n^{1/q}} \|\param^*\|_*+2\delta^2 \|\param^*\|_*^2.
\end{align*}
We have thus shown \eqref{improved_in_sample_error_decay_eq}. 

To show the high-probability rate in \eqref{eq_improved_in_sample_error_decay_rate}, note first that by a norm inequality (since $1\leq q \leq 2$):
\begin{equation*}
    6 \frac{\|\vv{\varepsilon}\|_q}{n^{1/q}} \leq 
    6 \frac{n^{1/q-1/2}\|\vv{\varepsilon}\|_2}{n^{1/q}} = 
    6 \frac{\|\vv{\varepsilon}\|_2}{\sqrt{n}}.
\end{equation*}
We bound $ f(\vv{\varepsilon}) = 6 \frac{\|\vv{\varepsilon}\|_2}{\sqrt{n}}$ with high probability. Since $f$ is Lipschitz with respect to the 2-norm with constant $L=\frac{6}{\sqrt{n}}$, a concentration bound for Lipschitz functions with Gaussian input \citep[Theorem 5.6]{10.1093/acprof:oso/9780199535255.001.0001} yields:
\begin{equation*}
    \probP(f(\vv{\varepsilon})-\E[f(\vv{\varepsilon})] \geq t) \leq \exp \left ( -\frac{t^2}{2\sigma^2L^2} \right ).
\end{equation*}
By setting
    $\gamma := \exp \left ( -\frac{t^2}{2\sigma^2L^2} \right )$
we can solve for $t$ to get 
    $t = \sigma L \sqrt{2 \log(1/\gamma)} = 6\sigma \sqrt{\frac{2 \log(1/\gamma)}{n}}$.
Thus with probability $\geq 1-\gamma$, we have that
    $f(\vv{\varepsilon}) < \E[f(\vv{\varepsilon})]+6\sigma \sqrt{\frac{2 \log(1/\gamma)}{n}},$
and since $\E[f(\vv{\varepsilon})] = 6\sigma$, we get that the bound 
\begin{equation*}
    6 \frac{\|\vv{\varepsilon}\|_q}{n^{1/q}} \leq 6\sigma + 6\sigma \sqrt{\frac{2 \log(1/\gamma)}{n}}
\end{equation*}
holds with probability greater than $1-\gamma$. Therefore, due to \eqref{improved_in_sample_error_decay_eq}, the rate of $\frac{1}{n}\|X \widehat{\Delta}\|_2^2$ is with high probability determined by the rate of $\delta$. That is, $\frac{1}{n}\|X \widehat{\Delta}\|_2^2 \in O(\delta)$ with high probability ($\geq 1- \gamma$). 

We now obtain the rate for the particular choice $\delta = K M \sqrt{ \frac{\log(d/\gamma)}{n}}$. To this end, we first upper bound $\bar{\delta}$. By a norm inequality
\begin{equation*}
    \bar{\delta} = \frac{2}{\sqrt{n}} \frac{\frac{\|X^{\top} \vv{\varepsilon}\|}{\sqrt{n}}}{\frac{\|\vv{\varepsilon}\|_q}{n^{1/q}}} \leq \frac{2}{\sqrt{n}} \frac{\frac{\|X^{\top} \vv{\varepsilon}\|}{\sqrt{n}}}{\frac{\|\vv{\varepsilon}\|_1}{n}}.
\end{equation*}
We upper bound $\|X^{\top} \vv{\varepsilon}\|$ and lower bound $\frac{\|\vv{\varepsilon}\|_1}{n}$. To lower bound $\frac{\|\vv{\varepsilon}\|_1}{n}$, note that $f(\vv{\varepsilon}) = \frac{\|\vv{\varepsilon}\|_1}{n} \leq \frac{\|\vv{\varepsilon}\|_2}{\sqrt{n}}$ is 2-norm Lipschitz with constant $L=1/\sqrt{n}$. Hence, a concentration bound for Gaussians \citep[Theorem 5.6]{10.1093/acprof:oso/9780199535255.001.0001} yield
\begin{equation*}
    \probP(\E[f(\vv{\varepsilon})]-f(\vv{\varepsilon}) \geq t) \leq \exp \left ( -\frac{t^2}{2\sigma^2L^2} \right ).
\end{equation*}
Setting $\gamma = \exp \left ( -\frac{t^2}{2\sigma^2L^2} \right )$ so that
    $t = \sigma L \sqrt{2 \log(1/\gamma)} = \sigma \sqrt{\frac{2 \log(1/\gamma)}{n}}$,
and since $\E[f(\vv{\varepsilon})] = \sigma \sqrt{\frac{2}{\pi}}$, we get with probability $\geq 1-\gamma$:
\begin{equation*}
    \frac{\|\vv{\varepsilon}\|_1}{n} \geq \sigma \sqrt{\frac{2}{\pi}}-\sigma \sqrt{\frac{2 \log(1/\gamma)}{n}}.
\end{equation*}
Moreover, we have with probability $\geq 1-2 \gamma$ that $\frac{\|X^{\top} \vv{\varepsilon}\|_\infty}{\sqrt{n}} \leq M \sigma \sqrt{2 \log(d/\gamma)}$ (see, e.g., \citep[Section C.3]{ribeiro_regularization_2023}), which implies that with probability $\geq 1-2 \gamma$:
\begin{equation*}
    \frac{\|X^{\top} \vv{\varepsilon}\|}{\sqrt{n}} \leq \frac{c\|X^{\top} \vv{\varepsilon}\|_\infty}{\sqrt{n}} \leq c M \sigma \sqrt{2 \log(d/\gamma)},
\end{equation*}
where we used that $\|\cdot \|_\infty \leq c\| \cdot \|$. Finally, we combine these two bounds to get a bound for $\bar{\delta}$. Concretely, with probability greater than $1-3\gamma$ (due to a union bound):
\begin{equation}\label{eq_upper_bound_bar_delta}
    \bar{\delta} = \frac{2}{\sqrt{n}} \frac{\frac{\|X^{\top} \vv{\varepsilon}\|}{\sqrt{n}}}{\frac{\|\vv{\varepsilon}\|_1}{n}} \leq \frac{2}{\sqrt{n}} \frac{cM \sigma \sqrt{2 \log(d/\gamma)}}{\sigma \sqrt{\frac{2}{\pi}}-\sigma \sqrt{\frac{2 \log(1/\gamma)}{n}}} = \frac{2}{\sqrt{n}} \frac{cM \sqrt{ \log(d/\gamma)}}{\sqrt{\frac{1}{\pi}}- \sqrt{\frac{ \log(1/\gamma)}{n}}} =: \frac{C_1(n)}{\sqrt{n}},
\end{equation}
valid for $n > \pi \log(1/\gamma)$ since the denominator is then positive. Note that the function $C_1(n)$ is decreasing with $n$ (given that $n > \pi \log(1/\gamma)$). In other words, $\bar{\delta} \in O(1/\sqrt{n})$ with probability $\geq 1-3\gamma$.\footnote{That we do not have an upper bound for $\bar{\delta}$ over the transition period $n \leq \pi \log(1/\gamma)$ is immaterial since $\bar{\delta}$ is almost surely finite over these finitely many value of $n$.} 

Finally, we tune $K$ in $\delta = K M \sqrt{ \frac{\log(d/\gamma)}{n}}$ to satisfy $\delta > \bar{\delta}$ for large enough $n$, which is true if
\begin{equation}\label{eq:tuning_K_slow_rate}
    K > \frac{2c}{\sqrt{\frac{1}{\pi}}- \sqrt{\frac{ \log(1/\gamma)}{n}}}
\end{equation}
for large enough $n$. Since $\frac{2c}{\sqrt{\frac{1}{\pi}}- \sqrt{\frac{ \log(1/\gamma)}{n}}} \rightarrow 2c\sqrt{\pi}$ as $n \rightarrow \infty$, a sufficient condition for satisfying $\delta > \bar{\delta}$ for large enough $n$ (with probability $\geq 1-3\gamma$) is that $K>2c\sqrt{\pi}$, where a larger $K$ results in $\delta > \bar{\delta}$ being satisfied sooner (smaller $n$). Thus, for $K>2c\sqrt{\pi}$, we have $\frac{1}{n}\|X \widehat{\Delta}\|_2^2 \in O(\delta) = O(1/\sqrt{n})$ (with probability $\geq 1- 4 \gamma$ due to a union bound), achieving the slow rate. This completes the proof. 
\end{proof}

\subsection{Fast rate: Proofs}

In this section, we provide the proofs of Section \ref{fast_rate}.

To prove \Cref{thm_fast_rate_main_theorem_v2}, we first need an intermediate result. This result holds for any norm $\| \cdot \|$ (not only $\| \cdot \| = \| \cdot \|_\infty$): 

\begin{lemma}\label{theorem_in_sample_controlled_by_Delta}
Let $\widehat{\param}$ be a minimizer of $V_\delta(\param)$ and consider $\widehat{\Delta} = \widehat{\param}-\param^*$. Assume each entry of $X$ is bounded by $M>0$ and let $r:= \sup_{\x\neq0} \frac{\|\x\|}{\|\x\|_{\infty}}$.\footnote{We introduce $r$ here to keep it general. Note that $r=1$ for $\|\cdot\| = \| \cdot \|_\infty$.} Then for  $\delta > \bar{\delta} = 2\frac{\|X^{\top} \vv{\varepsilon}\|}{n^{1/p} \|\vv{\varepsilon}\|_q}$, we have
\begin{align}
\frac{1}{n} \|X \widehat{\Delta}\|_2^2 \leq 
\delta \left ( 3 \frac{\|\vv{\varepsilon}\|_q}{n^{1/q}} + (2MCr+\delta C+\delta) \|\param^*\|_* \right ) \|\widehat{\Delta}\|_*
\end{align}
where $C = ( 3+\frac{n^{1/q} \delta}{\| \vv{\varepsilon}\|_q} \|\param^*\|_* )$.
\end{lemma}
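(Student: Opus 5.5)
The plan is to start from the inequality of \Cref{lemma_in_sample_error_s_star} and bound each of its four right-hand terms by a multiple of $n\delta\|\widehat{\Delta}\|_*$. Dividing by $n$ should then give exactly the claimed bound. Throughout, $\vv{t}^*$ denotes the maximizer for $\param^*$, so that $\vv{t}^*\geq 0$ and $\|\vv{t}^*\|_p\le n^{1/p}\delta$. The other tool is \Cref{th_controlled_dual_norm_improved_v3}, which under $\delta>\bar\delta$ gives $\|\widehat{\param}\|_*\le C\|\param^*\|_*$.

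\emph{First two terms.} By the dual norm and the assumption $\delta>\bar\delta$,
\[
2\vv{\varepsilon}^\top X\widehat{\Delta}\le 2\|X^\top\vv{\varepsilon}\|\,\|\widehat{\Delta}\|_*< \delta n^{1/p}\|\vv{\varepsilon}\|_q\|\widehat{\Delta}\|_*.
\]
For the second term, the reverse triangle inequality gives $\|\param^*\|_*-\|\widehat{\param}\|_*\le\|\widehat{\Delta}\|_*$. Since $\langle|\vv{\varepsilon}|,\vv{t}^*\rangle\ge0$ and, by Hölder, $\langle|\vv{\varepsilon}|,\vv{t}^*\rangle\le\|\vv{\varepsilon}\|_q n^{1/p}\delta$, this term is at most $2\delta n^{1/p}\|\vv{\varepsilon}\|_q\|\widehat{\Delta}\|_*$. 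If the difference is negative, the term is nonpositive and the same bound holds trivially. Summing the two terms and dividing by $n$ yields $3\delta\frac{\|\vv{\varepsilon}\|_q}{n^{1/q}}\|\widehat{\Delta}\|_*$, using $n^{1/p}/n=n^{-1/q}$.

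\emph{Third term.} For each $i$,
\[
|\x_i^\top\widehat{\Delta}|\le\|\x_i\|\,\|\widehat{\Delta}\|_*\le r\|\x_i\|_\infty\|\widehat{\Delta}\|_*\le rM\|\widehat{\Delta}\|_*.
\]
Hence $\langle|X\widehat{\Delta}|,\vv{t}^*\rangle\le rM\|\widehat{\Delta}\|_*\|\vv{t}^*\|_1$. The norm inequality $\|\vv{t}^*\|_1\le n^{1-1/p}\|\vv{t}^*\|_p\le n\delta$ then applies. Combining with $\|\widehat{\param}\|_*\le C\|\param^*\|_*$, the third term is at most $2n\delta MCr\|\param^*\|_*\|\widehat{\Delta}\|_*$.

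\emph{Fourth term.} Factor $\|\param^*\|_*^2-\|\widehat{\param}\|_*^2=(\|\param^*\|_*-\|\widehat{\param}\|_*)(\|\param^*\|_*+\|\widehat{\param}\|_*)$. When this is negative the term is nonpositive and can be dropped. Otherwise it is at most $\|\widehat{\Delta}\|_*(1+C)\|\param^*\|_*$. Together with $\|\vv{t}^*\|_2^2\le n^{1-2/p}\|\vv{t}^*\|_p^2\le n\delta^2$, the term is bounded by $n\delta(\delta C+\delta)\|\param^*\|_*\|\widehat{\Delta}\|_*$.

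Summing the four bounds and dividing by $n$ gives the statement. The main care needed is with signs: the second and fourth terms involve $\|\param^*\|_*-\|\widehat{\param}\|_*$, which can be negative. The fix in both cases is to bound by the positive part, using that the multipliers $\langle|\vv{\varepsilon}|,\vv{t}^*\rangle$ and $\|\vv{t}^*\|_2^2(\|\param^*\|_*+\|\widehat{\param}\|_*)$ are nonnegative. The other point to watch is using the $\ell_1$ (rather than $\ell_2$) norm of $\vv{t}^*$ in the third term, so that the factor comes out as exactly $n\delta$.
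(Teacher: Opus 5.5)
Your proposal is correct and follows essentially the same route as the paper: start from \Cref{lemma_in_sample_error_s_star}, bound the four terms by the dual norm with $\delta>\bar\delta$, by the reverse triangle inequality with H\"older, by a dual-norm/H\"older estimate giving $nMr\delta\|\widehat{\Delta}\|_*$, and by factoring the difference of squares with $\|\vv{t}^*\|_2^2\le n\delta^2$, then apply \Cref{th_controlled_dual_norm_improved_v3}. The differences are cosmetic: you bound the third term row by row and use $\|\vv{t}^*\|_1\le n\delta$, where the paper applies H\"older to $(\|\x_i\|)_i$ and $\vv{t}^*$; your explicit sign handling fills in a step the paper leaves implicit; and the strict inequality in your first display should be $\le$, since $\widehat{\Delta}$ may vanish.
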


\begin{proof}
By \Cref{lemma_in_sample_error_s_star}, we have that
\begin{equation*}
        \|X \widehat{\Delta}\|_2^2 \leq 2\vv{\varepsilon}^{\top} X \widehat{\Delta}+2(\|\param^*\|_*-\|\widehat{\param}\|_*) \langle |\vv{\varepsilon}|,\vv{t}^*\rangle + 2\|\widehat{\param}\|_* \langle |X\widehat{\Delta}|,\vv{t}^*\rangle+\|\vv{t}^*\|_2^2 (\|\param^*\|_*^2-\|\widehat{\param}\|_*^2).
\end{equation*}
To prove \Cref{theorem_in_sample_controlled_by_Delta}, we simply bound each term. Concretely, by definition of the dual norm and since $\delta > \bar{\delta} = 2\frac{\|X^{\top} \vv{\varepsilon}\|}{n^{1/p} \|\vv{\varepsilon}\|_q}$:
\begin{equation*}
2\vv{\varepsilon}^{\top} X \widehat{\Delta} \leq 2\|X^{\top} \vv{\varepsilon}\| \| \widehat{\Delta} \|_* \leq \delta n^{1/p} \|\vv{\varepsilon}\|_q \| \widehat{\Delta} \|_*.
\end{equation*}
Furthermore, the reverse triangle inequality yields
\begin{equation*}
    \|\param^*\|_*-\|\widehat{\param}\|_* \leq \|\param^*-\widehat{\param}\|_*  = \|\widehat{\Delta}\|_*,
\end{equation*}
and by Hölder's inequality
\begin{equation*}
\langle |\vv{\varepsilon}|,\vv{t}^*\rangle \leq \|\vv{\varepsilon}\|_q \|\vv{t}^*\|_p \leq \|\vv{\varepsilon}\|_q n^{1/p}\delta.
\end{equation*}
Therefore,
\begin{equation*}
    2(\|\param^*\|_*-\|\widehat{\param}\|_*) \langle |\vv{\varepsilon}|,\vv{t}^*\rangle \leq 2\|\vv{\varepsilon}\|_q n^{1/p}\delta \|\widehat{\Delta}\|_*
\end{equation*}

To bound $\langle |X\widehat{\Delta}|,\vv{t}^*\rangle$, note first that
\begin{align*}
    \langle |X\widehat{\Delta}|,\vv{t}^*\rangle = \sum_{i=1}^n \textrm{sign}(\x_i^{\top} \widehat{\Delta})\x_i^{\top} \widehat{\Delta}  t_i^*  \leq \left \| \sum_{i=1}^n \textrm{sign}(\x_i^{\top} \widehat{\Delta})\x_i  t_i^* \right \| \| \widehat{\Delta}\|_* \leq \\  \sum_{i=1}^n \|\x_i\|  t_i^*  \| \widehat{\Delta}\|_* \leq \| \vv{\tau}  \|_q \|\vv{t}^*\|_p \| \widehat{\Delta}\|_*
\end{align*}
where the last inequality holds due to Hölder's inequality with $\tau_i := \|\x_i\|$. Since $X$ has entries bounded by $M>0$, we get
\begin{equation*}
    \|\vv{\tau} \|_q \leq \left ( \sum_{i=1}^n (Mr)^q \right )^{1/q} = n^{1/q}M r
\end{equation*}
and by definition of $\vv{t}^*$,
\begin{equation*}
\|\vv{t}^*\|_p \leq n^{1/p}\delta,
\end{equation*}
and therefore
\begin{equation*}
    \langle |X\widehat{\Delta}|,\vv{t}^*\rangle \leq n^{1/q}M r n^{1/p}\delta \| \widehat{\Delta}\|_* = n M r \delta \| \widehat{\Delta}\|_*.
\end{equation*}
Thus, we have the bound
\begin{equation*}
    2\|\widehat{\param}\|_* \langle |X\widehat{\Delta}|,\vv{t}^*\rangle \leq 2\|\widehat{\param}\|_* n M r \delta \| \widehat{\Delta}\|_*.
\end{equation*}
Finally, we have that
\begin{equation*}
    \|\vv{t}^*\|_2^2 (\|\param^*\|_*^2-\|\widehat{\param}\|_*^2) = \|\vv{t}^*\|_2^2 (\|\param^*\|_*+\|\widehat{\param}\|_*) (\|\param^*\|_*-\|\widehat{\param}\|_*) \leq \|\vv{t}^*\|_2^2 (\|\param^*\|_*+\|\widehat{\param}\|_*) \|\widehat{\Delta}\|_*
\end{equation*}
by the reverse triangle inequality, and since
\begin{equation*}
\| \vv{t}^* \|_2 \leq n^{1/2-1/p} \|\vv{t}^*\|_p  \leq n^{1/2-1/p} n^{1/p}\delta = n^{1/2} \delta,
\end{equation*}
by a norm inequality, we get that $\|\vv{t}^* \|_2^2 \leq n \delta^2$ and so
\begin{equation*}
    \|\vv{t}^*\|_2^2 (\|\param^*\|_*^2-\|\widehat{\param}\|_*^2) \leq n \delta^2 (\|\param^*\|_*+\|\widehat{\param}\|_*) \|\widehat{\Delta}\|_*.
\end{equation*}
Combining all these bounds, we get
\begin{align*}
\|X \widehat{\Delta}\|_2^2 \leq 
\delta n^{1/p} \|\vv{\varepsilon}\|_q \| \widehat{\Delta} \|_* +
2\|\vv{\varepsilon}\|_q n^{1/p}\delta \|\widehat{\Delta}\|_* + \\
2\|\widehat{\param}\|_* n M r \delta \| \widehat{\Delta}\|_*  +
n \delta^2 (\|\param^*\|_*+\|\widehat{\param}\|_*) \|\widehat{\Delta}\|_* = \\
\Big ( 3\delta n^{1/p} \|\vv{\varepsilon}\|_q +
2\|\widehat{\param}\|_* n M r \delta  +
n \delta^2 (\|\param^*\|_*+\|\widehat{\param}\|_*) \Big ) \|\widehat{\Delta}\|_*
\end{align*}
and therefore
\begin{align*}
\frac{1}{n} \|X \widehat{\Delta}\|_2^2 \leq 
\delta \left ( 3 \frac{\|\vv{\varepsilon}\|_q}{n^{1/q}} +
2\|\widehat{\param}\|_* M r  +
\delta (\|\param^*\|_*+\|\widehat{\param}\|_*) \right ) \|\widehat{\Delta}\|_*.
\end{align*}
Finally, we bound $\|\widehat{\param}\|_*$ using \Cref{th_controlled_dual_norm_improved_v3} to obtain
\begin{align*}
\frac{1}{n} \|X \widehat{\Delta}\|_2^2 \leq 
\delta \left ( 3 \frac{\|\vv{\varepsilon}\|_q}{n^{1/q}} + (2MCr+\delta C+\delta) \|\param^*\|_* \right ) \|\widehat{\Delta}\|_*
\end{align*}
completing the proof.
\end{proof}

\textbf{Proof of \Cref{thm_fast_rate_main_theorem_v2}.} We now prove \Cref{thm_fast_rate_main_theorem_v2}:

\begin{proof}
We divide the proof into two cases, taking inspiration from the two cases in the proof of Theorem 2.3 in \citet{xie2024high}.

\textbf{Case 1: Assume $\|\widehat{\param}-\param^* \|_1+2(\|\param^*\|_1-\|\widehat{\param}\|_1) \leq 0$.} In this case, we will not need the RE condition. Indeed, note first that
\begin{equation*}
    \|\param^*\|_1-\|\widehat{\param}\|_1 = \|\widehat{\param}\|_1-\|\param^* \|_1+2(\|\param^*\|_1-\|\widehat{\param}\|_1) \leq \|\widehat{\param}-\param^* \|_1+2(\|\param^*\|_1-\|\widehat{\param}\|_1) \leq 0 
\end{equation*}

Next, using \Cref{lemma_in_sample_error_s_star} with $\| \cdot \|_* = \| \cdot \|_1$, we have:
\begin{equation*}
        \|X \widehat{\Delta}\|_2^2 \leq 2\vv{\varepsilon}^{\top} X \widehat{\Delta}+2(\|\param^*\|_1-\|\widehat{\param}\|_1) \langle |\vv{\varepsilon}|,\vv{t}^*\rangle + 2\|\widehat{\param}\|_1 \langle |X\widehat{\Delta}|,\vv{t}^*\rangle+\|\vv{t}^*\|_2^2 (\|\param^*\|_1^2-\|\widehat{\param}\|_1^2).
\end{equation*}
We bound the first term as
\begin{equation*}
    2\vv{\varepsilon}^{\top} X \widehat{\Delta} \leq 2 \|X^{\top} \vv{\varepsilon}\|_\infty \|\widehat{\Delta} \|_1 \leq \delta n^{1/p} \|\vv{\varepsilon}\|_q \|\widehat{\Delta} \|_1
\end{equation*}
using a dual norm inequality and the assumption $\delta > \bar{\delta} = 2\frac{\|X^{\top} \vv{\varepsilon}\|}{n^{1/p} \|\vv{\varepsilon}\|_q}$. Moreover, we bound the second term as
\begin{equation*}
    \langle |\vv{\varepsilon}|,\vv{t}^*\rangle \leq \|\vv{\varepsilon}\|_q \|\vv{t}^*\|_p \leq \|\vv{\varepsilon}\|_q n^{1/p} \delta,
\end{equation*}
using Hölder's inequality and the $p$-norm constraint on $\vv{t}^*$. Combining these bounds, we get
\begin{align*}
        \|X \widehat{\Delta}\|_2^2 \leq \\
        \delta n^{1/p} \|\vv{\varepsilon}\|_q \|\widehat{\Delta} \|_1+2(\|\param^*\|_1-\|\widehat{\param}\|_1) \|\vv{\varepsilon}\|_q n^{1/p} \delta + 2\|\widehat{\param}\|_1 \langle |X\widehat{\Delta}|,\vv{t}^*\rangle+\|\vv{t}^*\|_2^2 (\|\param^*\|_1^2-\|\widehat{\param}\|_1^2) = \\
        \delta n^{1/p} \|\vv{\varepsilon}\|_q \big [ \|\widehat{\Delta} \|_1+2(\|\param^*\|_1-\|\widehat{\param}\|_1) \big ] + 2\|\widehat{\param}\|_1 \langle |X\widehat{\Delta}|,\vv{t}^*\rangle+\|\vv{t}^*\|_2^2 (\|\param^*\|_1^2-\|\widehat{\param}\|_1^2) \leq \\
        2\|\widehat{\param}\|_1 \langle |X\widehat{\Delta}|,\vv{t}^*\rangle.
\end{align*}
where we used the assumption $\|\widehat{\param}-\param^* \|_1+2(\|\param^*\|_1-\|\widehat{\param}\|_1) \leq 0$ and that $\|\param^*\|_1-\|\widehat{\param}\|_1 \leq 0$ in the last step. Moreover, by a norm inequality,
\begin{equation*}
\| \vv{t}^* \|_2 \leq n^{1/2-1/p} \|\vv{t}^*\|_p  \leq n^{1/2-1/p} n^{1/p}\delta = n^{1/2} \delta,
\end{equation*}
we get
\begin{equation*}
    \langle |X\widehat{\Delta}|,\vv{t}^*\rangle \leq \|X\widehat{\Delta}\|_2 \|\vv{t}^*\|_2 \leq \|X\widehat{\Delta}\|_2 n^{1/2} \delta,
\end{equation*}
using the Cauchy–Schwarz inequality. Hence, $\|X\widehat{\Delta}\|_2^2 \leq 2\|\widehat{\param}\|_1 \|X\widehat{\Delta}\|_2 n^{1/2} \delta$ which implies $\frac{1}{\sqrt{n}} \|X \widehat{\Delta}\|_2 \leq 2\delta\|\widehat{\param}\|_1$ and so
\begin{align*}
        \frac{1}{n} \|X \widehat{\Delta}\|_2^2 \leq 4\delta^2\|\widehat{\param}\|_1^2.
\end{align*}
We finally bound $\|\widehat{\param}\|_1^2$ using \Cref{th_controlled_dual_norm_improved_v3} yielding $\|\widehat{\param}\|_1^2 \leq C^2 \|\param^*\|_1^2$ and
\begin{align*}
        \frac{1}{n} \|X \widehat{\Delta}\|_2^2 \leq 4\delta^2C^2 \|\param^*\|_1^2.
\end{align*}

\textbf{Case 2: Assume $\|\widehat{\param}-\param^* \|_1+2(\|\param^*\|_1-\|\widehat{\param}\|_1) \geq 0$.} In this case, we will use the RE condition. More precisely, we have
\begin{equation*}
    \|\widehat{\Delta} \|_1 \leq \|\widehat{\Delta}_S \|_1 + \|\widehat{\Delta}_{S^c} \|_1
\end{equation*}
and
\begin{align*}
    \|\param^*\|_1-\|\widehat{\param}\|_1 = \|\param^*_S\|_1 - \|\param^*_S+\widehat{\Delta}_S+\widehat{\Delta}_{S^c} \|_1 = \\ \|\param^*_S\|_1 - \|\param^*_S+\widehat{\Delta}_S \|_1 - \| \widehat{\Delta}_{S^c} \|_1 \leq \|\widehat{\Delta}_S\|_1 - \| \widehat{\Delta}_{S^c} \|_1.
\end{align*}
Hence,
\begin{equation*}
    0 \leq \|\widehat{\Delta} \|_1+2(\|\param^*\|_1-\|\widehat{\param}\|_1) \leq \|\widehat{\Delta}_S \|_1 + \|\widehat{\Delta}_{S^c} \|_1 + 2(\|\widehat{\Delta}_S\|_1 - \| \widehat{\Delta}_{S^c} \|_1)
\end{equation*}
which implies that $\|\widehat{\Delta}_{S^c} \|_1 \leq 3 \|\widehat{\Delta}_S\|_1$. In other words, we can use $\mathrm{RE}(s,3)$ to get
\begin{equation*}
\|\widehat{\Delta}\|_1 = \|\widehat{\Delta}_S\|_1+\|\widehat{\Delta}_{S^c}\|_1 \leq 4 \|\widehat{\Delta}_S\|_1 \leq 4s \|\widehat{\Delta}_S\|_2 \leq 4s \frac{\|X\widehat{\Delta}\|_2}{\kappa \sqrt{n}},
\end{equation*}
where we also used the norm inequality $\|\widehat{\Delta}_S\|_1 \leq s \|\widehat{\Delta}_S\|_2$. This bound combined with \Cref{theorem_in_sample_controlled_by_Delta}~yields
\begin{align*}
    \frac{1}{n} \|X\widehat{\Delta}\|_2^2 \leq \delta B \|\widehat{\Delta}\|_1 \leq 4\delta B s \frac{\|X\widehat{\Delta}\|_2}{\kappa \sqrt{n}} \quad \Rightarrow \quad \frac{1}{\sqrt{n}} \|X\widehat{\Delta}\|_2 \leq \frac{4\delta B s}{\kappa}.
\end{align*}
Therefore, $\frac{1}{n} \|X\widehat{\Delta}\|_2^2 \leq \frac{16\delta^2 B^2 s^2}{\kappa^2}$, completing the second case.

Combining Cases 1 and 2, we conclude that
\begin{align*}
\frac{1}{n} \|X\widehat{\Delta}\|_2^2 \leq \delta^2 \cdot \max \left \{ 4C^2 \|\param^*\|_1^2, \frac{16 B^2 s^2}{\kappa^2} \right \}.
\end{align*}
That is, we have proved \eqref{eq_fast_rate_main_theorem_v2_bound}. It remains to show the rates. To this end, assume $\vv{\varepsilon}  \sim N(0, \sigma  I)$. We show that $B$ and $C$ are bounded by constants with high probability. We start with $C$. By a norm inequality and the proof of \Cref{improved_in_sample_error_decay}, we have with probability $\geq 1-\gamma$:
\begin{equation*}
    \frac{\|\vv{\varepsilon}\|_q}{n^{1/q}} \geq \frac{\|\vv{\varepsilon}\|_1}{n} \geq \sigma \sqrt{\frac{2}{\pi}}-\sigma \sqrt{\frac{2 \log(1/\gamma)}{n}}.
\end{equation*}
Therefore, with probability $\geq 1-\gamma$:
\begin{equation*}
    C =  \left (  3+\frac{n^{1/q} \delta}{\| \vv{\varepsilon}\|_q} \|\param^*\|_* \right ) \leq \left (  3+\frac{\delta}{\sigma \sqrt{\frac{2}{\pi}}-\sigma \sqrt{\frac{2 \log(1/\gamma)}{n}}} \|\param^*\|_* \right ),
\end{equation*}
valid for $n > \pi \log(1/\gamma)$ (to ensure the denominator is positive). In other words, $C$ is bounded by a constant with high probability.\footnote{For the finitely many $n$ such that $n \leq \pi \log(1/\gamma)$, we have that $C$ is almost surely bounded.} Moreover, by the proof of \Cref{improved_in_sample_error_decay}, we have with probability $\geq 1-\gamma$:
\begin{equation*}
    \frac{\|\vv{\varepsilon}\|_q}{n^{1/q}} \leq \sigma + \sigma \sqrt{\frac{2 \log(1/\gamma)}{n}}.
\end{equation*}
Hence, applying a union bound, we have with probability $\geq 1-2\gamma$:
\begin{align*}
    B = 3 \frac{\|\vv{\varepsilon}\|_q}{n^{1/q}} + (2M+\delta )\|\param^*\|_*C +\delta\|\param^*\|_* \leq \\ 
    3 \left ( \sigma + \sigma \sqrt{\frac{2 \log(1/\gamma)}{n}} \right ) + (2M+\delta )\|\param^*\|_* \left (  3+\frac{\delta}{\sigma \sqrt{\frac{2}{\pi}}-\sigma \sqrt{\frac{2 \log(1/\gamma)}{n}}} \|\param^*\|_* \right ) +\delta\|\param^*\|_*  
\end{align*}
valid for $n > \pi \log(1/\gamma)$. We conclude that $B$ is also bounded by a constant with high probability. As a result, we have that $\frac{1}{n} \|X\widehat{\Delta}\|_2^2 \in O(\delta^2)$ with high probability ($\geq 1-2\gamma$), independent of $p$ since the upper bounds on $B$ and $C$ are independent of $p$. Finally, by setting $\delta$ as in \Cref{improved_in_sample_error_decay} and following its proof (where $\delta>\bar{\delta}$ with probability $\geq 1-3\gamma$), we analogously obtain that $\frac{1}{n} \|X\widehat{\Delta}\|_2^2 \in O(\delta^2) = O(1/n)$ for $K>2c\sqrt{\pi}$ with probability $\geq 1-5\gamma$ due to a union bound. This completes the proof. 
\end{proof}

\section{Small and large ambiguity balls: Proofs and additional details}

In this section, we provide the proofs and additional details of Section \ref{additional_properties}.

\subsection{Proof of Theorem \ref{th_when_beta_zero}} 
To prove \Cref{th_when_beta_zero}, we will use a generalization of Danskin's theorem \citep{danskin1967theory} proved in the PhD thesis of Dimitri P. Bertsekas \citep[Proposition A.22]{bertsekas1971control}. Indeed, \Cref{th_general_danskin} is a straightforward special case of \citep[Proposition A.22]{bertsekas1971control} adapted to our setting:\footnote{The subdifferential of a function $\varphi: \R^d \rightarrow \R$ at a point $\z \in \R^d$ is the set 
$$\{ \vv{v} \in \R^d: \varphi(\z')-\varphi(\z) \geq \langle \vv{v},\z'-\z  \rangle,\forall \z' \in \R^d \}.$$
For standard properties of subdifferentials, see, e.g., \citep{bertsekas_convex_2003,clarke_optimization_1990,boyd_subgradients_2022}.
}

\begin{theorem}[{\citealp[Proposition A.22]{bertsekas1971control}, Simplified}]\label{th_general_danskin}
Let $\phi: \R^d \times \R^n \rightarrow \R$ be a continuous function such that $\phi(\cdot,\vv{t})$ is convex for each $\vv{t} \in T$ in a compact set $T$, and consider $f(\param):= \sup_{\vv{t} \in T} \phi(\param,\vv{t})$. Then, for each $\param \in R^d$, the subdifferential of $f$ equals
\begin{equation}
\partial f(\param) = \mathrm{conv}\{ \partial_{\param} \phi(\param,\vv{t}): \vv{t} \in T(\param) \}, \quad T(\param) := \{\vv{t} \in T: \phi(\param,\vv{t}) = \max_{\vv{t} \in T} \phi(\param,\vv{t}) \},
\end{equation}
where $\partial_{\param} \phi(\param,\vv{t})$ is the subdifferential of $\phi(\cdot,\vv{t})$ at $\param$.
\end{theorem}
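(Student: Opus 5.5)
The plan is to show that $f$ is a finite convex function, identify its directional derivative (Danskin's formula), and then compare support functions of the two convex sets. Since $T$ is compact and $\phi$ is continuous, the supremum is attained and $f$ is finite everywhere. As a pointwise supremum of convex functions, $f$ is convex. Hence $\partial f(\param)$ is a nonempty compact convex set whose support function is the directional derivative $f'(\param;\vv{d})$. Write $A(\param) := \mathrm{conv}\{\partial_{\param}\phi(\param,\vv{t}) : \vv{t}\in T(\param)\}$. It then suffices to show that $A(\param)$ is compact and convex, and that
\[
f'(\param;\vv{d}) = \max_{\vv{t}\in T(\param)} \phi'(\param,\vv{t};\vv{d}) = \max_{\vv{t}\in T(\param)}\ \max_{\vv{v}\in\partial_{\param}\phi(\param,\vv{t})} \langle \vv{v},\vv{d}\rangle \quad\text{for all } \vv{d}.
\]
The right-hand side is exactly the support function of $A(\param)$. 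Two compact convex sets with the same support function coincide, by a separating hyperplane argument.

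\textbf{Step 1 (easy inclusion).} Take $\vv{t}\in T(\param)$ and $\vv{v}\in\partial_{\param}\phi(\param,\vv{t})$. For every $\param'$,
\[
f(\param') \ge \phi(\param',\vv{t}) \ge \phi(\param,\vv{t}) + \langle \vv{v},\param'-\param\rangle = f(\param)+\langle \vv{v},\param'-\param\rangle,
\]
so $\vv{v}\in\partial f(\param)$. Convexity of $\partial f(\param)$ then gives $A(\param)\subseteq\partial f(\param)$, and hence $f'(\param;\vv{d}) \ge \max_{\vv{t}\in T(\param)}\phi'(\param,\vv{t};\vv{d})$.

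\textbf{Step 2 (uniform local bounds and closed graph).} On a compact neighborhood $U\times T$ of $\{\param\}\times T$, $\phi$ is bounded because it is continuous. The standard convexity argument then gives a Lipschitz constant for $\phi(\cdot,\vv{t})$ near $\param$ that is uniform in $\vv{t}$. Consequently the subgradients $\partial_{\param}\phi(\param'',\vv{t}'')$ are uniformly bounded for $\param''$ near $\param$. The subgradient graph is also closed: if $\param_k\to\param$, $\vv{t}_k\to\bar{\vv{t}}$, and $\vv{g}_k\in\partial_{\param}\phi(\param_k,\vv{t}_k)$ with $\vv{g}_k\to\vv{g}$, then passing to the limit in the subgradient inequality (using continuity of $\phi$) yields $\vv{g}\in\partial_{\param}\phi(\param,\bar{\vv{t}})$. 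In particular, taking $\param_k=\param$ shows that the union $\bigcup_{\vv{t}\in T(\param)}\partial_{\param}\phi(\param,\vv{t})$ is compact. Its convex hull $A(\param)$ is therefore compact, as needed.

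\textbf{Step 3 (reverse inequality; main obstacle).} Fix $\vv{d}$, take $s_k\downarrow 0$, and pick maximizers $\vv{t}_k\in T(\param+s_k\vv{d})$. By compactness, along a subsequence $\vv{t}_k\to\bar{\vv{t}}$, and continuity of $\phi$ and $f$ gives $\bar{\vv{t}}\in T(\param)$. Since $f(\param)\ge\phi(\param,\vv{t}_k)$, and by convexity with $\vv{g}_k\in\partial_{\param}\phi(\param+s_k\vv{d},\vv{t}_k)$,
\[
\frac{f(\param+s_k\vv{d})-f(\param)}{s_k} \le \frac{\phi(\param+s_k\vv{d},\vv{t}_k)-\phi(\param,\vv{t}_k)}{s_k} \le \langle \vv{g}_k,\vv{d}\rangle .
\]
By Step 2, the $\vv{g}_k$ are bounded and a subsequence converges to some $\vv{g}\in\partial_{\param}\phi(\param,\bar{\vv{t}})$. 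Taking limits gives $f'(\param;\vv{d})\le\langle\vv{g},\vv{d}\rangle\le\phi'(\param,\bar{\vv{t}};\vv{d})$, which is the reverse inequality.

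This step is where the real work lies: it needs uniform control of the subgradients together with the closed-graph property, and these come only from joint continuity of $\phi$ and compactness of $T$. Combining Steps 1 and 3 gives equality of the support functions of $\partial f(\param)$ and $A(\param)$, and hence $\partial f(\param)=A(\param)$.
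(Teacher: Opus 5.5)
Your proof is correct. The paper gives no proof of this statement: it cites Bertsekas's thesis (Proposition A.22), specialized to its setting, and uses the result as a black box in the proofs of \Cref{th_when_beta_zero} and \Cref{th_when_beta_minimum_norm}.

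Your argument is a self-contained proof along the standard Danskin route. Step 3 establishes the directional-derivative formula $f'(\beta;d)=\max_{t\in T(\beta)}\phi'(\beta,t;d)$, with the maximum attained at the limit point $\bar t$ once Steps 1 and 3 are combined. The support-function comparison then upgrades this formula to the subdifferential identity.

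What your version buys over the citation is transparency about where each hypothesis enters:
\begin{itemize}
\item Joint continuity of $\phi$ gives both the closed graph of $(\beta,t)\mapsto\partial_\beta\phi(\beta,t)$ and the membership $\bar t\in T(\beta)$.
\item Compactness of $T$ gives attainment of the supremum, the subsequence extraction, and the uniform local Lipschitz bound on subgradients.
\item Finite dimensionality, via Carath\'eodory, makes the convex hull of the compact union already closed. This is exactly why the formula needs no closure operation, a point the paper's statement takes for granted.
\end{itemize}

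One small thing to state explicitly in Step 2. The compactness of $\bigcup_{t\in T(\beta)}\partial_\beta\phi(\beta,t)$ uses that $T(\beta)=\{t\in T:\phi(\beta,t)=f(\beta)\}$ is itself compact, as a closed subset of the compact set $T$. With that noted, the argument is complete. The paper's citation, by contrast, is simply the economical choice for a paper whose focus lies elsewhere.
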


\begin{proof}[Proof of \Cref{th_when_beta_zero}]
Note that $\widehat{\param} = \vv{0}$ minimizes the robust risk $V_\delta(\param)$ if and only if $\vv{0} \in  \partial V_\delta(\vv{0})$, since $V_\delta(\param)$ is convex in $\param$. We compute $\partial V_\delta(\vv{0})$ using \Cref{th_general_danskin}. More precisely, note that $nV_\delta(\param) = \sup_{\vv{t} \in T} \phi(\param,\vv{t})$ with
\begin{align*}
\phi(\param,\vv{t}) &= \sum_{i=1}^n \left ( |\x_i^{\top} \param-y_i|+t_i \|\param \|_* \right )^2 \\
T &=  \left \{ \vv{t} \in \R^n:  t_i \geq 0, \|\vv{t}\|_p \leq n^{1/p} \delta \right \}
\end{align*}
It is easy to verify that the conditions of \Cref{th_general_danskin} hold, yielding:
\begin{equation}\label{eq_subgrad_robust_risk_at_zero}
    \partial_{\param} (nV_{\delta}(\vv{0})) = \mathrm{conv}\{ \partial_{\param} \phi(\vv{0},\vv{t}): \vv{t} \in T(\vv{0}) \}.
\end{equation}
We compute $\partial_{\param} \phi(\vv{0},\vv{t})$ and $T(\vv{0})$. In general,
\begin{align}\label{eq_subgrad_f_s_general}
\partial_{\param} \phi(\param,\vv{t}) = 2\sum_{i=1}^n \left ( |\x_i^{\top} \param-y_i|+t_i \|\param \|_* \right ) \left ( \partial_{\param} |\x_i^{\top} \param-y_i|+t_i \partial_{\param} \|\param \|_* \right ), 
\end{align}
where
\begin{align*}
\partial_{\param} |\x_i^{\top} \param-y_i| &= \x_i \mathrm{sign}( \x_i^{\top} \param-y_i) \quad \text{with} \quad 
\mathrm{sign}(a) :=
\begin{cases}
    \{1\} &\text{ if } a>0  \\
    \{-1\} &\text{ if } a<0 \\
    [-1,1] &\text{ if } a=0 
\end{cases}
\\
\partial_{\param} \| \param \|_* &= \{ \z \in \R^d: \|\z\| \leq 1, \langle \z,\param \rangle = \|\param \|_* \}
\end{align*}
by standard derivations.\footnote{Note that we are working with sets where addition is the Minkowski addition, standard for~subdifferentials.} In particular, for $\param = \vv{0}$, we have $\partial_{\param} \| \param \|_* = \mathbb{B} := \{\z \in \R^d: \|\z\|\leq 1\}$ and $\partial_{\param} |\x_i^{\top} \param-y_i|  = -\x_i \mathrm{sign}(y_i)$. Hence,
\begin{align*}
\partial_{\param} \phi(\vv{0},\vv{t}) = 
2\sum_{i=1}^n |y_i| \left ( -\x_i \mathrm{sign}(y_i) +t_i \mathbb{B} \right ) = \\ 
2\sum_{i=1}^n \left ( -\x_i y_i \right ) + 2 \sum_{i=1}^n \left ( t_i |y_i| \mathbb{B} \right ) = 2\big ( -X^T \y+\langle \vv{t},|\y| \rangle \mathbb{B} \big )
\end{align*}
where we used that $|y_i|\mathrm{sign}(y_i)=y_i$, and that the Minkowski sum of balls with the same center (the origin) equals the ball with their radii added. Next, we note that $\phi(\vv{0},\vv{t}) = \sum_{i=1}^n y_i^2$ for each $\vv{t} \in T$, so $T(\vv{0}) = T$. Therefore, \eqref{eq_subgrad_robust_risk_at_zero} yields
\begin{align*}
    \partial_{\param} (nV_{\delta}(\vv{0}))  = 
    \mathrm{conv} \left [ \bigcup_{\vv{t} \in T} 2\big ( -X^{\top} \y+\langle \vv{t},|\y| \rangle \mathbb{B} \big ) \right ]  = \\
     -X^{\top} \y+\big (\max_{\vv{t} \in T} \langle \vv{t},|\y| \rangle \big ) \mathbb{B} = -X^{\top} \y + n^{1/p} \delta \|\y\|_q \mathbb{B}
\end{align*}
since the convex hull of a union of balls with the same center equals the ball with largest radius (i.e., $\max_{\vv{t} \in T} \langle \vv{t},|\y| \rangle$ in our case) and
\begin{align*}
\max_{\vv{t} \in T} \langle \vv{t},|\y| \rangle = n^{1/p} \delta \max_{\|\vv{t}\|_p \leq 1} \langle \vv{t},|\y| \rangle = n^{1/p} \delta \|\y\|_q.
\end{align*}
Therefore, $\vv{0} \in \partial_{\param} V_{\delta}(\vv{0})$ if and only if
\begin{align*}
\vv{0} \in -X^{\top} \y + n^{1/p} \delta \|\y\|_q \mathbb{B} \quad \Leftrightarrow \quad \|X^{\top} \y \| \leq n^{1/p} \delta \|\y\|_q \quad \Leftrightarrow \quad \frac{\|X^{\top} \y \|}{n^{1/p}\|\y\|_q} \leq  \delta
\end{align*}
completing the proof.
\end{proof}

\subsection{Proof of Theorem \ref{th_when_beta_minimum_norm}} 
We continue with the proof of \Cref{th_when_beta_minimum_norm}. We need the following lemma:

\begin{lemma}\label{lemma_when_beta_minimum_norm}
Let $\bar{\param}$ be the minimum $\|\cdot\|_*$-norm interpolator. Then $X^T \widehat{\vv{\alpha}} \in \partial \|\bar{\param} \|_*$ if and only if $\widehat{\vv{\alpha}}$ solves $\max_{\|X^{\top} \vv{\alpha}\| \leq 1} \vv{\alpha}^{\top} \y$.
\end{lemma}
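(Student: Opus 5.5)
We will prove the lemma through convex duality between the minimum-norm interpolation problem $\min_{X\param=\y}\|\param\|_*$ and the problem $\max_{\|X^{\top}\vv{\alpha}\|\le 1}\vv{\alpha}^{\top}\y$. The second problem is a Lagrange dual of the first. For the Lagrangian $\|\param\|_* - \vv{\alpha}^{\top}(X\param-\y)$, minimizing over $\param$ gives $\vv{\alpha}^{\top}\y$ if $\|X^{\top}\vv{\alpha}\|\le 1$ and $-\infty$ otherwise, because $\|\cdot\|$ is the dual of $\|\cdot\|_*$. Since $X$ has full row rank, the constraint $X\param=\y$ is feasible. The primal is convex with only affine equality constraints, so strong duality holds and a dual optimum is attained (the dual feasible set is compact, as noted after \Cref{th_when_beta_minimum_norm}). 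Hence $\|\bar{\param}\|_* = \max_{\|X^{\top}\vv{\alpha}\|\le1}\vv{\alpha}^{\top}\y$.

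For the direction ($\Leftarrow$), let $\widehat{\vv{\alpha}}$ be a dual maximizer. Then $\|X^{\top}\widehat{\vv{\alpha}}\|\le 1$, and
\[
\langle X^{\top}\widehat{\vv{\alpha}},\bar{\param}\rangle=\widehat{\vv{\alpha}}^{\top}X\bar{\param}=\widehat{\vv{\alpha}}^{\top}\y=\|\bar{\param}\|_*,
\]
where the last equality is strong duality. We use the characterization $\partial\|\param\|_*=\{\z:\|\z\|\le1,\ \langle\z,\param\rangle=\|\param\|_*\}$ recalled in the proof of \Cref{th_when_beta_zero}. It gives $X^{\top}\widehat{\vv{\alpha}}\in\partial\|\bar{\param}\|_*$ directly.

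For the direction ($\Rightarrow$), assume $X^{\top}\widehat{\vv{\alpha}}\in\partial\|\bar{\param}\|_*$. The same characterization gives $\|X^{\top}\widehat{\vv{\alpha}}\|\le1$, so $\widehat{\vv{\alpha}}$ is dual feasible. It also gives $\widehat{\vv{\alpha}}^{\top}\y=\widehat{\vv{\alpha}}^{\top}X\bar{\param}=\|\bar{\param}\|_*$. By weak duality, every dual feasible $\vv{\alpha}$ satisfies $\vv{\alpha}^{\top}\y=\langle X^{\top}\vv{\alpha},\bar{\param}\rangle\le\|X^{\top}\vv{\alpha}\|\,\|\bar{\param}\|_*\le\|\bar{\param}\|_*$. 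So $\widehat{\vv{\alpha}}$ attains the maximum.

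The only nontrivial step is the strong duality equality $\|\bar{\param}\|_*=\max\vv{\alpha}^{\top}\y$ used in ($\Leftarrow$); the ($\Rightarrow$) direction needs only weak duality. The cleanest way to secure it is to write the KKT condition of the primal: $\bar{\param}$ is optimal if and only if $\vv{0}\in\partial\|\bar{\param}\|_*-X^{\top}\vv{\alpha}$ for some $\vv{\alpha}$. This holds without any constraint qualification beyond feasibility, because the constraints are affine. It yields a dual feasible $\vv{\alpha}$ with $\vv{\alpha}^{\top}\y=\|\bar{\param}\|_*$, and combined with weak duality this gives the equality.
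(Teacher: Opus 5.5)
Your proposal is correct and follows essentially the same route as the paper. Both arguments derive the dual $\max_{\|X^{\top}\vv{\alpha}\|\le 1}\vv{\alpha}^{\top}\y$ from the Lagrangian and the conjugate of $\|\cdot\|_*$, identify its value with $\|\bar{\param}\|_*$ by strong duality, and then read off both directions from $\partial\|\bar{\param}\|_*=\{\z:\|\z\|\le 1,\ \z^{\top}\bar{\param}=\|\bar{\param}\|_*\}$. You are slightly more careful than the paper, which simply invokes strong duality throughout: you use only weak duality for the ($\Rightarrow$) direction, and you justify strong duality and dual attainment explicitly via the optimality condition for affine constraints.
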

\begin{proof}
Analogously to the proof of Lemma 1 in \citet{ribeiro_regularization_2023}, we have by strong duality that
\begin{align*}
    \| \bar{\param} \|_* = \min_{X \param = \y} \|\param  \|_* = \max_{\vv{\alpha}} \min_{\param} \left ( \|\param  \|_*+ \vv{\alpha}^{\top}(\y-X \param)  \right ) = \\
    \max_{\vv{\alpha}} \left ( \vv{\alpha}^{\top} \y + \min_{\param} (\|\param  \|_*- \vv{\alpha}^{\top} X \param) \right) = \max_{\|X^{\top}\vv{\alpha}\|\leq 1} \vv{\alpha}^{\top} \y,
\end{align*}
where the last step used that the Fenchel conjugate of $\|\param  \|_*$ is the indicator function on the unit ball with norm~$\|\cdot \|$. Moreover, 
\begin{align*}
\partial \| \bar{\param} \|_* &= \{ \z \in \R^d: \|\z\| \leq 1, \z^{\top} \bar{\param} = \| \bar{\param} \|_* \}
\end{align*}
so $X^{\top} \widehat{\vv{\alpha}} \in \partial \| \bar{\param} \|_*$ implies that $\| X^{\top} \widehat{\vv{\alpha}} \| \leq 1$ and 
\begin{align*}
\widehat{\vv{\alpha}}^{\top} \y = \widehat{\vv{\alpha}}^{\top} X\bar{\param} = (X^{\top} \widehat{\vv{\alpha}})^{\top} \bar{\param}  =   \| \bar{\param} \|_* = \max_{\|X^{\top} \vv{\alpha}\|\leq 1} \vv{\alpha}^{\top} \y.
\end{align*}
That is, $\widehat{\vv{\alpha}}$ solves $\max_{\|X^{\top} \vv{\alpha}\| \leq 1} \vv{\alpha}^{\top} \y$. Conversely, assume $\widehat{\vv{\alpha}}$ solves $\max_{\|X^{\top} \vv{\alpha}\| \leq 1} \vv{\alpha}^{\top} \y$. Then $\| X^{\top} \widehat{\vv{\alpha}} \| \leq 1$ and
\begin{equation*}
    (X^{\top} \widehat{\vv{\alpha}})^{\top} \bar{\param} = \widehat{\vv{\alpha}}^{\top} \y = \max_{\|X^{\top} \vv{\alpha}\|\leq 1} \vv{\alpha}^{\top} \y = \| \bar{\param} \|_*
\end{equation*}
so $X^{\top} \widehat{\vv{\alpha}} \in \partial \| \bar{\param} \|_*$, completing the proof.
\end{proof}

\begin{proof}[Proof of \Cref{th_when_beta_minimum_norm}]
Note that $\bar{\param}$ minimizes $V_{\delta}(\param)$ if and only if $\vv{0} \in \partial_{\param} V_{\delta}(\bar{\param})$. We divide the proof into two subcases: $2 <p \leq \infty$ and $p=2$.

Assume first that $2 <p \leq \infty$. We compute $\partial_{\param} V_{\delta}(\bar{\param})$ using \Cref{th_general_danskin}. To this end, let $\phi$ and $T$ be as in the proof of \Cref{th_when_beta_zero}, so that $nV_\delta(\param) = \sup_{\vv{t} \in T} \phi(\param,\vv{t})$. \Cref{th_general_danskin} implies that
\begin{equation}\label{eq_subgrad_robust_risk_small_delta}
    \partial (nV_\delta(\bar{\param})) = \mathrm{conv}\{ \partial_{\param} \phi(\bar{\param},\vv{t}): \vv{t} \in T(\bar{\param}) \}.
\end{equation}
We compute $\partial_{\param} \phi(\bar{\param},\vv{t})$ and $T(\bar{\param})$. By \eqref{eq_subgrad_f_s_general}, we have
\begin{align*}
\partial_{\param} \phi(\bar{\param},\vv{t}) = 2 \sum_{i=1}^n t_i \|\bar{\param}\|_* (\x_i \mathbb{I}+t_i \partial_{\param} \|\bar{\param}\|_*)
\end{align*}
since $X \bar{\param} = \y$, where $\mathbb{I} = [-1,1]$. Moreover,
\begin{align*}
\phi(\bar{\param},\vv{t}) = \sum_{i=1}^n t_i^2 \|\bar{\param}\|_*^2 = \|\bar{\param}\|_*^2 \|\vv{t} \|_2^2
\end{align*}
and therefore
\begin{align*}
    \sup_{\vv{t} \in T} \phi(\bar{\param},\vv{t}) = \|\bar{\param}\|_*^2 \sup_{\vv{t} \in T} \|\vv{t} \|_2^2 = \|\bar{\param}\|_*^2 n \delta^2
\end{align*}
with the supremum attained only at $t_i = \delta$ due to a norm inequality (recall that $t_i \geq 0$ in $T$). That is, $T(\bar{\param})$ equals the singleton $\bar{\vv{t}}$ with entries $\bar{t}_i = \delta$.\footnote{This argument holds provided $p>2$. For $p=2$, the set $T(\bar{\param})$ has more than one element since the supremum $\sup_{t \in T} \|t \|_2^2$ has multiple solutions, which is why we consider the case $p=2$ separately.} Hence, \eqref{eq_subgrad_robust_risk_small_delta} yields
\begin{align*}
    \partial_{\param} n V_{\delta}(\bar{\param}) =  \partial_{\param} \phi(\bar{\param},\bar{\vv{t}}) =  
    2 \sum_{i=1}^n \delta \|\bar{\param}\|_* (\x_i \mathbb{I}+\delta \partial_{\param} \|\bar{\param}\|_*).
\end{align*}
Pick an arbitrary point in this set:
\begin{equation}\label{eq_min_norm_subgrad_parameter_form}
    2 \sum_{i=1}^n \delta \|\bar{\param}\|_* (\x_i \rho_i+\delta \vv{g_i}),
\end{equation}
where $\rho_i \in \mathbb{I} = [-1,1]$ and $\vv{g_i} \in \partial_{\param} \|\bar{\param}\|_*$. Since $\partial_{\param} \|\bar{\param}\|_*$ is a convex set, we have $\sum_{i=1}^n \vv{g_i} = n\vv{g}$ for some $\vv{g} \in \partial_{\param} \|\bar{\param}\|_*$, which simplifies \eqref{eq_min_norm_subgrad_parameter_form} to
\begin{align*}
    2 \delta \|\bar{\param}\|_* \left [ \sum_{i=1}^n (\x_i \rho_i)+\delta n \vv{g}  \right ] = 
    2 \delta \|\bar{\param}\|_* \left [ X^{\top} \vv{\rho}+\delta n \vv{g}  \right ].
\end{align*}
Thus, $\vv{0} \in \partial_{\param} nV_{\delta}(\bar{\param})$ if and only if 
\begin{equation*}
    \vv{g} = -\frac{X^{\top} \vv{\rho}}{n \delta}
\end{equation*}
for some $\vv{g} \in \partial_{\param} \|\bar{\param}\|_*$ and $\|\vv{\rho}\|_\infty \leq 1$. By \Cref{lemma_when_beta_minimum_norm}, this is if and only if there exists some $\|\vv{\rho}\|_\infty \leq 1$ such that
\begin{equation*}
    \widehat{\vv{\alpha}} := -\frac{\vv{\rho}}{n \delta}
\end{equation*}
solves $\max_{\|X^{\top} \vv{\alpha}\| \leq 1} \vv{\alpha}^{\top} \y$. Equivalently, there exists $\widehat{\vv{\alpha}}$ solving $\max_{\|X^{\top} \vv{\alpha}\| \leq 1} \vv{\alpha}^{\top} \y$ such that
\begin{equation*}
    \|\widehat{\vv{\alpha}} \|_\infty \leq \frac{1}{n \delta}.
\end{equation*}
Crucially, this condition holds if and only if 
\begin{equation*}
    \|\vv{\bar{\alpha}} \|_\infty \leq \frac{1}{n \delta}
\end{equation*}
where $\vv{\bar{\alpha}}$ solves $\max_{\|X^{\top} \vv{\alpha}\| \leq 1} \vv{\alpha}^{\top} \y$ and has the \emph{minimum} $\infty$-norm among all such solutions. In other words, $\vv{\bar{\alpha}}$ solves $\min_{\vv{\alpha}  \in Q} \|\vv{\alpha}\|_\infty$ with $Q = \argmax_{\|X^{\top} \vv{\alpha}\| \leq 1} \vv{\alpha}^{\top} \y$, completing the proof for $2  < p \leq \infty$.

We now continue with the case $p=2$. By \Cref{th_special_cases}, we have that 
\begin{equation*}
    n V_\delta(\param) = \left ( \sqrt{\frac{1}{n}\sum_{i=1}^n (\x_i^{\top} \param - y_i)^2}+\delta \|\param\|_* \right)^2 = \left ( \frac{1}{\sqrt{n}} \|X \param-\y  \|_2 + \delta \|\param\|_* \right)^2
\end{equation*}
To ease the computation of subgradients, we consider instead the equivalent objective
\begin{equation*}
    \sqrt{n V_\delta(\param)} = \frac{1}{\sqrt{n}} \|X \param-\y  \|_2 + \delta \|\param\|_*.
\end{equation*}
Indeed, $\bar{\param}$ minimizes $n V_\delta(\param)$ if and only if $\bar{\param}$ minimizes $\sqrt{n V_\delta(\param)}$, which in turn is if and only if $\vv{0} \in \partial_{\param} \sqrt{n V_\delta(\vv{\bar{\param}})}$. We compute $ \partial_{\param} \sqrt{n V_\delta(\vv{\bar{\param}})}$:
\begin{equation}\label{eq_subgrad_p2_interpolation_root}
     \partial_{\param} \sqrt{n V_\delta(\bar{\param})} = \frac{1}{\sqrt{n}} \partial_{\param}\|X \bar{\param}-\y  \|_2 + \delta \partial_{\param}\|\bar{\param}\|_*.
\end{equation}
By the chain rule for affine transformations (see, e.g., \cite[Proposition 5.4.5]{bertsekas2009convex}), we have that
\begin{equation*}
    \partial_{\param}\|X \bar{\param}-\y  \|_2 = X^T \partial_{\vv{r}} \|\vv{r}\|_2, 
\end{equation*}
where $\partial_{\vv{r}} \|\vv{r}\|_2$ is the subgradient of the norm $\| \cdot \|_2$ at $\vv{r} = X \bar{\param}-y$. Since $\vv{r} = X \bar{\param}-\y = \vv{0}$, we get $\partial_{\vv{r}} \|\vv{r}\|_2 = \{\vv{u}: \|\vv{u} \|_2 \leq 1\}$ and therefore
\begin{equation*}
    \partial_{\param}\|X \bar{\param}-\y  \|_2 = \{X^{\top} \vv{u}: \|\vv{u} \|_2 \leq 1\}.
\end{equation*}
Thus, the subgradient in \eqref{eq_subgrad_p2_interpolation_root} contains zero  if and only if
\begin{equation*}
    \frac{1}{\sqrt{n}} X^{\top} \vv{u} + \delta \vv{g} = \vv{0} \quad \Leftrightarrow \quad \vv{g} = \frac{1}{\delta \sqrt{n}} X^{\top} \vv{u}
\end{equation*}
for some $\|\vv{u}\|_2 \leq 1$ and $\vv{g} \in \partial_{\param}\|\bar{\param}\|_*$. By \Cref{lemma_when_beta_minimum_norm}, this is if and only if there exists $\vv{u} \in \R^n$ such that $\|\vv{u} \|_2 \leq 1$ and
\begin{equation*}
    \widehat{\vv{\alpha}} := -\frac{\vv{u}}{\delta \sqrt{n}}
\end{equation*}
solves $\max_{\|X^{\top} \vv{\alpha}\| \leq 1} \vv{\alpha}^{\top} \y$. Equivalently, there exists $\widehat{\vv{\alpha}}$ solving $\max_{\|X^{\top} \vv{\alpha}\| \leq 1} \vv{\alpha}^{\top} \y$ such that
\begin{equation*}
    \|\widehat{\vv{\alpha}}\|_2 \leq \frac{1}{\delta \sqrt{n}}.
\end{equation*}
Similarly to the case $2 <p \leq \infty$, this holds if and only if  
\begin{equation*}
    \|\vv{\bar{\alpha}}\|_2 \leq \frac{1}{\delta \sqrt{n}},
\end{equation*}
where $\vv{\bar{\alpha}}$ solves $\max_{\|X^{\top} \vv{\alpha}\| \leq 1} \vv{\alpha}^{\top} \y$ and has the \emph{minimum} $2$-norm among all such solutions. In other words, $\vv{\bar{\alpha}}$ solves $\min_{\vv{\alpha}  \in Q} \|\vv{\alpha}\|_2$ with $Q = \argmax_{\|X^{\top} \vv{\alpha}\| \leq 1} \vv{\alpha}^{\top} \y$, completing the proof for $p = 2$. Combining both cases completes the proof in general.
\end{proof} 
\subsection{Computing \texorpdfstring{$\delta_S$}{δ\_S} via a convex program} 
We conclude by showing that $\delta_S$ in \Cref{th_when_beta_minimum_norm} is easy to compute via a convex~program:
\begin{remark}[Computing $\delta_S$]\label{remark_computing_delta_S}
For brevity, let $\bar{\vv{\alpha}}$ denote any solution of $\min_{\vv{\alpha}  \in Q}\|\vv{\alpha}\|_{\infty}$ for $2<p\leq \infty$ and $\min_{\vv{\alpha}  \in Q}\|\vv{\alpha}\|_{2}$ for $p=2$. To compute the bound $\delta_S$ in \Cref{th_when_beta_minimum_norm}, we need to compute $\bar{\vv{\alpha}}$ (any will do). Fortunately, $\bar{\vv{\alpha}}$ can be easily obtained from a simple convex program. Indeed, solve the inner maximization $v^{\star}:= \max_{\|X^{\top} \vv{\alpha}\| \leq 1} \vv{\alpha}^{\top} \y$ first, which is a convex program with linear objective and a compact convex constraint $\|X^{\top} \vv{\alpha}\| \leq 1$ since $X^{\top}$ is injective ($X$ has full rank). Then solve the outer minimization $\min_{\vv{\alpha}  \in Q} \|\vv{\alpha}\|_\infty$ or $\min_{\vv{\alpha}  \in Q} \|\vv{\alpha}\|_2$, with convex and affine constraints $Q = \{\vv{\alpha}: \|X^{\top} \vv{\alpha}\| \leq 1, \vv{\alpha}^{\top} \y = v^{\star} \}$ to obtain $\bar{\vv{\alpha}}$. Furthermore, for $\y\neq0$, we have a unique solution for the inner maximization provided the set $\{\vv{\alpha}:\|X^{\top} \vv{\alpha}\| \leq 1\}$ is strictly convex.\footnote{Indeed, the set $\{ \vv{\alpha}: \|X^{\top} \vv{\alpha}\| \leq 1\}$ contains a ball around the origin and thus spans the directions where $\vv{\alpha}^{\top} \y$ increases. Thus, strict convexity of $\{ \vv{\alpha}: \|X^{\top} \vv{\alpha}\| \leq 1\}$ guarantees uniqueness of $\max_{\|X^{\top} \vv{\alpha}\| \leq 1} \vv{\alpha}^{\top} \y$.} In this case, $\bar{\vv{\alpha}}$ equals this unique solution (for any $p$). An example is the Euclidean norm $\| \cdot \| = \| \cdot \|_2$ where an easy derivation yields $\bar{\vv{\alpha}} = (XX^T)^{-1}\y/\sqrt{\y^{\top} (XX^T)^{-1}\y}$ (with $XX^T$ invertible since $X$ has full rank).\footnote{To see this, let $M := XX^T$. By Cauchy–Schwarz $\vv{\alpha}^\top \y = (X^{\top} \vv{\alpha})^{\top} (X^{\top} M^{-1} \y) \leq \|X^{\top} \vv{\alpha}\|_2 \|X^{\top} M^{-1} \y\|_2$ with equality iff $X^{\top} \vv{\alpha} =c X^{\top} M^{-1} \y$ for some $c>0$. The constraint $\|X^{\top} \vv{\alpha}\|_2 = 1$ (attained at optimality) yields $c = 1/\|X^{\top} M^{-1} \y\|_2$ and thus $X^{\top} \vv{\alpha} =c X^{\top} M^{-1} \y$ iff $\vv{\alpha} =cM^{-1} \y =M^{-1} \y/\|X^{\top} M^{-1} \y\|_2$ (since $X^{\top}$ is injective), completing the derivation.}
\end{remark}

\section{Numerical solvers: Additional details}
\label{efficient_solvers_appendix}

In this section, we provide additional details for the numerical solvers described in \Cref{efficient_solvers}. More precisely, we present two solvers that minimize the robust risk, the saddle-point solver (Appendix \ref{appendix_saddle_point_solver}) and $\eta$-trick solver (Appendix \ref{appendix_eta_trick_solver}), and one solver that instead evaluates the robust risk quickly, by exploiting the one-dimensional subproblems (Appendix \ref{app:sec:rob_eval}). 

\subsection{Saddle-point formulation and solver} \label{appendix_saddle_point_solver}
We use the following saddle-point formulation of the robust risk:

\begin{theorem}[Saddle-point formulation of robust risk]\label{th:saddle_point_form}
The robust risk of Wasserstein DRO linear regression ($2 \leq p < \infty$) with square loss equals
\begin{equation}\label{eq:saddle_point_form}
    n V_{\delta}(\param) = \sup_{\vv{\alpha} \geq 0} K(\param,\vv{\alpha}) :=  \sup_{\vv{\alpha} \geq 0} \left [ n^{1/p} \delta \|\param\|_* \|\vv{\alpha} \|_q+\sum_{i=1}^n \alpha_i |\x_i^T \param - y_i | - \alpha_i^2/4 \right ].
\end{equation}
Moreover, by applying the bijection $\gamma_i = \alpha_i^q$, where $1/p+1/q=1$, we get the following form:
\begin{equation}\label{eq:saddle_point_form_alternative}
n V_{\delta}(\param) = \sup_{\vv{\gamma} \geq 0} K(\param,\vv{\gamma}^{1/q}) =
\sup_{\vv{\gamma} \geq 0} \left [ n^{1/p} \delta \|\param\|_* \|\vv{\gamma}\|_1^{1/q}+\sum_{i=1}^n \gamma^{1/q}_i |\x_i^T \param - y_i | - \gamma_i^{2/q}/4 \right ]
\end{equation}
Crucially, $K(\param,\vv{\gamma}^{1/q})$ is a convex-concave function with respect to $(\param,\gamma)$. Thus, minimizing the robust risk is equivalent to a convex-concave optimization problem $\min_{\param} n V_{\delta}(\param) = \min_{\param} \sup_{\vv{\gamma} \geq 0} H(\param,\vv{\gamma})$, where $H(\param,\vv{\gamma}) := K(\param,\vv{\gamma}^{1/q})$. 
\end{theorem}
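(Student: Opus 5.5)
The plan is to start from the robust quadratic form of \Cref{th_supremum_robust_risk}, linearize the square with a Fenchel-type identity, and then evaluate the supremum over $\vv{t}$ in closed form by Hölder duality. Write $a_i := |\x_i^{\top}\param - y_i| \geq 0$ and $b := \|\param\|_* \geq 0$. By \Cref{th_supremum_robust_risk},
\begin{equation*}
nV_\delta(\param) = \sup_{\vv{t}\geq 0,\ \|\vv{t}\|_p \leq n^{1/p}\delta} \sum_{i=1}^n (a_i + t_i b)^2 .
\end{equation*}
For $u \geq 0$ we use the elementary identity $u^2 = \sup_{\alpha \geq 0} (\alpha u - \alpha^2/4)$. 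The supremum is attained at $\alpha = 2u \geq 0$, so restricting to $\alpha \geq 0$ loses nothing. Applying this with $u = a_i + t_i b \geq 0$ for each $i$ gives
\begin{equation*}
nV_\delta(\param) = \sup_{\vv{t}}\ \sup_{\vv{\alpha}\geq 0} \sum_{i=1}^n \big(\alpha_i a_i + b\,\alpha_i t_i - \alpha_i^2/4\big).
\end{equation*}

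Next, both operations are suprema, so they can be interchanged with no minimax theorem needed. For fixed $\vv{\alpha} \geq 0$, the only $\vv{t}$-dependent term is $b\sum_i \alpha_i t_i$. Since $b \geq 0$, its supremum over $\{\vv{t}\geq 0, \|\vv{t}\|_p \leq n^{1/p}\delta\}$ equals $b\, n^{1/p}\delta\,\|\vv{\alpha}\|_q$. This is Hölder duality between $\ell_p$ and $\ell_q$; the restriction $\vv{t}\geq 0$ is harmless because the Hölder maximizer $t_i \propto \alpha_i^{q-1}$ is already nonnegative when $\vv{\alpha}\geq 0$. For $q=1$ (i.e., $p=\infty$, which is excluded here anyway) the maximizer is the constant vector instead. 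Substituting this value yields exactly $K(\param,\vv{\alpha})$ in \eqref{eq:saddle_point_form}.

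The second form \eqref{eq:saddle_point_form_alternative} follows because $\gamma_i = \alpha_i^q$ is a bijection of $[0,\infty)$ onto itself. It maps $\|\vv{\alpha}\|_q = \|\vv{\gamma}\|_1^{1/q}$, $\alpha_i = \gamma_i^{1/q}$, and $\alpha_i^2 = \gamma_i^{2/q}$.

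For the convex-concave claim, fix $\vv{\gamma}\geq 0$ first. Then $H(\cdot,\vv{\gamma})$ is a nonnegative combination of $\|\param\|_*$ and of $|\x_i^{\top}\param - y_i|$, which are convex, plus a constant, so it is convex in $\param$. Now fix $\param$. Since $2\leq p<\infty$ gives $q\in(1,2]$, we have $1/q\in[1/2,1)$ and $2/q\in[1,2)$. Hence:
\begin{itemize}[leftmargin=1.5em]
\item $\|\vv{\gamma}\|_1^{1/q} = (\sum_i\gamma_i)^{1/q}$ on the orthant is a concave nondecreasing power of a linear function, hence concave;
\item each $\gamma_i^{1/q}$ is concave and appears with the nonnegative coefficient $a_i$;
\item each $\gamma_i^{2/q}$ is convex, so $-\gamma_i^{2/q}/4$ is concave.
\end{itemize}
Thus $H(\param,\cdot)$ is concave. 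The last statement, $\min_{\param} nV_\delta(\param) = \min_{\param}\sup_{\vv{\gamma}\geq 0} H(\param,\vv{\gamma})$, is then immediate.

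The only step needing care is the Hölder evaluation under the sign constraint $\vv{t}\geq 0$ together with the case $b=0$. If $b=0$ the $\vv{t}$-term vanishes on both sides, so there is no real obstacle. I expect the argument to be essentially routine once \Cref{th_supremum_robust_risk} is available.
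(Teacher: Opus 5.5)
Your proposal is correct and follows essentially the same route as the paper. You start from the robust quadratic form, linearize each square via $u^2=\sup_{\alpha\ge 0}(\alpha u-\alpha^2/4)$, interchange the two suprema, and evaluate the $\vv{t}$-supremum by Hölder duality as $n^{1/p}\delta\|\param\|_*\|\vv{\alpha}\|_q$; the reparametrization $\gamma_i=\alpha_i^q$ and the concavity checks for $1<q\le 2$ then give the convex-concave structure. The only cosmetic difference is that the paper first rescales $\vv{t}$ to the unit $p$-ball, whereas you keep the radius $n^{1/p}\delta$ explicit.
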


The concave-convex formulation allows us to write 
\[
\min_{\param} n V_{\delta}(\param) = \min_{\param} \sup_{\vv{\gamma} \geq 0} H(\param,\vv{\gamma}),
\] 
and we solve the problem using disciplined saddle-point programming \citep{schiele2024tmlr-disciplined}. We also use this result in the proof of \Cref{th_special_cases} in Appendix \ref{proof_th_special_cases}.

\begin{proof}
    We first note that we do not need to prove \eqref{eq:saddle_point_form_alternative}. Indeed, \eqref{eq:saddle_point_form_alternative} follows from \eqref{eq:saddle_point_form} by using the bijection $\gamma_i = \alpha_i^q$. Moreover, $\|\vv{\gamma}\|_1^{1/q}$, $\gamma^{1/q}_i$ and $- \gamma_i^{2/q}$ are all concave in $\vv{\gamma}$ since $1 < q \leq 2$, so $K(\param,\vv{\gamma}^{1/q})$ is concave in $\vv{\gamma} \geq 0$. Finally, $K(\param,\vv{\alpha})$ is convex in $\param$ (since $\| \param \|_*$ and $|\x_i^T \param - y_i |$ are convex), so $K(\param,\vv{\gamma}^{1/q})$ is also convex in $\param$. Thus, $K(\param,\vv{\gamma}^{1/q})$ is a convex-concave function with respect to $(\param,\vv{\gamma})$. Therefore, what remains to prove is \eqref{eq:saddle_point_form}. \Cref{th_supremum_robust_risk}~yields
    \begin{align*}
        n V_{\delta}(\param) = \max_{\vv{t} \geq 0, \|t\|_p^p \leq n\delta^p} \sum_{i=1}^n \Big (\|\param\|_* t_i+ |\x_i^{\top} \param - y_i | \Big)^2 = \\
        \max_{\vv{t} \geq 0, \|t\|_p \leq 1} \sum_{i=1}^n \Big (n^{1/p} \delta \|\param\|_* t_i+ |\x_i^{\top} \param - y_i | \Big)^2 = \\
        \max_{\vv{t} \geq 0, \|t\|_p \leq 1} \left [ \sum_{i=1}^n \sup_{\alpha_i \geq 0} \ \Big (n^{1/p} \delta \|\param\|_* t_i+ |\x_i^{\top} \param - y_i | \Big) \alpha_i - \alpha_i^2/4 \right ] =  \\
        \max_{\vv{t} \geq 0, \|\vv{t}\|_p \leq 1} \sup_{\alpha \geq 0} \left [ n^{1/p} \delta \|\param\|_* \vv{t}^{\top}\alpha+\sum_{i=1}^n \alpha_i |\x_i^{\top} \param - y_i | - \alpha_i^2/4 \right ] = \\
        \sup_{\alpha \geq 0} \left [ n^{1/p} \delta \|\param\|_*  \max_{t \geq 0, \|\vv{t}\|_p \leq 1} \{\vv{t}^{\top}\alpha \}+\sum_{i=1}^n \alpha_i |\x_i^{\top} \param - y_i | - \alpha_i^2/4 \right ] = \\
        \sup_{\vv{\alpha} \geq 0} \left [ n^{1/p} \delta \|\param\|_* \|\vv{\alpha}\|_q+\sum_{i=1}^n \alpha_i |\x_i^{\top} \param - y_i | - \alpha_i^2/4 \right ] = \sup_{\vv{\alpha} \geq 0} K(\param,\vv{\alpha})
    \end{align*}
    where we used that $\|\vv{\alpha}\|_q = \max_{\|\vv{t}\|_p \leq 1} \vv{t}^{\top} \vv{\alpha} = \max_{\vv{t} \geq 0, \|\vv{t}\|_p \leq 1} \vv{t}^{\top} \vv{\alpha}$ since $\vv{\alpha} \geq 0$. This proves \eqref{eq:saddle_point_form} and completes the proof.
\end{proof}
\subsection{\texorpdfstring{$\eta$}{η}-trick solver}\label{appendix_eta_trick_solver} 

In this section, we describe in detail the $\eta$-trick solver that iteratively alternative between solving a weighted ridge regression problem and updating the weights in closed-form. We take inspiration from previous works on adversarial linear regression~\citep{pmlr-v258-ribeiro25a, ribeiro_kernel_2025} and use what is sometimes referred to as the ``$\eta$-trick'' (see, e.g.,~\citealt{Itrick019}) to minimize the risk $V_\delta(\param)$ with respect to $\param$. We only consider the case where we have the $\infty$-norm in the Wasserstein cost, i.e., $\lVert \cdot \rVert = \lVert \cdot \rVert_\infty$ and $\lVert \cdot \rVert_* = \lVert \cdot \rVert_1$. We leave the extension to other norms as future~work.

The $\eta$-trick refers to a family of variational identities that are useful for optimization \citep{Itrick019}. For the squared $1$-norm:
\begin{equation*}
    \lVert \w \rVert_1^2 = \inf_{\boldeta \in \Delta_d} \sum_{j=1}^d \frac{w_j^2}{\eta_j},
\end{equation*}
where $\Delta_d = \{ \boldeta \in \mathbb{R}_{>0}^d : \sum_{j=1}^d \eta_j = 1 \}$ is the $d$-dimensional simplex and $\R_{>0} = (0,\infty)$. Now, consider the formulation of the risk in Theorem~\ref{th_supremum_robust_risk}
\begin{equation}\label{app:eq:th1risk}
V_{\delta}(\param) = \sup_{\vv{t} \in T} \frac{1}{n} \sum_{i=1}^n (r_i+t_i \|\param\|_1)^2 \quad \text{with} \quad T = \{ t \in \mathbb{R}_{+}^n : \lVert \vv{t} \rVert_p \leq n^{1/p} \delta \}, 
\end{equation}
where $\R_{+} = [0,\infty)$, $r_i = |\x_i^{\top}\param-y_i|$ and we used that $\lVert \cdot \rVert_* = \lVert \cdot \rVert_1$. Letting $\w^{(i)} = [r_i, t_i \beta_1, \dots, t_i \beta_d]^\top$ and applying the trick then gives
\[
(r_i + t_i \|\param\|_1)^2 = \lVert \w^{(i)} \rVert_1^2 = \inf_{\boldeta^{(i)} \in \Delta_{d+1}} \left\{ \frac{r_i^2}{\eta_0^{(i)}} + t_i^2 \sum_{j=1}^d \frac{ \beta_j^2}{\eta_j^{(i)}} \right\}.
\]
Plugging this into \eqref{app:eq:th1risk} we get
\begin{equation*}
     V_\delta(\param) = \sup_{\vv{t} \in T} \frac{1}{n} \sum_{i=1}^n \inf_{\boldeta^{(i)} \in \Delta_{d+1}} \left\{ \frac{r_i^2}{\eta_0^{(i)}} + t_i^2 \sum_{j=1}^d \frac{ \beta_j^2}{\eta_j^{(i)}} \right\},
\end{equation*}
where we can swap the summation and the infimum since the problem is separable over $i$:
\begin{equation}\label{app:eq:supmin}
     V_\delta(\param) = \sup_{\vv{t} \in T} \inf_{\boldeta \in \Delta_{d+1}^n} \underbrace{\frac{1}{n} \sum_{i=1}^n  \left\{ \frac{r_i^2}{\eta_0^{(i)}} + t_i^2 \sum_{j=1}^d \frac{ \beta_j^2}{\eta_j^{(i)}} \right\}}_{\Phi(\param, \vv{t}, \boldeta)}.
\end{equation}
Here, we denote $\boldeta := [\boldeta^{(1)}, \dots, \boldeta^{(n)}]^\top \in \Delta_{d+1}^n$ and $\Phi$ the current objective function. If we make the substitution $z_i = t_i^2$, we get
\begin{equation*}
    \tilde{\Phi}(\param, \vv{z}, \boldeta) = \frac{1}{n} \sum_{i=1}^n  \left\{ \frac{r_i^2}{\eta_0^{(i)}} + z_i \sum_{j=1}^d \frac{ \beta_j^2}{\eta_j^{(i)}} \right\} \quad \text{and} \quad Z = \{ z \in \mathbb{R}_+^n : \lVert \vv{z} \rVert_{p/2} \leq n^{2/p} \delta^2 \}.
\end{equation*}
Then, 
\begin{itemize}
    \item $\tilde{\Phi}(\param, \cdot, \boldeta)$ is continuous and affine over $Z$ for every fixed $(\param, \boldeta) \in \mathbb{R}^d \times \Delta_{d+1}^n$.
    \item $\tilde{\Phi}(\param, \vv{z}, \cdot)$ is continuous and convex over $\Delta_{d+1}^n$ for every fixed $(\param, \vv{z}) \in \mathbb{R}^d \times Z$.
    \item $Z$ is compact convex since $p\geq2$. 
    \item $\Delta_{d+1}^n$ is convex.
\end{itemize}
Thus, Sion's minimax theorem \citep{sion_minimax_1958, hidetoshi1988} applies\footnote{We use the version for example found in \citet[Theorem 28.12]{Lattimore_Szepesvári_2020} where either $Z$ or $\Delta_{d+1}^n$ needs to be compact, see also the original Corollary 3.3 in \citet{sion_minimax_1958}.} and we can swap supremum and infimum in \eqref{app:eq:supmin}:
\begin{equation}\label{app:eq:minsup}
     V_\delta(\param) = \inf_{\boldeta \in \Delta_{d+1}^n} \sup_{\vv{z} \in Z} \frac{1}{n} \sum_{i=1}^n  \left\{ \frac{r_i^2}{\eta_0^{(i)}} + z_i \sum_{j=1}^d \frac{ \beta_j^2}{\eta_j^{(i)}} \right\}.
\end{equation}
The supremum can now be solved in closed form. Let $u_i := \sum_{j=1}^d \beta_j^2 / \eta_j^{(i)}$ so that
\begin{equation*}
     \sup_{\vv{z} \in Z} \frac{1}{n} \sum_{i=1}^n  \left\{ \frac{r_i^2}{\eta_0^{(i)}} + z_i \sum_{j=1}^d \frac{ \beta_j^2}{\eta_j^{(i)}} \right\} 
     =  \frac{1}{n} \sum_{i=1}^n  \frac{r_i^2}{\eta_0^{(i)}} + \frac{1}{n} \sup_{\vv{z} \in Z} \vv{z}^\top \vv{u}.
\end{equation*}
Then, by letting $k(p)$ be the conjugate exponent of $p/2$, i.e.,
\[
k(p) = 
\begin{cases}
    \infty & p=2 \\
    p/(p-2) & 2 < p < \infty \\
    1 & p = \infty
\end{cases}
\qquad \text{so that}
\qquad \frac{1}{p/2} + \frac{1}{k(p)} = 1, \; \forall p \in [2, \infty],
\]
we have that
\[
\sup_{\vv{z} \in Z} \vv{z}^\top \vv{u} = n^{2/p} \delta^2 \lVert \vv{u} \rVert_{k(p)}
\]
by the dual-norm identity since $u_i \geq 0$. Thus, we can define
\begin{equation*}
    G_\delta(\param, \boldeta) := \frac{1}{n} \sum_{i=1}^n  \frac{r_i^2}{\eta_0^{(i)}} + n^{2/p - 1} \delta^2 \lVert \vv{u} \rVert_{k(p)},
\end{equation*}
and therefore,
\begin{equation}\label{app:eq:G}
    V_\delta(\param) = \inf_{\boldeta \in \Delta_{d+1}^n} G_\delta(\param, \boldeta).
\end{equation}
We note here that \(G_\delta\) is jointly convex in \((\param,\boldeta)\) since the quadratic-over-linear function $(a,b) \mapsto a^2/b$ is jointly convex for $b>0$ \cite[Chapter 3.1.5]{Boyd_Vandenberghe_2004_1} and since the norm preserves the convexity. Hence, we can alternate between minimization over $\param$ and minimization over $\boldeta$, particularly the optimal $\boldeta$ has a structure that turns the subsequent $\param$-update into a weighted ridge regression problem. The $\boldeta$-subproblem itself can be solved numerically (for example, by using CVXPY), but we can also recover its solution explicitly from the maximizer of the supremum in Theorem~\ref{th_supremum_robust_risk}.

To derive this update for $\delta > 0$ and nonzero residuals $r_i \neq 0$, fix $\param \neq 0$ and from \eqref{app:eq:th1risk} let $\vv t^\star$ maximize 
\[
\frac1n\sum_{i=1}^n(r_i+t_i \lVert \param \rVert_1  )^2
\qquad\text{over }\vv t\in T.
\]
Finding $\vv t^\star$ is the same problem as evaluating $V_\delta(\param)$ and can be solved using the scalar reduction derived in Appendix \ref{app:sec:rob_eval}. We now show how $\vv t^\star$ allows us to recover a minimizer of $G_\delta(\param,\cdot)$. From \eqref{app:eq:supmin}, recall that
$\vv t^\star$ maximizes $\Phi(\param, \vv{t}, \boldeta)$ after minimization
over $\boldeta$:
\[
    \vv t^\star \in
    \arg\max_{\vv{t} \in T}
    \inf_{\boldeta\in\Delta_{d+1}^n}
    \Phi(\param,\vv t,\boldeta).
\]
That is, \eqref{app:eq:supmin} states
\begin{equation}\label{app:eq:Phi}
    \inf_{\boldeta\in\Delta_{d+1}^n}
    \Phi(\param,\vv t^\star,\boldeta)
    = V_\delta(\param).
\end{equation}
On the other hand, $G_\delta$ was obtained by solving
the supremum in \eqref{app:eq:minsup}:
\[
    G_\delta(\param,\boldeta)
    =
    \sup_{\vv z\in Z}\tilde{\Phi}(\param,\vv z,\boldeta)
    =
    \sup_{\vv t\in T}\Phi(\param,\vv t,\boldeta)
\]
for an \emph{arbitrary} $\boldeta$. Thus, $\vv t^\star$ is not guaranteed to attain this supremum,
but its feasibility gives
\[
    G_\delta(\param,\boldeta)
    \ge \Phi(\param,\vv t^\star,\boldeta).
\]
Minimizing over $\boldeta$ on both sides give
\[
    V_\delta(\param) = \inf_{\boldeta \in \Delta_{d+1}^n} G_\delta(\param,\boldeta)
    \geq \inf_{\boldeta \in \Delta_{d+1}^n} \Phi(\param,\vv t^\star,\boldeta) = V_\delta(\param),
\]
where we have used \eqref{app:eq:G} and \eqref{app:eq:Phi}, respectively. We use this to execute the $\boldeta$-update. More precisely, for a fixed current parameter $\widetilde{\param}$ (in our algorithm), the $\boldeta$-update is thus the minimizer $\hat{\boldeta}$ of
\begin{equation}\label{app:eq:Phi_tstar}
    \Phi(\widetilde{\param}, \vv{t}^\star, \boldeta) = \frac{1}{n} \sum_{i=1}^n  \left\{ \frac{\tilde{r}_i^2}{\eta_0^{(i)}} + {(t_i^\star)}^2 \sum_{j=1}^d \frac{ \widehat{\beta}_j^2}{\eta_j^{(i)}} \right\},
\end{equation}
where $\tilde{r}_i = |\x_i^{\top}\widetilde{\param}-y_i|$. From \citet[Proposition 4]{pmlr-v258-ribeiro25a}, this minimizer $\hat{\boldeta}$ is uniquely determined and given by
\begin{equation}
\hat{\eta}_0^{(i)} := \frac{\tilde{r}_i}{\tilde{r}_i + t_i^\star \lVert \widehat{\param} \rVert_1},
\qquad
\hat{\eta}_j^{(i)}:=\frac{t_i^\star|\widehat{\beta}_j|}{\tilde{r}_i + t_i^\star \lVert \widehat{\param} \rVert_1},
\quad j=1,\ldots,d.
\end{equation}
This completes the $\boldeta$-update. Next, we want to execute the $\param$-update (since we alternate), with updated parameter denoted by $\param^+$. To obtain $\param^+$ from the $\param$-update, we plug $\hat{\boldeta}$ back into \eqref{app:eq:Phi_tstar}, which gives
\begin{align}
    \Phi(\param, \vv{t}^\star, \hat{\boldeta}) 
    &= \frac{1}{n} \sum_{i=1}^n  \left\{ r_i^2 \frac{\tilde{r}_i + t_i^\star \lVert \widetilde{\param} \rVert_1}{\tilde{r}_i} + {(t_i^\star)}^2 \sum_{j=1}^d \beta_j^2 \frac{\tilde{r}_i + t_i^\star \lVert \widetilde{\param} \rVert_1}{t_i^\star|\widetilde{\beta}_j|} \right\} \\
    &= \frac{1}{n} \sum_{i=1}^n \hat{w}_i r_i^2 + \sum_{j=1}^d \hat{\gamma}_j \beta_j^2,
\end{align}
where
\begin{equation}
    \hat{w}_i = \frac{\tilde{r}_i + t_i^\star \lVert \widetilde{\param} \rVert_1}{\tilde{r}_i} \quad \text{and} \quad \hat{\gamma}_j = \left[ \frac{1}{n} \sum_{i=1}^n t_i^\star(\tilde{r}_i + t_i^\star \lVert \widetilde{\param} \rVert_1) \right] \frac{1}{\lvert \widetilde{\beta}_j \rvert}, \quad \forall (i,j) \in [n] \times [d],
\end{equation}
using the shorthand $[n] = \{1,2,\dots,n\}$, and same for $[d]$. We here note that minimizing $G_\delta(\cdot,\hat{\boldeta})$ (and therefore obtain $\param^+$) is now a weighted ridge regression problem
with $\widehat{W}=\mathrm{diag}(\hat{w}_1,\ldots,\hat{w}_n)$ and
$\widehat{\Gamma}=\mathrm{diag}(\hat{\gamma}_1,\ldots,\hat{\gamma}_d)$.
The $\param$-update is therefore obtained by solving
\[
    (X^\top \widehat{W} X+n\widehat{\Gamma})\param^+
    =X^\top \widehat{W}y
\]
and assigning the solution $\param^+$ as our new $\widetilde{\param}$, from which we again perform an $\boldeta$-update, and so on. This alternating scheme is repeated until a stopping
criterion is satisfied. We initialize with uniform weights $\eta_j^{(i)}=1/(d+1)$.
Then $u_i=(d+1)\|\param\|_2^2$, so
$\|\vv u\|_{k(p)}=n^{1/k(p)}(d+1)\|\param\|_2^2$.
Since $2/p+1/k(p)=1$, substitution gives
\[
G_\delta(\param,\boldeta_{\mathrm{unif}})
=(d+1)\left[
\frac1n\|X\param-\y\|_2^2+\delta^2\|\param\|_2^2
\right].
\]
Thus, the initial $\param$-update solves
$(X^\top X+n\delta^2 I)\param=X^\top\y$. We summarize this algorithm by Algorithm \ref{alg:robust-ridge}, where we also add smoothing for numerical stability. The smoothing adds only terms that are constant in $\param$ and therefore leaves this initialization unchanged. Moreover, $\mathtt{ScalarReduction}$ in Algorithm \ref{alg:robust-ridge} corresponds to solving the scalar form in \Cref{lemma_robust_risk_1D_formulation}, described in Appendix \ref{app:sec:rob_eval}. We use this form since it is easier to solve it than the robust quadratic form given by \Cref{th_supremum_robust_risk}. Note that $\vv{t}^\star$ is the same for both these forms due to strong duality, see, e.g., the proof of \Cref{th_supremum_robust_risk}.

\begin{algorithm}[t]
\caption{$\eta$-trick solver}
\label{alg:robust-ridge}
\KwInput{$X, \y; \quad \delta \geq 0; \quad p\in[2,\infty]; \quad \epsilon>0; \quad \text{tolerance } \tau>0; \quad \text{iterations }M$}
\KwInitialize{solve $(X^\top X+n\delta^2 I)\param=X^\top \y \textnormal{ (uniform } \boldeta \textnormal{ init})$}

\If{$\delta=0$}{\KwRet{$\param$}}
\For{$m=1,\ldots,M$}{
    \Comment{smoothed magnitudes and worst-case transport}
    $r_i\gets\sqrt{(\x_i^\top\param-y_i)^2+\epsilon^2}$,
    \quad $b_j\gets\sqrt{\beta_j^2+\epsilon^2}$\;
    $B\gets\sum_j b_j$\;
    $\vv t^\star\gets
    \mathtt{ScalarReduction}(\vv r,B,\delta,p)$\;
    \vspace{\baselineskip}
    \Comment{compute weights and solve weighted ridge}
    $w_i\gets (r_i+t_i^\star B)/r_i$,
    \quad $\gamma_j\gets [\frac{1}{n} \sum_i t_i^\star (r_i+t_i^\star B)] / b_j$\;
    $W\gets\operatorname{diag}(\vv w)$,
    \quad $\Gamma\gets\operatorname{diag}(\boldsymbol\gamma)$\;
    \textit{solve} $(X^\top W X+n\Gamma)\param^+=X^\top W \y$\;
    \vspace{\baselineskip}
    \Comment{update and check stopping criterion}
    $\text{change} \gets \|\param^+ - \param\|_2 / (1+\|\param\|_2) $\;
    $\param\gets\param^+$\;
    \If{$ \textnormal{change} \leq\tau$}{\textbf{break}\;}
}
\KwRet{$\param$}
\end{algorithm}

\subsection{Fast robust risk evaluations}\label{app:sec:rob_eval}
To evaluate only the robust risk $V_{\delta}(\param)$ (rather than minimize it), we can compute it faster by using \Cref{lemma_robust_risk_1D_formulation}. First, the special cases $\delta=0$, $p=2$, and $p=\infty$ have closed-form expressions (described above for minimizing the robust risk) and can be readily computed. The same holds for $\param = 0$. We may therefore assume $\delta >0$ and $2<p<\infty$ and $\param \neq 0$. We compute the robust risk for this general case using \Cref{lemma_robust_risk_1D_formulation}. Note first that $\lambda = 0$ is suboptimal in this case since the inner supremum of \eqref{eq_lemma_robust_risk_1D_formulation} becomes infinite. Hence, we may assume $\lambda>0$. Given $\lambda > 0$, we can easily compute the inner supremum $\sup_{t_i \ge 0} \Big\{ \big( \lvert r_i \rvert + t_i \lVert {\param} \rVert_* \big)^2 - \lambda t_i^p \Big\}$ of \eqref{eq_lemma_robust_risk_1D_formulation}. Indeed, let $f_i$ denote the objective function of this supremum:
\[
f_i(t_i) := \big( \lvert r_i \rvert + t_i \lVert {\param} \rVert_* \big)^2 - \lambda t_i^p = t_i^2 \lVert \param \rVert_*^2 + 2 \lvert r_i \rvert t_i \lVert \param \rVert_* + r_i^2 - \lambda t_i^p.
\]
Then,
\[
  f_i'(t_i) = 2t_i \lVert \param \rVert_*^2 + 2 \lvert r_i \rvert \lVert \param \rVert_* - \lambda p t_i^{p-1},
  \qquad
  f_i''(t_i) = 2 \lVert \param \rVert_*^2 - \lambda p (p-1) t_i^{p-2}.
\]
Given $2 <p <\infty$, we have that $f_i''$ is strictly decreasing with $f_i''(0) > 0$, so $f_i'$ first increases and then decreases.
Since $f_i'(0) = 2 \lvert r_i \rvert \lVert \param \rVert_* \ge 0$ and $f_i'(t_i) \to -\infty$, it follows
that $f_i'$ changes sign at most once. Hence $f_i$ is unimodal on $[0,\infty)$, with maximizer either at
a stationary point $t^\star_i$  (unique if it exists) or at the boundary $t_i = 0$. The inner supremum $\sup_{t_i\geq0} f_i(t)$ in \eqref{eq_lemma_robust_risk_1D_formulation} is therefore straightforward to solve. Concretely, we minimize the negated objective using SciPy \citep{virtanen_scipy_2020}, where we first apply the method \texttt{scipy.optimize.elementwise.bracket\_minimum} to locate a small interval on $[0,\infty)$ where the minimum is attained, and then use \texttt{scipy.optimize.elementwise.find\_minimum} to compute the minimum exactly. For further details, see the code implementation.

For the outer problem in \eqref{eq_lemma_robust_risk_1D_formulation}, note that $f_i$ is affine in $\lambda$, and hence $\sup_{t_i\geq0} f_i(t)$ is convex in $\lambda$. Adding $n\delta^p \lambda$ preserves this convexity, so the outer minimization over $\lambda$ is a minimization of a convex function. We solve this outer problem using \texttt{scipy.optimize.minimize\_scalar}, recomputing the inner maximizers for each candidate $\lambda$, to obtain the robust risk $V_{\delta}(\param)$ to numerical accuracy. For further details, see the code implementation.


\section{Numerical experiments: Additional details}
In this section, we provide additional details of the numerical simulations in \Cref{numerical_experiments}, starting with the setup details for the fast rate and slow rate, followed by details of the small and large $\delta>0$ simulations.

\subsection{Fast rate}
In this section, we present additional details for the fast rate simulations, showing that $X$ satisfies the RE condition and providing some additional simulations.

\textbf{$\boldsymbol{X}$ satisfies the RE condition.} We here show that $X$ with entries sampled independently and uniformly in $[-1,1]$ is in $\mathrm{RE}(s,\ell)$ for any $s$ and $\ell$ with $\kappa=1/\sqrt{6}$, which holds with arbitrarily high probability for sufficiently large $n$. The argument is standard and based on Rayleigh–Ritz variational characterization (see, e.g., \cite[Equation (6.3)]{wainwright_high-dimensional_2019}) to obtain a sufficient criterion for satisfying the RE condition. To this end, pick any $s$ and $\ell$, consider an index set $S\subseteq\{1,\ldots,d\}$, and define the cone
$$
\mathcal C_S=\left\{\vv{v}\in\mathbb R^d:
\|\vv{v}_{S^c}\|_1\leq \ell \|\vv{v}_S\|_1\right\}.
$$
Moreover, let $G=X^\top X/n$ and $\lambda_{\min}(G)$ be the smallest eigenvalue of $G$. Using standard results from random matrix theory, we have that $\lambda_{\min}(G)$ is lower bounded by $1/3-\epsilon$ with arbitrarily high probability for any given $\epsilon>0$ and sufficiently large $n$.\footnote{This is seen as follows. Let $\Sigma := \E [\vv{v} \vv{v}^T]$ where $\vv{v} \in \R^d$ has independent entries uniformly sampled in $[-1,1]$. Then, $\Sigma = I/3$ due to independence and variance $\E[v_i^2] = 1/3$. Moreover, by Equation (6.7) in \cite{wainwright_high-dimensional_2019} (Weyl's theorem), we have that $\lambda_{\min}(\Sigma)-\lambda_{\min}(G) \leq \|G-\Sigma\|_2$ and so $\lambda_{\min}(G) \geq \lambda_{\min}(\Sigma) -\|G-\Sigma\|_2 = 1/3-\|G-\Sigma\|_2$. By Theorem 6.5 in \cite{wainwright_high-dimensional_2019} for some universal constants $(c_i)_{i=1}^3$, we have that $\|G-\Sigma\|_2 < c_1  [d/n+\sqrt{d/n}] + \epsilon$ holds for any $\epsilon \geq 0$ with probability $\geq 1-c_2e^{-c_3n \min(\epsilon,\epsilon^2)}$. That is, $\lambda_{\min}(G) \geq 1/3-(c_1[d/n+\sqrt{d/n}] + \epsilon)$ with probability $\geq 1-c_2e^{-c_3n \min(\epsilon,\epsilon^2)}$, which yields the result.} For convenience, pick $\epsilon=1/6$ so that
$$
\kappa = \frac{1}{\sqrt{6}} \leq \sqrt{\max\{\lambda_{\min}(G),0\}}
$$
holds with arbitrarily high probability for large enough $n$. Then, the Rayleigh–Ritz variational characterization \cite[Equation (6.3)]{wainwright_high-dimensional_2019} yields for every $\vv{v}\in\mathbb R^d$:
$$
\frac{\|X\vv{v}\|_2^2}{n}
=\vv{v}^\top \frac{1}{n} X^\top X \vv{v} = \vv{v}^\top G \vv{v}
\geq\lambda_{\min}(G)\|\vv{v}\|_2^2
\geq \kappa^2\|\vv{v}\|_2^2,
$$
which holds with arbitrarily high probability for large enough $n$. Since this is valid for every $\vv{v}\in\mathbb R^d$, it is also valid for every $\vv{v}$ in $\mathcal C_S$. In other words, we have that $X$ is in $\mathrm{RE}(s,\ell)$ for $\kappa=1/\sqrt{6}$ with arbitrarily high probability (for large enough $n$), completing the derivation.

\begin{figure}[t]
\centering
\begin{subfigure}{.47\textwidth}
    \centering
    \includegraphics[width=1\linewidth]{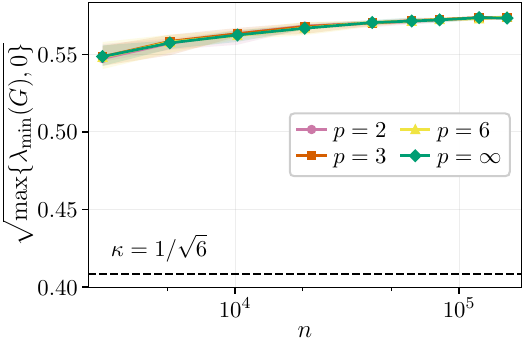}
\end{subfigure}
\quad
\begin{subfigure}{.47\textwidth}
    \centering
    \includegraphics[width=1\linewidth]{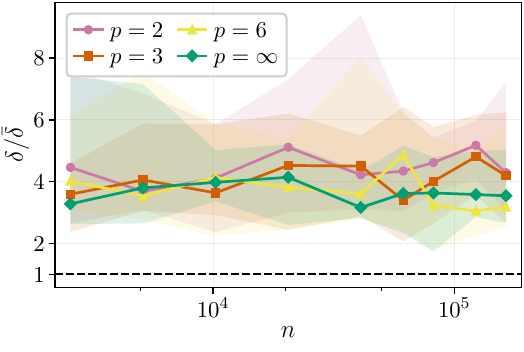}
\end{subfigure}
\caption{Additional plots for the fast rate simulations in \Cref{numerical_experiments} showing that the conditions in \Cref{thm_fast_rate_main_theorem_v2} hold. Concretely, the left plot shows that the RE condition is satisfied for $\kappa = 1/\sqrt{6}$ (for all $n$) since it indeed lower-bounds the minimum eigenvalue $\lambda_{\min}(G)$ of $G=X^TX/n$ (see \eqref{eq_RE_condition_suff_cond} and the corresponding derivation for details). Moreover, the right plot shows that $\delta \geq \bar{\delta}$ (which it should be with high probability). Therefore, the conditions in \Cref{thm_fast_rate_main_theorem_v2} hold. Both plots show median over 10 runs with shaded bands showing minimum and maximum across the repetitions.}
\label{fig_fast_rate_conditions_simulated}
\end{figure}

\begin{figure}[t]
\centering
\includegraphics[width=0.5\textwidth]{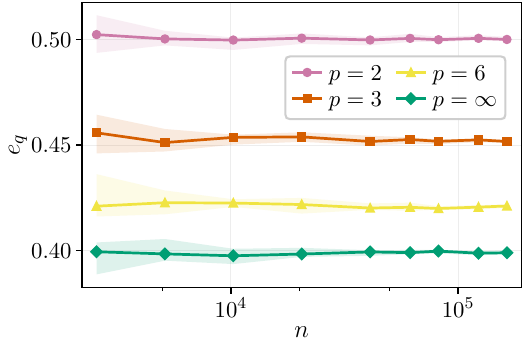}
\caption{Convergence for $e_q := \frac{\|\vv{\varepsilon}\|_q}{n^{1/q}}$ in the fast rate simulations from \Cref{numerical_experiments}. We see that all $e_q$ quickly stabilizes toward a constant as $n$ increases. Since $e_q$ converges to a constant, so do $B$ and $C$ in \Cref{thm_fast_rate_main_theorem_v2}. The plot shows the median over 10 repeats with shaded bands being the 10th-90th percentile range.}
\label{fig_paper_eq}
\end{figure}

\textbf{Additional plots.}
We also provide plots that support the claim that the conditions in \Cref{thm_fast_rate_main_theorem_v2} hold for the fast rate simulations in \Cref{numerical_experiments}, seen in Figure \ref{fig_fast_rate_conditions_simulated}.  Indeed, we see that both the RE condition and that $\delta \geq \bar{\delta}$ are fulfilled. Moreover, we also plot $e_q := \frac{\|\vv{\varepsilon}\|_q}{n^{1/q}}$ in Figure \ref{fig_paper_eq}. We see that as $n$ increases, $e_q$ converges to a constant as predicted. Moreover, since both $B$ and $C$ in \Cref{thm_fast_rate_main_theorem_v2} are functions of $e_q$ that converge to constants if $e_q$ converges to a constant, we have that the maximum in \eqref{eq_fast_rate_main_theorem_v2_bound} becomes a constant. Thus, we indeed get the rate $O(\delta^2)$ which equals $O(n^{-1})$ since $\delta$ decreases as $1/\sqrt{n}$. 

\subsection{Slow rate}
We continue with the slow rate. More precisely, we here provide the details on how $X$ is set and why the RE condition fails for it. First of all, $\param^* = [2, -1]^\top$ is $2$-sparse, so we need an RE condition of the form $X \in \mathrm{RE}(2,\ell)$ for some fixed $\ell$ and $\kappa>0$. We construct $X$ so that there is no such $\ell$ and $\kappa>0$ such that $X \in \mathrm{RE}(2,\ell)$ holds for all $n$. This construction closely resembles \citet{van2017some}; see also references therein. Importantly, we want to construct $X$ such that the entries are \emph{correlated} in a way that breaks the RE condition. We achieve this by setting the first column $\vv{X}_1$ and the second column $\vv{X}_2$ of $X$ to the following:
\begin{equation*}
    \vv{X}_1 =\sqrt{\frac{1+\rho_n}{2}}\vv{u}+\sqrt{\frac{1-\rho_n}{2}} \vv{w}, \quad \vv{X}_2 = \sqrt{\frac{1+\rho_n}{2}} \vv{u}-\sqrt{\frac{1-\rho_n}{2}} \vv{w},
\end{equation*}
where $\vv{u}$ and $\vv{w}$ are any vectors in $\{-1,1\}^n$ such that $\vv{u}$ is orthogonal to $\vv{w}$ for even $n$ (and for odd $n$ set them so that they are ``almost'' orthogonal $\vv{w}^{\top} \vv{u} = \pm 1$), and $\rho_n = 1-c_0/\sqrt{n}$ for some constant $c_0>0$. It is easy to see that the entries of $X$ are uniformly bounded with $M=\sqrt{2}$. Crucially, this construction is made so that for every even $n$,
\begin{equation*}
    G = \frac{1}{n} X^TX =
    \begin{pmatrix}
    1 & \rho_n \\
    \rho_n & 1
    \end{pmatrix}
\end{equation*}
becomes close to a singular matrix. Indeed, note that $\lambda_{\min}(G) = 1-\rho_n = c_0/\sqrt{n}$ is the smallest eigenvalue. Thus, the Rayleigh–Ritz variational characterization \cite[Equation (6.3)]{wainwright_high-dimensional_2019} implies that for any even $n$, there exists a vector $\vv{v} \in \R^2$ such that:
\begin{equation}\label{eq_RE_condition_suff_cond}
\frac{\|X\vv{v}\|_2^2}{n}
=\vv{v}^\top \frac{1}{n} X^\top X \vv{v} = \vv{v}^\top G \vv{v}
=\lambda_{\min}(G)\|\vv{v}\|_2^2 = \frac{c_0}{\sqrt{n}} \|\vv{v}\|_2^2.
\end{equation}
Thus, since $\vv{v}$ is trivially in the cone $\mathcal{C}_{S} = \{\vv{v}\in\mathbb R^2: \|\vv{v}_{S^c}\|_1\leq \ell \|v_S\|_1\} = \R^2$ for $S=\{1,2\}$ and $\lambda_{\min}(G) = \frac{c_0}{\sqrt{n}} \rightarrow 0$ as $n \rightarrow \infty$, we conclude that no $\kappa>0$ exists such that $X \in \mathrm{RE}(2,\ell)$ for large enough $n$, proving the claim. 

Concretely, in the experiments, we set $c_0 =3$ and set $\vv{u}$ and $\vv{w}$ to $n/4$ stacked copies of the vectors $[1,1,-1,-1]^{\top}$ and $[1,-1,1,-1]^T$, respectively (with $n$ a multiple of $4$) and plot only such $n$, namely, $n =4096, 6144, 8192, 10240$. The reason for this is twofold. First, it is easy to implement stacked copies. Second, there is a convenient analytic interpretation. Indeed, since $\vv{u}$ and $\vv{w}$ always have zero mean (their entries sum to zero), the parameter $\rho_n$ equals the standard empirical correlation between the columns $\vv{X}_1$ and $\vv{X}_2$. To see this, consider the sample correlation coefficient (see, e.g., \citet[Equation (3.18)]{james2021introduction}) between $\vv{X}_1$ and $\vv{X}_2$ given~by
\begin{equation*}
r = \frac{\sum_{i=1}^n (X_1[i]-\textrm{mean}(\vv{X}_1)) (X_2[i]-\textrm{mean}(\vv{X}_2))}{\|\vv{X}_1\|_2 \|\vv{X}_2\|_2}
\end{equation*}
where $X_1[i]$ is the $i$-th entry of $\vv{X}_1$ (and similarly for $\vv{X}_2$), and $\textrm{mean}(\vv{X}_i)$ is the mean over all entries of $\vv{X}_i$. In our case, $\textrm{mean}(\vv{X}_i) = 0$ by construction, and a simple computation yields $r = \rho_n$ as desired. Therefore, another interpretation of what happens is as follows. As $n \rightarrow \infty$, we have that $r = \rho_n \rightarrow 1$; in other words, $\vv{X}_1$ and $\vv{X}_2$ become maximally correlated. This correlation breaks down the RE condition and hinders the fast rate $O(n^{-1})$. As a result, we get only the slow rate $O(n^{-1/2})$.


\subsection{Small and large \texorpdfstring{$\delta$}{δ}}
Finally, we describe the setup for the small and large $\delta>0$ simulations. In both cases, we let $y_i=\x_i^{\top} \param^* + \varepsilon_i$ with $n=30$ and $d=60$, true parameter iid sampled as $\param^*_i \sim N(0,1)$, noise $\vv{\varepsilon} \sim N(0,\sigma^2I)$ with $\sigma=0.5$, and entries $x_{ij} \in X$ iid sampled as $x_{ij} \sim N(0,1)$. Note that $d>n$, that is, we are in the overparametrized regime. Moreover, since we sample $x_{ij} \sim N(0,1)$, we have that $X$ is of full row rank almost surely. However, as an extra precaution, we also check that $X$ is indeed of full rank to be sure that the conditions in \Cref{th_when_beta_minimum_norm} hold; For all simulations, $X$ was indeed of full row rank. 

For the small $\delta$ case, we plot the average error $\|X \widehat{\param}-\y\|_2^2/n$ as a function of $\delta$ as seen in Figure \ref{fig_paper_extreme_delta} (upper) for $p \in \{2,3,6,\infty\}$. Moreover, we also compute the threshold $\delta_S$ for $p=2$ (pink dashed) and $\delta_S$ for $2 <p\leq \infty$ (gray dashed, identical for all $2 <p\leq \infty$), as given by \Cref{th_when_beta_minimum_norm}, where $\min_{\vv{\alpha}  \in Q} \|\vv{\alpha}\|_\infty$ and $\min_{\vv{\alpha}  \in Q} \|\vv{\alpha}\|_2$ in the expression for $\delta_S$ are computed using the convex program as detailed by \Cref{remark_computing_delta_S}. For each $p \in \{2,3,6,\infty\}$, we observe that the average error $\|X \widehat{\param}-\y\|_2^2/n$ decreases rapidly for $\delta \leq \delta_S$, reaching values in the numerical precision regime, indicating that interpolation occurs. Furthermore, this interpolation transition happens simultaneously for all $2 <p\leq \infty$ at their $\delta_S$, while it occurs sooner for the $p=2$ case since its threshold $\delta_S$ is larger. All these observations are in line with \Cref{th_when_beta_minimum_norm}. 

For the large $\delta$ case, we plot $\|\widehat{\param}\|_{\infty}$ as a function of $\delta$, as seen in Figure \ref{fig_paper_extreme_delta} (lower) for $p \in \{2,3,5,\infty\}$ with threshold $\delta_L$ for each $p$ (dashed lines with matching color). We see that $\|\widehat{\param}\|_{\infty}$ decreases in the region $\delta \leq \delta_L$ and becomes zero for $\delta \geq \delta_L$ as predicted by \Cref{th_when_beta_zero}. Note also that the threshold $\delta_L$ is different for each $p$ (as opposed to $\delta_S$ in the small $\delta$ case).

\subsection{Comparisons between the saddle-point solver and the \texorpdfstring{$\eta$}{η}-trick solver}\label{app:solver_comp}
Finally, we compare the saddle-point solver and the $\eta$-trick solver as introduced in \Cref{efficient_solvers} with details in  Appendix~\ref{efficient_solvers_appendix}.

\begin{figure}[t]
\centering
\begin{subfigure}{.47\textwidth}
    \centering
    \includegraphics[width=1\textwidth]{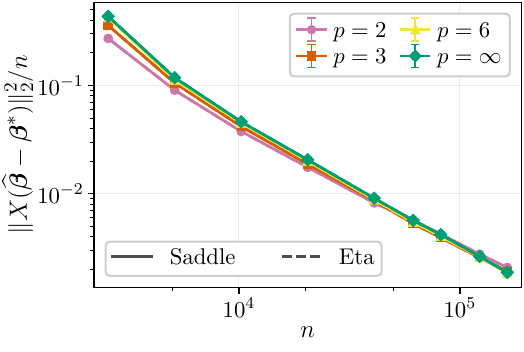}
\end{subfigure}
\quad
\centering
\begin{subfigure}{.47\textwidth}
    \centering
    \includegraphics[width=1\linewidth]{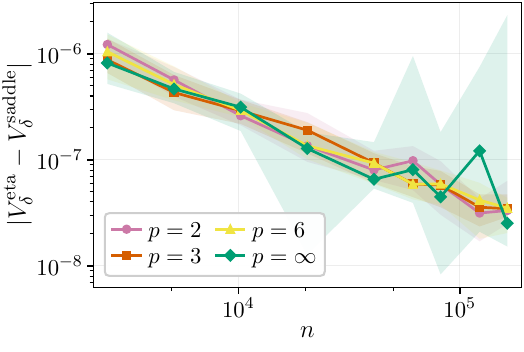}
\end{subfigure}
\caption{We plot the error of the saddle-point solver and the $\eta$-trick solver for the fast rate setup (left). We run 10 repeats and plot the mean with $\pm1$ standard deviation bands. We also plot the robust risk for the obtained estimate for both solvers (right). Here we plot the median over 10 repeats with 10-90 percentiles. We note that both output near-identical results.}
\label{fig_solvers_correct}
\end{figure}

\textbf{Correctness.} First, we run both solvers on the fast rate setup as detailed in \Cref{numerical_experiments}, with results shown in Figure \ref{fig_solvers_correct}. We see that both solvers achieve near-identical results (left plot). Indeed, the difference in robust risk error is $10^{-6}$ to $10^{-8}$ (right plot). This provide numerical support that both solvers are correctly solving the given optimization problem.

\begin{figure}[t]
\centering
\begin{subfigure}{.47\textwidth}
    \centering
    \includegraphics[width=1\linewidth]{figures/runtime.pdf}
\end{subfigure}
\quad
\begin{subfigure}{.47\textwidth}
    \centering
    \includegraphics[width=1\linewidth]{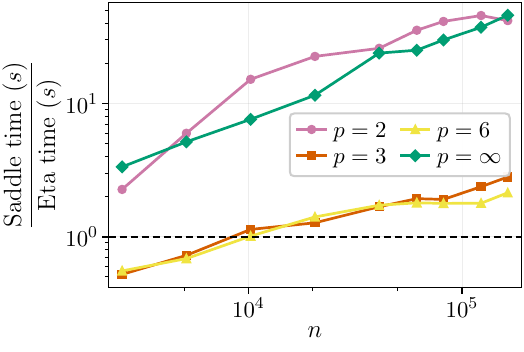}
\end{subfigure}
\caption{Computation time (left) and time ratio (right) of the saddle-point solver and the $\eta$-trick solver for the fast rate setup. We see that $\eta$-trick solver is, in general, faster, particularly for large $n$ and $p \in \{2,\infty\}$. Both methods are slower for $p \in \{3,6\}$, since the robust risk then lacks a closed-form~expression. Both plots show mean over 10 runs.}
\label{fig_solvers_speed}
\end{figure}

\textbf{Speed.} Next we compare the speed of the two solvers and how their computation times scale with $n$, using the fast rate setup. The computation time is shown in Figure \ref{fig_solvers_speed} (left). Here, we see that the $\eta$-trick solver is faster in general, being slightly faster for $p \in \{3,6\}$ (especially for large $n$) and several orders of magnitude faster for $p \in \{2,\infty\}$. We can also see this more clearly by looking at their time ratio, illustrated in Figure \ref{fig_solvers_speed} (right), where the region above the dashed line indicates that the $\eta$-trick solver is faster. The reason for the improved speed could be that the saddle-point formulation leads to slower convergence in general or that the DSP implementation is slower in particular; future work aims to find the underlying reason for this difference. Finally, we also note in Figure \ref{fig_solvers_speed} that it takes considerably more time to solve for $p \in \{3,6\}$ since the robust risk lacks a closed-form expression. Future work should look into other variations to improve the speed. In particular, using the scalar optimization form in \Cref{lemma_robust_risk_1D_formulation} could yield faster computations.

\end{document}